\definecolor{darkblue}{rgb}{0.0, 0.0, 0.55}
\definecolor{myblue}{rgb}{0,0.45,0.74}
\definecolor{myred}{rgb}{0.85,0.33,0.1}
\newtheorem{theorem}{Theorem}[section]
\newtheorem{lemma}[theorem]{Lemma}
\newtheorem{corollary}[theorem]{Corollary}
\newtheorem{definition}[theorem]{Definition}
\newtheorem{remark}[theorem]{Remark}
\newtheorem{assumption}{Assumption}
\providecommand{\norm}[1]{\left\lVert#1\right\rVert}
\providecommand{\R}{\mathbb{R}} %
\providecommand{\E}{{\mathbb E}}
\providecommand{\E}[1]{{\mathbb E}\left.#1\right. }        %
\providecommand{\Eb}[1]{{\mathbb E}\left[#1\right] }       %
\providecommand{\derive}[2]{\frac{{\partial}{#1}}{\partial #2} }  %
\DeclareMathOperator*{\argmin}{arg\,min}
\DeclareMathOperator*{\argmax}{arg\,max}
\providecommand{\0}{\mathbf{0}}
\providecommand{\1}{\mathbf{1}}
\renewcommand{\gg}{\mathbf{g}}
\providecommand{\pp}{\mathbf{p}}
\providecommand{\qq}{\mathbf{q}}
\providecommand{\vv}{\mathbf{v}}
\providecommand{\xx}{\mathbf{x}}
\providecommand{\yy}{\mathbf{y}}
\providecommand{\zz}{\mathbf{z}}
\providecommand{\mA}{\mathbf{A}}
\providecommand{\mB}{\mathbf{B}}
\providecommand{\mD}{\mathbf{D}}
\providecommand{\mE}{\mathbf{E}}
\providecommand{\mF}{\mathbf{F}}
\providecommand{\mG}{\mathbf{G}}
\providecommand{\mH}{\mathbf{H}}
\providecommand{\mI}{\mathbf{I}}
\providecommand{\mP}{\mathbf{P}}
\providecommand{\mQ}{\mathbf{Q}}
\providecommand{\mR}{\mathbf{R}}
\providecommand{\mS}{\mathbf{S}}
\providecommand{\mV}{\mathbf{V}}
\providecommand{\mW}{\mathbf{W}}
\providecommand{\mX}{\mathbf{X}}
\providecommand{\mY}{\mathbf{Y}}
\providecommand{\mZ}{\mathbf{Z}}
\providecommand{\mTheta}{\boldsymbol{\Theta}}
\providecommand{\mOmega}{\boldsymbol{\Omega}}
\providecommand{\mphi}{\boldsymbol{\phi}}
\providecommand{\mnu}{\boldsymbol{\nu}}
\providecommand{\mtheta}{\boldsymbol{\theta}}
\providecommand{\cA}{\mathcal{A}}
\providecommand{\cB}{\mathcal{B}}
\providecommand{\cC}{\mathcal{C}}
\providecommand{\cD}{\mathcal{D}}
\providecommand{\cE}{\mathcal{E}}
\providecommand{\cF}{\mathcal{F}}
\providecommand{\cL}{\mathcal{L}}
\providecommand{\cN}{\mathcal{N}}
\providecommand{\cP}{\mathcal{P}}
\providecommand{\cR}{\mathcal{R}}
\providecommand{\cS}{\mathcal{S}}
\providecommand{\cT}{\mathcal{T}}
\newenvironment{talign}
{\align}
{\endalign}
\newenvironment{talign*}
{\csname align*\endcsname}
{\endalign}
\newcommand{\mycaptionof}[2]{\captionof{#1}{#2}}
\newcommand*{\algrule}[1][\algorithmicindent]{\makebox[#1][l]{\hspace*{.5em}\thealgruleextra\vrule height \thealgruleheight depth \thealgruledepth}}%
\newcommand*{\thealgruleextra}{}
\newcommand*{\thealgruleheight}{.75\baselineskip}
\newcommand*{\thealgruledepth}{.25\baselineskip}
\def\ALG@printindent{%
	\ifnum \theALG@nested>0
	\ifx\ALG@text\ALG@x@notext
	\else
		\unskip
		\addvspace{-1pt}
		\ALG@printindent@tempcnta=1
		\loop
		\algrule[\csname ALG@ind@\the\ALG@printindent@tempcnta\endcsname]%
		\advance \ALG@printindent@tempcnta 1
		\ifnum \ALG@printindent@tempcnta<\numexpr\theALG@nested+1\relax
			\repeat
		\fi
	\fi
}%
\patchcmd{\ALG@doentity}{\noindent\hskip\ALG@tlm}{\ALG@printindent}{}{\errmessage{failed to patch}}
\newbox\statebox
\newcommand{\myState}[1]{%
	\setbox\statebox=\vbox{#1}%
	\edef\thealgruleheight{\dimexpr \the\ht\statebox+1pt\relax}%
	\edef\thealgruledepth{\dimexpr \the\dp\statebox+1pt\relax}%
	\ifdim\thealgruleheight<.75\baselineskip
		\def\thealgruleheight{\dimexpr .75\baselineskip+1pt\relax}%
	\fi
	\ifdim\thealgruledepth<.25\baselineskip
		\def\thealgruledepth{\dimexpr .25\baselineskip+1pt\relax}%
	\fi
	\State #1%
	\def\thealgruleheight{\dimexpr .75\baselineskip+1pt\relax}%
	\def\thealgruledepth{\dimexpr .25\baselineskip+1pt\relax}%
}
\newcommand{\cpA}{\textrm{ASCP}\xspace}
\newcommand{\cpB}{\textrm{CSCP}\xspace}
\newcommand{\tierB}{Cluster Weights Calculation\xspace}
\newcommand{\tierA}{Cluster Formulations\xspace}
\newcommand{\algfednew}{\textsc{HCFL}$^{+}$\xspace} 
\newcommand{\algframework}{\textsc{HCFL}\xspace}
\newcommand{\Rmnum}[1]{\uppercase\expandafter{\romannumeral #1}}
\title{Enhancing Clustered Federated Learning: Integration of Strategies and Improved Methodologies}
\author{Yongxin Guo$^{1}${\quad\,}Xiaoying Tang$^{1,2,3,}$\thanks{Corresponding author.}{\quad\,}Tao Lin$^{4,5}$\\
$^1$School of Science and Engineering, The Chinese University of Hong Kong, Shenzhen 518172, China\\
$^2$Shenzhen Institute of Artificial Intelligence and Robotics for Society (AIRS), Shenzhen, China\\
$^3$Guangdong Provincial Key Laboratory of Future Networks of Intelligence, Shenzhen, China\\
$^4$School of Engineering, Westlake University\\
$^5$Research Center for Industries of the Future, Westlake University
}
\begin{document}

\iclrfinalcopy

\maketitle

\begin{abstract}
  Federated Learning (FL) is an evolving distributed machine learning approach that safeguards client privacy by keeping data on edge devices. However, the variation in data among clients poses challenges in training models that excel across all local distributions. Recent studies suggest clustering as a solution to address client heterogeneity in FL by grouping clients with distribution shifts into distinct clusters. Nonetheless, the diverse learning frameworks used in current clustered FL methods create difficulties in integrating these methods, leveraging their advantages, and making further enhancements.
  To this end, this paper conducts a thorough examination of existing clustered FL methods and introduces a four-tier framework, named \algframework, to encompass and extend the existing approaches. Utilizing the \algframework, we identify persistent challenges associated with current clustering methods in each tier and propose an enhanced clustering method called \algfednew to overcome these challenges. Through extensive numerical evaluations, we demonstrate the effectiveness of our clustering framework and the enhanced components. Our code is available at \url{https://github.com/LINs-lab/HCFL}.
\end{abstract}

\section{Introduction}
Federated Learning (FL) is a privacy-focused distributed machine learning approach. In FL, the server shares the model with clients for local training, and the clients send parameter updates back to the server. The clients will not share their raw data with servers, ensuring privacy. However, the non-iid client data distribution leads to significant performance drops for FL algorithms~\citep{mcmahan2016communication,li2018federated,karimireddy2020scaffold,karimireddyscaffold2019}.
To address data heterogeneity, traditional FL focuses on training a single global model that performs well across all local distributions~\citep{li2021model, li2018federated, tang2022virtual, guo2022fedaug}. However, relying solely on a global model may not adequately handle the heterogeneous client distributions. As a remedy, clustered FL methods have been proposed to group clients into different clusters based on their local distributions~\footnote{In this study, we address the issue of supervised clustered FL, which may differ from the unsupervised clustered FL examined by ~\citet{ding2023horizontal,qiao2024federated}.}. Numerous studies have demonstrated the superiority of clustered FL methods over single-model FL approaches~\citep{long2023multi, sattler2020byzantine, ghosh2020efficient, marfoq2021federated, guo2023fedconceptem}.
\looseness=-1


\begin{figure*}[!t]
    \centering
    \includegraphics[width=.9\textwidth]{./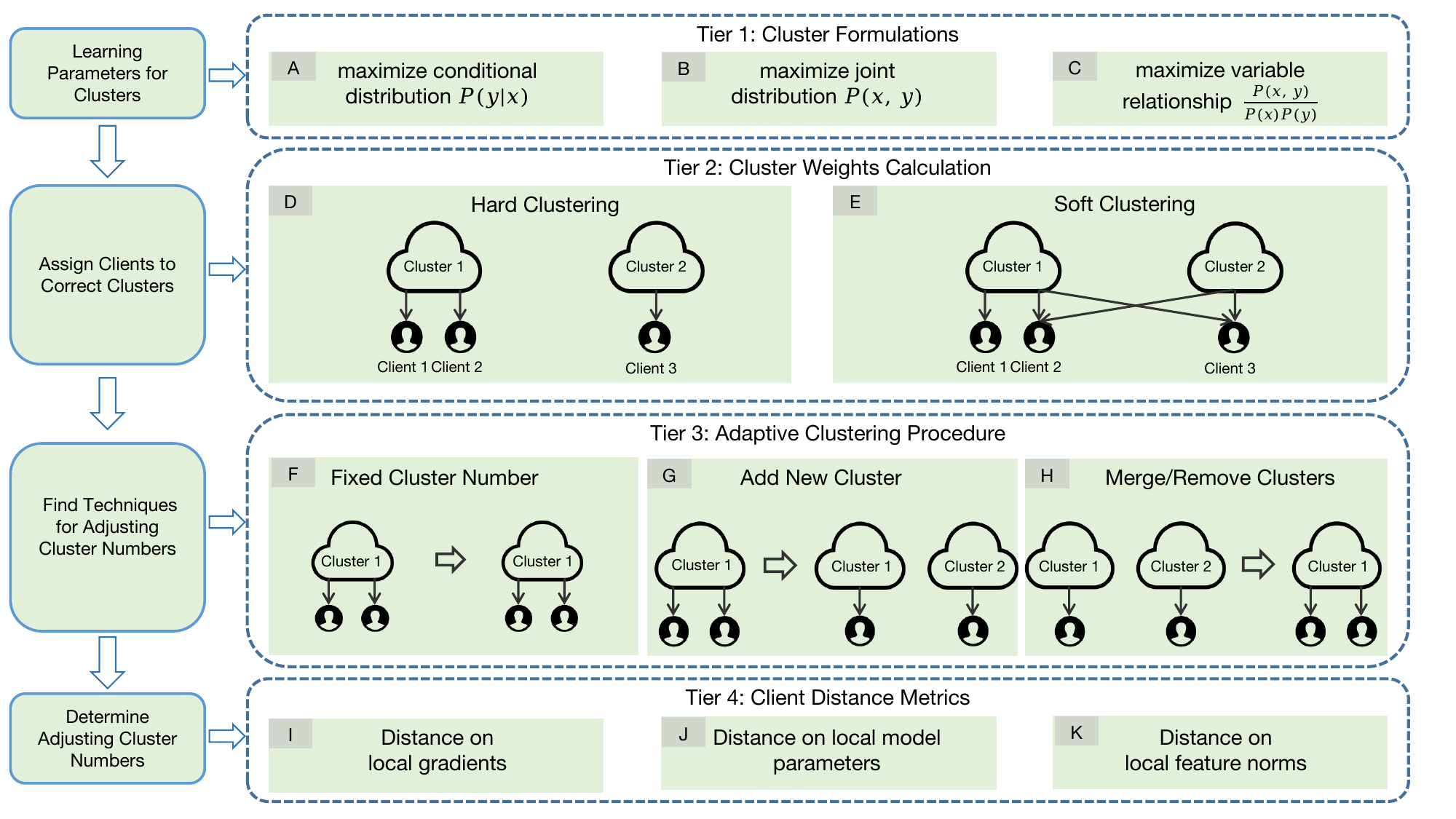}
    \caption{\small
        \textbf{Overview of the \algframework.}
        The \algframework encompasses the existing clustered FL algorithms through the design of four tiers,
        including \textit{cluster formulations}, which maximize conditional distribution, joint distribution, or variable relationships;
        \textit{cluster weights calculation}, including soft clustering and hard clustering; \textit{adaptive clustering procedure}, including using a predefined number of clusters, automatically adding new clusters, or merge and remove existing clusters; \textit{client distance metrics}, including using distance on clients' local gradients, clients' local model parameters, or clients' local feature norms. The four tiers collaborate to form a comprehensive clustered FL learning process, as shown in the left part of the figure. For instance, CFL can be described by the A, D, G, and J, while A, E, and F cover FedEM.\looseness=-1
    }
    \vspace{-2em}
    \label{fig:diverse-learning-system}
\end{figure*}

\textbf{Diverse learning frameworks pose challenges on enhancing the clustered FL.}
Despite the success of current clustered FL methods, the use of diverse learning frameworks poses challenges in integrating different algorithms, gathering their advantages, and achieving further improvements.
For instance, FedEM~\citep{marfoq2021federated} excels in addressing complex mixture distribution scenarios and performs admirably on challenging tasks. However, it necessitates a predefined number of clusters, constraining its practicality.
In contrast, adaptive clustering techniques such as CFL~\citep{sattler2020byzantine} can autonomously determine the number of clusters. Nonetheless, CFL cannot be seamlessly integrated with soft clustering methods like FedEM, thereby limiting its effectiveness in handling complex mixture distribution tasks.


\textbf{Consolidating existing methods as a solution.}
To tackle these challenges, we believe there is a need to develop a holistic learning framework for supervised clustered FL methods, allowing us to seamlessly combine their advantages.
In this paper, we introduce \algframework, a holistic clustered FL algorithm framework incorporating (1) a unified clustering objective function that handles both soft and hard clustering and (2) a unified, four-tier clustering procedure paradigm~\footnote{This study focuses on data heterogeneity. While other techniques may enhance privacy and communication efficiency, they are not directly related to our approach.} (as shown in Figure~\ref{fig:diverse-learning-system}).

The \algframework framework enables the flexible combination of existing techniques within each tier, unlocking new benefits beyond the mere recovery of traditional methods. For example, FedRC~\cite{guo2023fedconceptem} demonstrates strong generalization performance but cannot automatically determine the number of clusters, whereas CFL~\cite{sattler2020clustered} excels at determining the number of clusters but lacks generalization. The \algframework framework allows for the integration of the strengths of both FedRC and CFL, achieving both superior generalization and personalization performance (see Table~\ref{tab:performance-beta-02}). Additionally, enabling both cluster removal and addition yields comparable—or even better—performance than baseline algorithms, with a significantly reduced number of clusters (see Table~\ref{tab:performance-beta-02}).

\textbf{Enhancing clustered FL methods by improved methodologies.}
In light of the \algframework, we have identified the remaining challenges within each tier that were previously overlooked by existing clustered FL, as illustrated in Figure~\ref{fig:challenges} in Section~\ref{sec:remain-challenges}. We then introduce \algfednew, an enhanced algorithm designed to tackle these remaining challenges.
Numerical results confirm that \algfednew effectively extends existing methods, achieving a superior balance between personalization and generalization while delivering strong performance.
We summarize the contribution of this paper as follows:


\begin{itemize}[leftmargin=12pt,nosep]
    \item We introduce \algframework, a holistic framework for clustered FL that encompass the existing methods. The \algframework represents a feasible approach in achieving the integration of benefits from existing methods by adjusting the techniques at each tier.\looseness=-1
    \item We identify four remaining challenges within each tier of \algframework, and introduce an improved algorithm called \algfednew to address these challenges.
    \item Extensive experiments on different datasets (CIFAR10, CIFAR100, and Tiny-Imagenet) and various architectures (MobileNet-V2 and ResNet18) demonstrate the effectiveness of our framework and the improved components of \algfednew.\looseness=-1
\end{itemize}

\section{Related Works}

In the field of Federated Learning, FedAvg serves as the de-facto algorithm, employing local Stochastic Gradient Descent (local SGD) techniques~\citep{mcmahan2016communication,lin2020dont} to reduce communication costs and protect client privacy. However, FL faces significant challenges due to distribution shifts among clients, which can hinder the performance of FL algorithms~\citep{li2018federated,wang2020federated,karimireddy2020scaffold,jiang2023test,guo2021towards}.
To address these challenges, researchers have introduced clustered FL algorithms to enhance the performance of FL algorithms.\looseness=-1

Clustered FL groups clients based on their local data distribution, addressing the distribution shift problem. Most methods employ hard clustering with a fixed number of clusters, grouping clients by measuring their similarities~\citep{ghosh2020efficient, long2023multi, wang2022accelerating, stallmann2022towards,ma2022convergence}. However, hard clustering may not adequately capture complex relationships between local distributions, and soft clustering paradigms have been proposed to address this issue~\citep{marfoq2021federated, wu2023personalized, ruan2022fedsoft, guo2023fedconceptem,reisser2021federated,ruan2022fedsoft}.
In this paper, we propose a generalized formulation for clustered FL that encompasses current methods and improves them by addressing issues related to intra-client inconsistency and efficiency.\looseness=-1

Another line of research focuses on automatically determining the number of clusters. Current methods utilize hierarchical clustering~\citep{sattler2020byzantine, sattler2020clustered, zhao2020cluster, briggs2020federated, zeng2023stochastic, duan2021fedgroup, duan2021flexible}, which measures client dissimilarity using model parameters or local gradient distances. Some papers enhance these distance metrics by employing various techniques, such as eigenvectors~\citep{yan2023clustered} and local feature norms~\citep{wei2023edge}. FEDCOLLAB~\citep{bao2023optimizing} quantifies client similarity through client discriminators. However, the requirement for discriminators between every client pair in FEDCOLLAB hinders scalability for cross-device scenarios with numerous clients.
In this paper, we concentrate on cross-device settings, introducing a holistic adaptive clustering framework enabling cluster splitting and merging. We also present enhanced weight updating for soft clustering and finer distance metrics for various clustering principles. For further discussions on related works, please refer to Appendix~\ref{sec:related-works-appendix}.
\looseness=-1



\section{\algframework: Revisiting and Extending Clustered FL Methods} \label{sec:revisit}
Current clustered FL methods typically employ diverse learning frameworks. As a result, existing methods often face challenges in gathering the advantages of different algorithms for potential enhancements.
To address this issue, as shown in Figure~\ref{fig:diverse-learning-system}, we introduce the \algframework, consisting of four tiers designed to tackle the primary tasks of clustering methods:
(1) Cluster Learning and Assignment (tiers 1 and 2): Identify which clients should belong to the same clusters and learns parameters for each cluster.
(2) Cluster Number Determinant (tiers 3 and 4): Decide the number of clusters.
As a result, the four tiers of the \algframework form a comprehensive learning process (Algorithm~\ref{alg:algorithm-framework-general}), enabling flexible improvements and the integration of advantages from different algorithms (Table~\ref{tab:performance-beta-02}).\looseness=-1

\subsection{Tiers 1 \& 2: The \tierA  and \tierB }
We introduce the first two tiers: \tierA and \tierB. \tierA defines the objective functions of the clustering methods, aiming to learn the underlying distributions of each cluster.
\tierB  orthogonally helps find the suitable clusters for each client, whereas hard clustering assigns each client to one cluster, while soft clustering allows clients to contribute to multiple clusters.
We propose the following optimization framework to encompass these two tiers.


\textbf{Optimization framework of clustered FL methods.}
The clustered FL methods can be expressed as a dual-variable optimization problem that maximizes $\cL(\mTheta, \mOmega)$, with $K$ clusters and $M$ data sources represented as $\cD_1, \cdots, \cD_M$:
\begin{small}
    \begin{talign}
        \label{equ:general-objective}
        \textstyle
        & \cL(\mTheta, \mOmega) = \frac{1}{N} \sum_{i=1}^{M} \sum_{j=1}^{N_i} \log \left( \sum_{k=1}^{K} \omega_{i;k} \cL_k(\xx_{i,j}, y_{ij}; \mtheta_k) \right) \,, \nonumber \\
        & \text{s.t.}
        \quad
        \sum_{k=1}^{K} \omega_{i;k} = 1, \forall i  \,,
    \end{talign}
\end{small}%
where $N = \sum_{i=1}^{M} N_i$, $N_i := |\cD_i|$, $(\xx_{i,j}, y_{ij})$ are sampled from $\cD_i$, and $\cL_k(\xx_{i,j}, y_{ij}; \mtheta_k)$ is the local objective function.
The parameters to be optimized are clustering weights $\mOmega = [\omega_{1;1}, \cdots, \omega_{M, K}]$, and model parameters $\mTheta = [\mtheta_1, \cdots, \mtheta_K]$.

\textbf{Tier 1: Incorporate existing \tierA .}
The existing methods employ clustering to address diverse tasks, which results in the proposal of various formulations. Our Algorithm~\ref{alg:algorithm-framework-general} encompasses the existing clustering formulations by selecting $\cL_k(\xx_{i,j}, y_{ij}; \mtheta_k)$ as follows:
\begin{itemize}[leftmargin=12pt,nosep]
    \item $\cP_{\mtheta_k}(y_{i,j} | \xx_{i,j})$.
          Most existing methods~\citep{marfoq2021federated,ghosh2020efficient,long2023multi} can be recovered using this conditional distribution (likelihood functions).
    \item $\cP_{\mtheta_k} (\xx_{i,j}, y_{i,j})$.
          FedGMM~\citep{wu2022motley} uses joint probability.~\footnote{In FedGMM~\citep{wu2023personalized}, $\mtheta_k$ is split into $[\mtheta_{k_1}, \mnu_{k_2}]$, and it uses $\cL_k(\xx_{i,j}, y_{i,j}; \mtheta_k) = \cP_{\mtheta_{k_1}}(y_{i,j}|\xx_{i,j})\cP_{\mnu_{k_2}}(\xx_{i,j})$ to model the joint probability.}
    \item $\frac{\cP_{\mtheta_k}(\xx_{i,j},y_{i,j})}{\cP_{\mtheta_k}(\xx_{i,j})\cP_{\mtheta_k}(y_{i,j})}$.
          FedRC~\citep{guo2023fedconceptem} relies on correlations between variables $\xx$ and $y$. \looseness=-1
\end{itemize}


\textbf{Tier 2: Incorporate existing \tierB .}
Various methods employ distinct mechanisms for calculating clustering weights $\omega_{i;k}$.
The choice of $\omega_{i;k}$, with either binary values $\omega_{i;k} \in \{0, 1\}$ or continuous values $\omega_{i;k} \in [0, 1]$, characterizes the dynamic clustering procedure.
\begin{itemize}[leftmargin=12pt,nosep]
    \item Hard clustering methods employ binary values $\omega_{i;k} \in {0, 1}$.
          In these methods, $\omega_{i;k}$ is determined using heuristic techniques, such as parameter distance~\citep{long2023multi, zeng2023stochastic, sattler2020byzantine}, local loss function values~\citep{ghosh2020efficient} or label distributions~\citep{diao2024exploiting}.\looseness=-1
    \item Soft clustering approaches permit $\omega_{i;k} \in [0, 1]$, determined by maximizing $\cL(\mTheta, \mOmega)$\citep{marfoq2021federated, guo2023fedconceptem, wu2023personalized}, or by normalizing local loss values\citep{ruan2022fedsoft}. Soft clustering methods do not assume separated clients' local distributions and can thus handle complex scenarios, such as mixture distributions~\citep{marfoq2021federated, wu2023personalized}.
          \looseness=-1
\end{itemize}

\subsection{Tiers 3 \& 4: The Adaptive Clustering Procedure and Distance Metrics}
\label{sec:holistic-framework}

Tiers 3 and 4 illustrate the techniques for Cluster Number Determination. In detail, the adaptive clustering procedures automatically adjust the number of clusters, while distance metrics control the adaptive clustering procedures, determining whether clusters should split or merge.
The \algframework allows for different techniques at each tier, enhancing flexibility in choosing the optimal adaptive clustering methods or converting methods that rely on fixed cluster numbers to adaptive ones.
\looseness=-1

\textbf{Tier 3: Adaptive clustering procedures demonstrate how to modify cluster numbers.}
To automatically determine the number of clusters, current approaches can be categorized into two orthogonal methods:
(1) Splitting clusters to increase the number of clusters~\citep{sattler2020byzantine, sattler2020clustered}.
(2) Merging clusters to reduce the number of clusters~\citep{zeng2023stochastic}.
We unify these approaches at tier 3.
\looseness=-1


\textbf{Tier 4: Client distance metrics dictate when cluster numbers should be adjusted.}
The client's distances are utilized to determine whether the current number of clusters should be adjusted. For instance, when the distance within a cluster is large, the cluster will divide into sub-clusters. Conversely, if the distances between two clusters are small, these two clusters should be merged. Existing clustering methods use various metrics such as cosine similarity of local gradients~\citep{sattler2020byzantine}, gradients from a globally shared network~\citep{zeng2023stochastic}, and local feature norms~\citep{wei2023edge}.
\looseness=-1




\begin{figure*}
    \centering
    \includegraphics[width=.9\textwidth]{./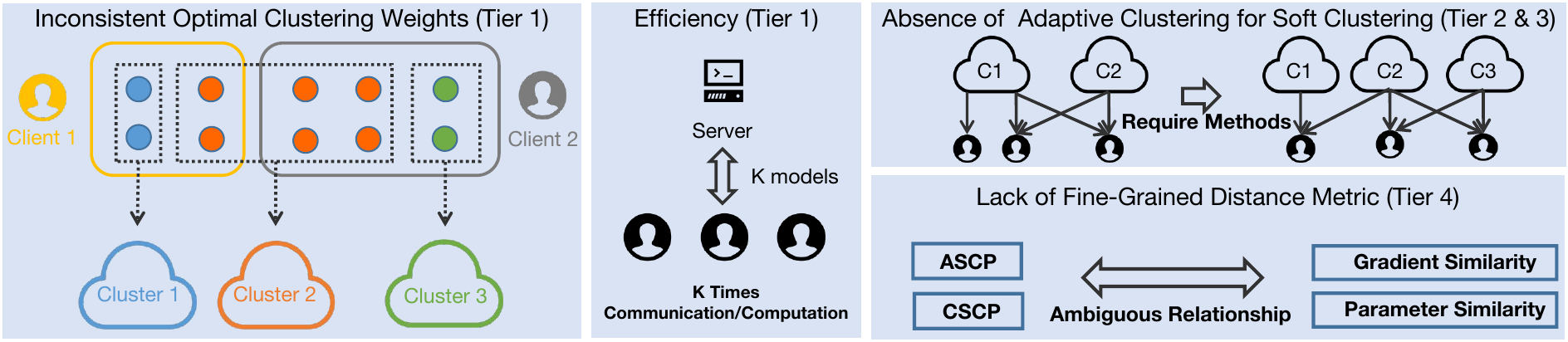}
    \caption{\small
        \textbf{Remaining challenges in clustered FL methods.}
        We identify four key issues in clustered FL algorithms: (1) inconsistent intra-client clustering weights, (2) efficiency concerns, (3) the absence of adaptive clustering for soft clustering methods, and (4) the lack of fine-grained distance metrics for various clustering principles. Clustering principles \cpA and \cpB differ in their approach as follows: \cpA assigns clients with any shifts into different clusters, while \cpB only assigns clients with concept shifts to different clusters."
    }
    \vspace{-1.5em}
    \label{fig:challenges}
\end{figure*}

\section{\algfednew: Tackling Remaining Challenges in Clustered FL}
\label{sec:method}

Section~\ref{sec:revisit} introduces a holistic clustering framework with four tiers to encompass existing methods. However, each tier still presents challenges that current methods cannot address. In this section, we outline four key remaining challenges in Figure~\ref{fig:challenges} and introduce \algfednew to tackle them. Due to space constraints, we summarize the improved algorithm in Algorithm~\ref{alg:algorithm-framework}.

While the \algfednew employs standard FL training process without introducing extra privacy concerns compared to traditional methods. Existing privacy-preserving techniques, like secure aggregation, can be integrated as plugins. We plan to explore this possibility further in the future work.

\subsection{Remaining Challenges of the Clustering in FL}
\label{sec:remain-challenges}

In this subsection, we identify four remaining challenges of the \algframework. We categorize these challenges by tiers in the \algframework, as shown in Figure~\ref{fig:challenges}. The details are provided below.
\looseness=-1

\textbf{Challenges on tier 1: Inconsistent intra-client clustering weights and efficiency concerns.}
These challenges can be addressed by improving the clustering formulations.
\begin{itemize}[leftmargin=12pt,nosep]
    \item \textit{Inconsistent intra-client clustering weights.}
          Existing approaches use the same clustering weights $\omega_{i;k}$ for all the samples belonging to client $i$~\citep{sattler2020byzantine,ghosh2020efficient,marfoq2021federated,guo2023fedconceptem}. However, they overlook cases where the optimal clustering weights of different samples within the same client can be inconsistent, implying that $\omega_{i,j_1;k} \neq \omega_{i,j_2;k}$ for certain samples $(\xx_{i,j_1}, y_{i, j_1})$ and $(\xx_{i,j_2}, y_{i, j_2})$.
          See our example here\footnote{
              In real-world scenarios, clients may have varying local data distributions, but they can still share certain data, like common knowledge or location information. While client-specific data may belong to different clusters, shared data should be in the same clusters, leading to inconsistent intra-client clustering weights.
          }.
          \looseness=-1
    \item \textit{Efficiency.} The current clustered FL methods~\citep{marfoq2021federated,long2023multi,guo2023fedconceptem} require K-fold higher communication or computation costs, hindering overall algorithm efficiency during deployment.
\end{itemize}

\begin{algorithm}[!t]
    \small
    \begin{algorithmic}[1]
        \small
        \Require{Number of communication rounds $T$, initial number of clusters $K^{0}$, initial parameters $\mphi^{0}$, and $\mTheta^{0}$.}
        \Ensure{Number of clusters $K^{T}$, trained parameters $\mphi^{T}$, and $\mTheta^{T}$.}
        \For{$t = 0, \cdots, T-1$}
        \myState{Sample a subset of clients $\cS^{t}$, and send $\mTheta^{t+1}$ to the clients.}
        \For{Client $i$ in $\cS^{t}$}
        \myState{Do local SGD by solving~\eqref{equ:general-objective} for $\tau$ local epochs..\Comment{Tiers 1 and 2}}
        \myState{Upload local model updates $\gg_{i;k}^{t+1}$ to the server.}
        \EndFor
        \myState{$\mtheta_{k}^{t+1} = \mtheta_{k}^{t} - \eta_g \sum_{i \in \cS^{t}} \gg_{i;k}^{t+1}$, $\forall k$.}
        \myState{Calculate distance matrix $\mD^{t}$, and $\mD_k^{t}$, $\forall k$. \Comment{Tier 4}}
        \If{Detect cluster $k_{s}$ need to be split }\Comment{Tier 3}
        \myState{Split cluster $k_{s}$ into sub-clusters $\cS_{s, 1}$ and $\cS_{s, 2}$.}
        \myState{$\mtheta_{k_s}^{t+1} = \mtheta_{k}^{t} - \eta_g \sum_{i \in \cS_{s, 1}} \gg_{i;k}^{t+1}$.}
        \myState{$\mtheta_{K^{t} + 1}^{t+1} = \mtheta_{k}^{t} - \eta_g \sum_{i \in \cS_{s, 2}} \gg_{i;k}^{t+1}$.}
        \myState{Update $\omega_{i;k}$ for corresponding clients.}
        \EndIf
        \If{Detect cluster $k_{d}$ need to be deleted}\Comment{Tier 3}
        \myState{Delete cluster $k_{d}$.}
        \myState{Update $\omega_{i;k}$ for corresponding clients.}
        \EndIf
        \myState{Update $K^{t+1}$ by the current number of clusters.}

        \EndFor
    \end{algorithmic}
    \mycaptionof{algorithm}{\small Algorithm Framework of \algframework.}
    \label{alg:algorithm-framework-general}
\end{algorithm}

\textbf{Challenges on tiers 2 \& 3: The absence of adaptive clustering for soft clustering methods.}
Current adaptive clustering methods primarily address hard clustering~\citep{sattler2020byzantine,sattler2020clustered,zeng2023stochastic}.
Hence, there exists a gap between research and practice, as there is a need to automatically determine the number of clusters for soft clustering methods~\citep{marfoq2021federated,guo2023fedconceptem}.
\looseness=-1

\textbf{Challenges on tier 4: Lack of fine-grained distance metrics for various clustering principles.}
The clustering principles determine which clients should be assigned to the same clusters. Existing clustering methods may use different clustering principles, as described by \cpA and \cpB below: \looseness=-1
\begin{itemize}[leftmargin=12pt,nosep]
    \item \cpA (Any Shift Type Clustering Principle): clients with any distribution shifts are placed into separate clusters~\citep{marfoq2021federated, wu2023personalized}.
    \item \cpB (Concept Shift Only Clustering Principle): only clients with concept shifts are assigned to separate clusters~\citep{guo2023fedconceptem}.
\end{itemize}
As discussed in Section~\ref{sec:holistic-framework}, client distances determine whether the current number of clusters should be changed, aligning with the role of clustering principles: if the current number of clusters cannot meet the requirements of the clustering principles, the cluster number should be adjusted.
Consequently, we advocate for distance metrics to be closely tied to distribution shifts, ultimately aligning with clustering principles. Unfortunately, existing distance metrics, such as those based on local gradients or local model parameters~\citep{sattler2020byzantine, zeng2023stochastic, long2023multi, yan2023clustered}, cannot establish a clear link to distribution shifts. As a result, current methods struggle to satisfy diverse and detailed clustering principles.
\looseness=-1

\subsection{Improve Tier1: Inconsistency and Efficiency Aware Objective Functions}
\label{sec:inconsistency-aware-objective}

To address tier 1 challenges, specifically, (i) inconsistent intra-client clustering weights and (ii) efficiency, we propose an extension of the objective function (Eq.~\eqref{equ:general-objective}), which is defined as $\mathcal{L}(\boldsymbol{\phi}, \boldsymbol{\Theta}, \boldsymbol{\Omega}, \tilde{\boldsymbol{\Omega}})$ and includes the parameters $\boldsymbol{\phi}$, $\boldsymbol{\Theta}$, $\boldsymbol{\Omega}$, and $\tilde{\boldsymbol{\Omega}}$.
\looseness=-1
\begin{itemize}[leftmargin=12pt,nosep]
    \item \textit{Shared feature extractor $\mphi$, and cluster-specific predictors $\mTheta = [\mtheta_1, \cdots, \mtheta_K]$.}
          Dividing the feature extractor $\mphi$ and the predictors $\{ \mtheta_k \}$ reduces communication and computation costs since the predictors are lightweight architectures, like linear classifier layers.
          \looseness=-1
    \item \textit{Sample-wise clustering weights $\mOmega = [\omega_{1,1;1} \cdots, \omega_{M, N_M; K}]$
          for enhanced training stage and client-wise clustering weights $\tilde{\mOmega} = [\tilde{\omega}_{1;1} \cdots, \tilde{\omega}_{M;K}]$ for testing stage}~\footnote{Experiments on the effectiveness of sample-wise clustering weights in Figures~\ref{fig:fedem-test-mu} and~\ref{fig:fedrc-test-mu}.}.
          We employ sample-specific clustering weights ($\omega_{i,j;k}$) during training to ensure that data samples from the same clients can contribute to different cluster models, resolving the issue of inconsistent intra-client clustering weights. Furthermore, during testing, when test-time label information is unavailable, we utilize client-specific weights ($\tilde{\omega}_{i;k}$) for each client and cluster.
          \looseness=-1
    \item \textit{Enhance the optimization of $\omega_{i,j;k}$ by regularizing the distance between $\tilde{\omega}_{i;k}$ and $\omega_{i,j;k}$.}
          Motivated by the intuition that ``if data from the same clients have similar distributions, the corresponding clustering weights should be similar'',
          we encourage $\tilde{\omega}_{i;k}$ and $\omega_{i,j;k}$ to be close to each other.
          \looseness=-1
\end{itemize}

The following objective function is designed to meet our requirements.
\begin{small}
    \begin{talign}
        \textstyle
        & \cL(\mphi, \mTheta, \mOmega, \tilde{\mOmega}) = \underbrace{\frac{1}{N} \sum_{i=1}^{M} \sum_{j=1}^{N_i} \log \left( \sum_{k=1}^{K} \omega_{i,j;k} \cL_k(\xx_{i,j}, y_{ij}) \right)}_{\cA_1}
        - \underbrace{\mu \sum_{i=1}^{M} \sum_{j=1}^{N_i} \left( \sum_{k=1}^{K} \tilde{\omega}_{i;k} \log \frac{\tilde{\omega}_{i;k}}{\omega_{i,j;k}} \right)}_{\cA_2} \label{equ:fedias-objective}                      \\
        & \text{s.t.}
        \sum_{k=1}^{K} \omega_{i,j;k} = 1, \, \forall i, j  \,,  \sum_{k=1}^{K} \tilde{\omega}_{i;k} = 1, \, \forall i \,,
        \tilde{\mOmega} = \argmin_{\tilde{\mOmega}} \left| \max_{\mOmega} \cL(\mphi, \mTheta, \mOmega, \tilde{\mOmega}) -  \cL(\mphi, \mTheta, \tilde{\mOmega}, \tilde{\mOmega}) \right| \,, \label{equ:obtain-cluster-weights}
    \end{talign}
\end{small}%
where $\cL_k(\xx_{i,j}, y_{ij})$ is the simplified notation of $\cL_k(\xx_{i,j}, y_{ij}; \mphi, \mtheta_k)$. $\cA_1$ term is extended from~\eqref{equ:general-objective} by using the global shared feature extractor $\mphi$ and the sample-wise weights $\omega_{i,j;k}$.
$\cA_2$ focuses on regularizing the difference between the sample-wise clustering weights $\omega_{i,j;k}$ and the client-wise clustering weights $\tilde{\omega}_{i;k}$.
The $\mu$ controls the strength of this regularization.
We obtain $\tilde{\omega}_{i;k}$ by solving~\eqref{equ:obtain-cluster-weights}, where we aim to minimize the impact of replacing $\omega_{i,j;k}$ with $\tilde{\omega}_{i;k}$.


\textbf{Optimization of the proposed objective function.}
Different from heuristic methods used in most studies to optimize~\eqref{equ:fedias-objective}~\footnote{IFCA~\citep{ghosh2020efficient} sets $\omega_{i,j;k_{i, \text{min}}} = 1, \forall j$ when $k_{i, \text{min}} = \argmin_k \E_{D_i} \left[f_{i;k}(\xx_{i,j}, y_{i,j}, \mphi, \mtheta_k)\right]$, where $f_{i;k}$ is the local loss function. FeSEM~\citep{long2023multi} sets $k_{i, \text{min}} = \argmin_k \norm{\mtheta_k - \mtheta_i}_2$, where $\mtheta_k, \mtheta_i$ represents the model parameters of cluster $k$ and client $i$, respectively.\looseness=-1},
we aim to introduce a more interpretable approach by maximizing the objective functions (Eq.~\eqref{equ:fedias-objective}). Detailed proof refer to Appendix~\ref{sec:Proof of EM steps}.
Specifically, we can update $\tilde{\omega}_{i;k}$, $\omega_{i,j;k}$, $\mtheta_k$, and $\mphi$ by~\eqref{equ:uptate-gamma}--\eqref{equ:update-mphi}.
\begin{small}
    \begin{talign}
        \textstyle
        \gamma_{i,j;k}^{t+1}       & = \frac{ \omega_{i,j;k}^{t} \cL_k^{t}(\xx_{i,j}, y_{i,j})}{\sum_{n=1}^{K} \omega_{i,j;n}^{t} \cL_k^{t}(\xx_{i,j}, y_{i,j})} \,, \label{equ:uptate-gamma}
        \tilde{\gamma}_{i,j;k}^{t+1} = \frac{ \tilde{\omega}_{i;k}^{t} \cL_k^{t}(\xx_{i,j}, y_{i,j})}{\sum_{n=1}^{K} \omega_{i;n}^{t} \cL_k^{t}(\xx_{i,j}, y_{i,j})} \,,  \\
        \tilde{\omega}_{i;k}^{t+1} & = \frac{1}{N_i} \sum_{j=1}^{N_i} \tilde{\gamma}_{i,j;k}^{t+1} \,,                                                                                                                                                                                      \omega_{i,j;k}^{t+1}  = \frac{\gamma_{i,j;k}^{t+1}}{1 + \mu N} + \frac{\mu N}{1 + \mu N} \tilde{\omega}_{i;k}^{t+1}  = \tilde{\mu} \gamma_{i,j;k}^{t+1} + (1 - \tilde{\mu}) \tilde{\omega}_{i;k}^{t+1} \, , \label{equ:update-omega} \\
        \mtheta_k^{t+1}            & = \mtheta_k^{t} - \eta \sum_{i=1}^{M} \sum_{j=1}^{N_i} \frac{\gamma_{i,j;k}^{t+1}}{\cL_k^{t}(\xx_{i,j}, y_{i,j})} \nabla_{\mtheta_k} \cL_k^{t}(\xx_{i,j}, y_{i,j}) \, ,   \label{equ:update-theta}                                                                                                                                                                                                                                                                 \\
        \mphi^{t+1}                & = \mphi^{t} - \eta \sum_{i=1}^{M} \sum_{j=1}^{N_i} \sum_{k=1}^{K} \frac{\gamma_{i,j;k}^{t+1}}{\cL_k^{t}(\xx_{i,j}, y_{i,j})} \nabla_{\mphi} \cL_k^{t}(\xx_{i,j}, y_{i,j}) \label{equ:update-mphi}\, ,
    \end{talign}
\end{small}%

where $\cL_k^{t}(\xx_{i,j}, y_{ij})$ is the simplified notation of local objective function $\cL_k(\xx_{i,j}, y_{ij}; \mphi^{t}, \mtheta_k^{t})$, $\gamma_{i,j;k}$ and $\tilde{\gamma}_{i,j;k}$ are intermediate results for calculating $\omega_{i,j;k}$ and $\tilde{\omega}_{i;k}$.
More detailed proofs can be found in Appendix~\ref{sec:Proof of EM steps}.
$\tilde{\mu} = \frac{1}{1 + \mu N}$ serves as a hyperparameter to control the strength of the penalty term in Equation~\eqref{equ:fedias-objective}. We provide the theoretical analysis on linear representation learning case in Appendix~\ref{sec:Theoretical Study on Linear Representation Case}.\looseness=-1

\subsection{Improve Tiers 2 \& 3: Adaptive Clustering for Soft Clustering Paradigms}
\label{sec:cover-soft-cluster}
Given the limitations of existing adaptive clustering methods, we have extended the clustering weight update mechanisms to incorporate soft clustering and have verified its effectiveness in Figures~\ref{fig:val-scwu} and~\ref{fig:test-scwu}.
The overall process is summarized in Algorithms~\ref{alg:add-cluster} and~\ref{alg:remove-cluster}.
\looseness=-1
In Algorithm~\ref{alg:add-cluster}, the clustering weights are adjusted after splitting cluster $k$ into two sub-clusters, denoted by $k_1$ and $k_2$. Then we set $\omega_{i,j,k_1} = \omega_{i,j,k_2} = \omega_{i,j,k} / 2$ for all $i$ and $j$.
In Algorithm~\ref{alg:remove-cluster}, the clustering weights are updated when removing cluster $k$. For all $k^{'} \neq k$, we modify $\omega_{i,j;k^{'}}$ as $\omega_{i,j;k^{'}} = \frac{\omega_{i,j;k^{'}}}{\sum_{n \neq k} \omega_{i,j;n}}$.\looseness=-1

We use the hyperparameter $\rho$ to control cluster splitting.
As evidenced in Table~\ref{tab:performance-beta-02}, a higher $\rho$ results in fewer clusters, signifying enhanced generalization but reduced personalization.
In detail, the cluster $k$ will split if the following condition is met:
\begin{small}
    \begin{talign}
        \textstyle
        \max (\mD_k) - \operatorname{mean} (\mD_k) \ge \rho \,,
    \end{talign}
\end{small}%
where $\mD_k$ is the distance matrix of cluster $k$.
We identify the need for cluster removal when the cluster no longer receives the highest clustering weights from any clients.
Additional details about the enhanced adaptive process can be found in Algorithm~\ref{alg:algorithm-framework}.

\looseness=-1

\subsection{Improve Tier4: Fine-Grained Distance Metric Design} \label{sec:interpretable-distance-metric}
Due to the page limitations, we include most of the details about the method design and practical implements in Appendix~\ref{sec:algorithms}.
As discussed in Section~\ref{sec:remain-challenges}, various algorithms may group clients into different clusters based on different clustering principles.
Therefore, in this section, we design the following fine-grained distance metrics for these different clustering principles. In detal, we have $\mD_{i,j}^{k} = $
\looseness=-1
\begin{small}
    \begin{talign}
        \textstyle
        \begin{split}
            \left \{            \begin{array}{ll}
                \max \left\{ d_c, d_{lf} \right \}
                \E_{D_i} \left[\tilde{\cL}_k^{t}(\zz, y) \right]
                \E_{D_j} \left[\tilde{\cL}_k^{t}(\zz, y) \right] \label{equ:theory-distance} \, , & \text{\cpA} \, , \\
                d_c
                \E_{D_i} \left[\tilde{\cL}_k^{t}(\zz, y) \right]
                \E_{D_j} \left[\tilde{\cL}_k^{t}(\zz, y) \right] \, ,                             & \text{\cpB} \, ,
            \end{array}
            \right.
        \end{split}
    \end{talign}
\end{small}%
where

\begin{small}
    \begin{align*}
        d_c \!=\! \max_{y} \left\{ \text{dist} \left( \E_{D_i} \left[\cP(\zz | \xx, y;\mphi) \right], \E_{D_j} \left[\cP(\zz| \xx, y; \mphi) \right] \right) \right\} \, ,
        d_{lf} \!=\! \text{dist} \left( \E_{D_i} \left[\cP(\zz | \xx;\mphi) \right], \E_{D_j} \left[\cP(\zz| \xx; \mphi) \right] \right)
    \end{align*}
\end{small}

where $\tilde{\cL}_k^{t}(\zz, y)$ is the simplified notation of local objective function $\tilde{\cL}_k^{t}(\zz, y; \mtheta_k)$ given extracted feature $\zz$, $\text{dist}$ is the cos-similarity.
The distances above become large only when the following conditions occur together:
(1) Large values of $d_c$ indicate concept shifts between clients $i$ and $j$;
(2) Large $d_{lf}$ indicate significant feature and label distribution differences.
(3) Large values of $\E_{D_i} \left[\tilde{\cL}_k(\zz, y;\mtheta_k) \right] \E_{D_j} \left[\tilde{\cL}_k(\zz, y;\mtheta_k) \right]$ indicate incorrect clustering weights with high confidence.
The effectiveness of the above distance metrics design is evidenced in Table~\ref{tab:ablation-adaptive}.

\paragraph{Approximation of the distance metrics in practice.}
When calculating the distance metrics (Equation~\eqref{equ:theory-distance}) in practice, to avoid training extra generative networks and transmitting more data between servers and clients, we substitute $\tilde{\omega}_{i;k}$ for $\tilde{\cL}_k(\zz, y;\mtheta_k)$ since $\tilde{\omega}_{i;k}$ is positively correlated with $\tilde{\cL}_k(\zz, y;\mtheta_k)$~\citep{marfoq2021federated,guo2023fedconceptem}. Additionally, we approximate $\E_{D_i} \left[\cP(\zz | \xx, y;\mphi) \right]$ and $\E_{D_i} \left[\cP(\zz | \xx, y;\mphi) \right]$ using feature prototypes.
The prototypes are defined by the following equation:
\begin{small}
    \begin{align}
        \tilde{d}_c = Dist(\mP_{c, i}, \mP_{c, j}) \, ,
        \tilde{d}_{lf} = Dist(\mP_{lf, i}, \mP_{lf, j}) \, ,
    \end{align}
\end{small}
where
\begin{small}
    \begin{align}
        \textstyle
        \mP_{c, i} \in \R^{d \times C}
                               & = [\frac{1}{N_{i, 1}} \sum_{j=1}^{N_i} \mathbf{1}_{y_{i,j}=1} g(\xx_{i,j}, \mphi), \cdots, \frac{1}{N_{i, C}} \sum_{j=1}^{N_i} \mathbf{1}_{y_{i,j}=C} g(\xx_{i,j}, \mphi)] \, ,
        \label{equ:get-the-prototypes}                                                                                                                                                                           \\
        \mP_{lf, i} \in \R^{d} & = \frac{1}{N_{i}} \sum_{j=1}^{N_i} g(\xx_{i,j}, \mphi) \, ,
        \label{equ:get-the-mean-prototypes}
    \end{align}
\end{small}
$N_{i,c} = \sum_{j=1}^{N_i} \mathbf{1}_{y_{i,j}=c}$, $g(\xx_{i,j}, \mphi)$ is the function parameterized by $\mphi$,
$Dist$ is a function to measure the distance between prototypes, which we use the cosine similarity as an example in this paper.

\looseness=-1

\section{Numerical Results}
In this section, we evaluate the performance of \algfednew and other clustered FL methods.
Additional experiment results, including hyper-parameter ablation studies, different model architectures, additional scenarios, efficiency comparision, and clusteirng quality illustration can be found in Appendix~\ref{sec:Additional Experiment Results}.\looseness=-1


\begin{table*}[!t]
    \centering
    \caption{\small
        \textbf{Performance of the adaptive clustering methods} on CIFAR10, CIFAR100, and Tiny-Imagenet datasets.
        For each algorithm, we present the best Validation and Test accuracies averaged over three trials.
        For clustering methods that require a fixed number of clusters, we set $K = 3$.
        The hyperparameters $\text{tol}_1$, $\text{tol}_2$, $\alpha^{*}(0)$, $\tau$, and $\rho$ in adaptive clustering methods govern the balance between personalization and generalization, as well as the cluster number. For instance, lower $\tau$ in StoCFL or lower $\rho$ in \algfednew indicate improved personalization and reduced generalization.
        $K^{T}$ denotes the cluster number in the final training round, where a larger $K^{T}$ suggests improved personalization but reduced generalization and communication efficiency.
        We emphasize the best results in \textbf{bold} and the worst results in \textcolor{blue}{blue}.
    }
    \resizebox{.9\textwidth}{!}{

        \begin{tabular}{l c c c c c c c c c c c c}
            \toprule
            \multirow{2}{*}{Algorithm}                   & \multicolumn{3}{c}{CIFAR10, $\beta = 0.2$}                     & \multicolumn{3}{c}{CIFAR100, $\beta = 0.2$}                    & \multicolumn{3}{c}{Tiny-Imagenet, $\beta = 0.2$}                                                                                                                                                                                                                                                                               \\
            \cmidrule(lr){2-4} \cmidrule(lr){5-7} \cmidrule(lr){8-10}
                                                         & Val                                                            & Test                                                           & $K^{T}$                                          & Val                                                            & Test                                                          & $K^{T}$ & Val                                                   & Test                                                           & $K^{T}$ \\
            \midrule
            \textbf{Pre-defined $K$}                                                                                                                                                                                                                                                                                                                                                                                                                                                                                        \\
            FedAvg                                       & $49.19$ \small{\transparent{0.5} $\pm 2.15$}                   & $45.42$ \small{\transparent{0.5} $\pm 2.42$}                   & $1$                                              & $\textcolor{blue}{26.01}$ \small{\transparent{0.5} $\pm 1.15$} & $27.87$ \small{\transparent{0.5} $\pm 2.12$}                  & $1$     & $38.83$ \small{\transparent{0.5} $\pm 0.20$}          & $39.07$ \small{\transparent{0.5} $\pm 0.44$}                   & $1$     \\
            FeSEM                                        & $45.30$ \small{\transparent{0.5} $\pm 0.40$}                   & $29.01$ \small{\transparent{0.5} $\pm 0.79$}                   & $3$                                              & $26.37$ \small{\transparent{0.5} $\pm 0.64$}                   & $24.50$ \small{\transparent{0.5} $\pm 0.28$}                  & $3$     & $37.10$ \small{\transparent{0.5} $\pm 0.80$}          & $30.00$ \small{\transparent{0.5} $\pm 2.02$}                   & $3$     \\
            IFCA                                         & $\textcolor{blue}{34.46}$ \small{\transparent{0.5} $\pm 2.06$} & $\textcolor{blue}{23.18}$ \small{\transparent{0.5} $\pm 2.55$} & $3$                                              & $26.99$ \small{\transparent{0.5} $\pm 3.89$}                   & $26.20$ \small{\transparent{0.5} $\pm 1.56$}                  & $3$     & $38.52$ \small{\transparent{0.5} $\pm 0.30$}          & $29.92$ \small{\transparent{0.5} $\pm 0.30$}                   & $3$     \\
            FedEM                                        & $66.49$ \small{\transparent{0.5} $\pm 0.69$}                   & $53.64$ \small{\transparent{0.5} $\pm 1.61$}                   & $3$                                              & $29.75$ \small{\transparent{0.5} $\pm 0.47$}                   & $24.18$ \small{\transparent{0.5} $\pm 0.03$}                  & $3$     & $42.00$ \small{\transparent{0.5} $\pm 0.74$}          & $39.25$ \small{\transparent{0.5} $\pm 0.31$}                   & $3$     \\
            FedRC                                        & $63.65$ \small{\transparent{0.5} $\pm 2.95$}                   & $59.41$ \small{\transparent{0.5} $\pm 0.19$}                   & $3$                                              & $34.56$ \small{\transparent{0.5} $\pm 0.79$}                   & $\mathbf{37.62}$ \small{\transparent{0.5} $\pm 0.16$}         & $3$     & $38.93$ \small{\transparent{0.5} $\pm 0.18$}          & $39.73$ \small{\transparent{0.5} $\pm 0.04$}                   & $3$     \\

            \midrule
            \textbf{Adaptive $K$}                                                                                                                                                                                                                                                                                                                                                                                                                                                                                           \\
            CFL                                                                                                                                                                                                                                                                                                                                                                                                                                                                                                             \\
            $\quad \text{tol}_1 =0.4, \text{tol}_2 =1.6$ & $61.55$ \small{\transparent{0.5} $\pm 1.74$}                   & $46.88$ \small{\transparent{0.5} $\pm 0.35$}                   & $6$                                              & $35.05$ \small{\transparent{0.5} $\pm 0.35$}                   & $24.84$ \small{\transparent{0.5} $\pm 2.50$}                  & $4$     & $37.41$ \small{\transparent{0.5} $\pm 1.87$}          & $30.25$ \small{\transparent{0.5} $\pm 0.55$}                   & $3$
            \\
            $\quad \text{tol}_1 =0.4, \text{tol}_2 =0.8$ & $65.06$ \small{\transparent{0.5} $\pm 3.34$}                   & $45.74$ \small{\transparent{0.5} $\pm 4.01$}                   & $9$                                              & $36.98$ \small{\transparent{0.5} $\pm 3.37$}                   & $22.00$ \small{\transparent{0.5} $\pm 1.88$}                  & $5$     & $40.36$ \small{\transparent{0.5} $\pm 3.55$}          & $28.82$ \small{\transparent{0.5} $\pm 0.71$}                   & $4$
            \\
            $\quad \text{tol}_1 =0.2, \text{tol}_2 =0.8$ & $58.92$ \small{\transparent{0.5} $\pm 2.09$}                   & $55.02$ \small{\transparent{0.5} $\pm 0.97$}                   & $4$                                              & $37.73$ \small{\transparent{0.5} $\pm 7.68$}                   & $31.47$ \small{\transparent{0.5} $\pm 0.09$}                  & $3$     & $35.74$ \small{\transparent{0.5} $\pm 0.57$}          & $34.41$ \small{\transparent{0.5} $\pm 1.92$}                   & $1$
            \\
            ICFL                                                                                                                                                                                                                                                                                                                                                                                                                                                                                                            \\
            $\quad \alpha^{*} (0) =0.85$                 & $77.59$ \small{\transparent{0.5} $\pm 0.04$}                   & $57.38$ \small{\transparent{0.5} $\pm 1.91$}                   & $98$                                             & $52.73$ \small{\transparent{0.5} $\pm 1.03$}                   & $32.77$ \small{\transparent{0.5} $\pm 0.28$}                  & $100$   & $64.72$ \small{\transparent{0.5} $\pm 0.30$}          & $34.73$ \small{\transparent{0.5} $\pm 0.39$}                   & $87$    \\
            $\quad \alpha^{*} (0) =0.98$                 & $60.58$ \small{\transparent{0.5} $\pm 1.07$}                   & $61.18$ \small{\transparent{0.5} $\pm 0.78$}                   & $14$                                             & $41.49$ \small{\transparent{0.5} $\pm 4.11$}                   & $33.57$ \small{\transparent{0.5} $\pm 1.56$}                  & $40$    & $53.05$ \small{\transparent{0.5} $\pm 2.57$}          & $35.09$ \small{\transparent{0.5} $\pm 0.25$}                   & $42$    \\
            StoCFL                                                                                                                                                                                                                                                                                                                                                                                                                                                                                                          \\
            $\quad \tau=0.05$                            & $59.79$ \small{\transparent{0.5} $\pm 1.34$}                   & $57.35$ \small{\transparent{0.5} $\pm 0.92$}                   & $15$                                             & $29.97$ \small{\transparent{0.5} $\pm 0.47$}                   & $31.40$ \small{\transparent{0.5} $\pm 2.16$}                  & $4$     & $31.85$ \small{\transparent{0.5} $\pm 0.08$}          & $31.39$ \small{\transparent{0.5} $\pm 0.87$}                   & $1$     \\
            $\quad \tau=0.10$                            & $70.84$ \small{\transparent{0.5} $\pm 1.58$}                   & $51.72$ \small{\transparent{0.5} $\pm 0.07$}                   & $54$                                             & $\mathbf{69.76}$ \small{\transparent{0.5} $\pm 2.57$}          & $\textcolor{blue}{9.42}$ \small{\transparent{0.5} $\pm 0.07$} & $89$    & $\mathbf{67.48}$ \small{\transparent{0.5} $\pm 1.53$} & $\textcolor{blue}{13.03}$ \small{\transparent{0.5} $\pm 0.67$} & $91$    \\
            \midrule
            \algfednew (FeSEM)                                                                                                                                                                                                                                                                                                                                                                                                                                                                                              \\
            \quad $\rho=0.05$                            & $\mathbf{87.77}$ \small{\transparent{0.5} $\pm 1.11$}          & $41.85$ \small{\transparent{0.5} $\pm 4.11$}                   & $58$                                             & $\mathbf{69.25}$ \small{\transparent{0.5} $\pm 0.69$}          & $14.24$ \small{\transparent{0.5} $\pm 1.93$}                  & $67$    & $60.44$ \small{\transparent{0.5} $\pm 0.86$}          & $23.14$ \small{\transparent{0.5} $\pm 1.46$}                   & $32$    \\
            \quad $\rho=0.1$                             & $85.08$ \small{\transparent{0.5} $\pm 0.11$}                   & $43.34$ \small{\transparent{0.5} $\pm 0.94$}                   & $44$                                             & $62.32$ \small{\transparent{0.5} $\pm 0.23$}                   & $16.67$ \small{\transparent{0.5} $\pm 2.97$}                  & $38$    & $52.18$ \small{\transparent{0.5} $\pm 2.90$}          & $32.97$ \small{\transparent{0.5} $\pm 1.27$}                   & $14$    \\
            \quad $\rho=0.3$                             & $79.31$ \small{\transparent{0.5} $\pm 3.95$}                   & $47.62$ \small{\transparent{0.5} $\pm 2.90$}                   & $17$                                             & $44.49$ \small{\transparent{0.5} $\pm 1.57$}                   & $28.03$ \small{\transparent{0.5} $\pm 0.85$}                  & $8$     & $45.76$ \small{\transparent{0.5} $\pm 0.09$}          & $36.08$ \small{\transparent{0.5} $\pm 1.25$}                   & $4$     \\
            \algfednew (FedEM)                                                                                                                                                                                                                                                                                                                                                                                                                                                                                              \\
            \quad $\rho=0.05$                            & $82.45$ \small{\transparent{0.5} $\pm 0.13$}                   & $57.73$ \small{\transparent{0.5} $\pm 1.70$}                   & $22$                                             & $60.36$ \small{\transparent{0.5} $\pm 1.47$}                   & $22.95$ \small{\transparent{0.5} $\pm 1.44$}                  & $40$    & $63.41$ \small{\transparent{0.5} $\pm 0.05$}          & $34.24$ \small{\transparent{0.5} $\pm 0.33$}                   & $33$    \\
            \quad $\rho=0.1$                             & $84.64$ \small{\transparent{0.5} $\pm 1.47$}                   & $60.90$ \small{\transparent{0.5} $\pm 0.61$}                   & $16$                                             & $62.98$ \small{\transparent{0.5} $\pm 0.42$}                   & $26.17$ \small{\transparent{0.5} $\pm 1.22$}                  & $34$    & $59.88$ \small{\transparent{0.5} $\pm 0.11$}          & $37.17$ \small{\transparent{0.5} $\pm 0.37$}                   & $20$    \\
            \quad $\rho=0.3$                             & $83.67$ \small{\transparent{0.5} $\pm 0.72$}                   & $62.43$ \small{\transparent{0.5} $\pm 0.71$}                   & $10$                                             & $50.72$ \small{\transparent{0.5} $\pm 2.97$}                   & $32.13$ \small{\transparent{0.5} $\pm 0.18$}                  & $9$     & $45.53$ \small{\transparent{0.5} $\pm 0.53$}          & $38.64$ \small{\transparent{0.5} $\pm 0.23$}                   & $3$     \\
            \algfednew (FedRC)                                                                                                                                                                                                                                                                                                                                                                                                                                                                                              \\
            \quad $\rho=0.05$                            & $69.16$ \small{\transparent{0.5} $\pm 0.65$}                   & $67.37$ \small{\transparent{0.5} $\pm 0.42$}                   & $8$                                              & $39.20$ \small{\transparent{0.5} $\pm 0.31$}                   & $34.38$ \small{\transparent{0.5} $\pm 0.64$}                  & $11$    & $43.78$ \small{\transparent{0.5} $\pm 0.31$}          & $38.75$ \small{\transparent{0.5} $\pm 0.54$}                   & $10$    \\
            \quad $\rho=0.1$                             & $71.67$ \small{\transparent{0.5} $\pm 0.83$}                   & $68.64$ \small{\transparent{0.5} $\pm 0.76$}                   & $8$                                              & $39.56$ \small{\transparent{0.5} $\pm 0.14$}                   & $34.62$ \small{\transparent{0.5} $\pm 0.78$}                  & $8$     & $44.26$ \small{\transparent{0.5} $\pm 0.10$}          & $38.82$ \small{\transparent{0.5} $\pm 0.77$}                   & $6$     \\
            \quad $\rho=0.3$                             & $69.33$ \small{\transparent{0.5} $\pm 0.24$}                   & $\mathbf{69.67}$ \small{\transparent{0.5} $\pm 1.27$}          & $3$                                              & $39.97$ \small{\transparent{0.5} $\pm 0.21$}                   & $\mathbf{36.50}$ \small{\transparent{0.5} $\pm 0.28$}         & $4$     & $42.60$ \small{\transparent{0.5} $\pm 0.21$}          & $\mathbf{40.65}$ \small{\transparent{0.5} $\pm 0.36$}          & $3$     \\
            \bottomrule
        \end{tabular}
    }
    \label{tab:performance-beta-02}
    \vspace{-1.5em}

\end{table*}

\subsection{Datasets and Experiment settings}
\textbf{Diverse distribution shifts scenarios.}
We establish clients with three types of distribution shifts.
For label distribution shifts, we employ LDA with $\alpha = 1.0$, as introduced by~\citep{yoshida2019hybrid,hsu2019measuring,reddi2021adaptive}.
For feature distribution shifts, we adopt the methodology from CIFAR10-C and CIFAR100-C creation~\citep{hendrycks2019benchmarking}.
Regarding concept shift, we draw inspiration from~\citep{guo2023fedconceptem,jothimurugesan2022federated}, and selectively swap labels based on the parameter $\beta$.
For example, with $\beta = 0.1$ for CIFAR10, two labels per concept are swapped, while the remaining eight labels remain unchanged. By default, we create three concepts in the experiments. More details about the construction of scenarios are included in Appendix~\ref{sec:datasets-models}.
\looseness=-1


\textbf{Baselines.}
We use FedAvg~\citep{mcmahan2016communication} as a single-model FL example.
We consider the most recently published clustered FL methods as our baselines.
For clustered FL with fixed cluster number, we select IFCA~\citep{ghosh2020efficient}, FedEM~\citep{marfoq2021federated}, FeSEM~\citep{long2023multi}, and FedRC~\citep{guo2023fedconceptem}.
For the adaptive clustering FL methods, we choose CFL~\citep{sattler2020byzantine}, ICFL~\citep{yan2023clustered}, and StoCFL~\citep{zeng2023stochastic}.
For the \algfednew variants, we enhance backbones that necessitate a predetermined number of clusters (FeSEM, FedEM, and FedRC) by integrating them with the adaptive soft clustering method and the feature extractor-classifier split mechanism of \algfednew. The \algfednew (FedSoft) is devised by substituting the local training component of \algfednew (FedEM) with proximal regularized local updates as outlined in FedSoft~\citep{ruan2022fedsoft}.
Further elaboration can be found in Appendix~\ref{sec: experiment-settings}.
\looseness=-1


\textbf{Experiment settings.}
Unless specifically mentioned, we divide the datasets into 100 clients and execute all algorithms for 200 communication rounds.
Additional settings are provided in Appendix~\ref{sec: experiment-settings}.
We conducted all experiments using MobileNet-V2~\citep{sandler2018mobilenetv2} and results on ResNet18 defer to Table~\ref{tab:resnet} of Appendix~\ref{sec:Additional Experiment Results}.

\textbf{Evaluation metrics.}
We present the following metrics to evaluate the personalization and generalization abilities of the algorithms:
(1) Validation Accuracy for evaluating personalization: The average accuracy on local validation datasets that match the distribution of local training sets.
(2) Test Accuracy evaluating generalization: The average accuracy on global shared test datasets.


\subsection{Results on diverse distribution shifts scenarios}
In this section, we compare the performance of \algfednew with other clustered FL methods. We also perform ablation studies to confirm the effectiveness of \algfednew's proposed components.

\textbf{\algfednew achieves comparable performance and better personalization-generalization trade-offs.}
We highlight some key observations in Table~\ref{tab:performance-beta-02}.
\textbf{A.} \algfednew consistently achieves superior test accuracy, with validation accuracy surpassing that of baseline methods with a similar number of clusters. This demonstrates improved efficiency and a better balance between personalization and generalization.
\textbf{B.} Soft clustering methods like \algfednew (FedEM) and \algfednew (FedRC) outperform hard clustering methods in test accuracy, showcasing their superior generalization capabilities.
\textbf{C.} While baseline methods may achieve higher validation accuracy by separating every client into different clusters (namely when the value of $K^{T}$ is close to 100), these trained clusters tend to overfit local distributions, resulting in significantly lower test accuracy.  As detailed in Table~\ref{tab:detail-icfl}, ICFL exhibits lower test accuracy when achieving the best personalization, while its personalization cannot outperform
\algfednew when achieving its best test performance.
\textbf{D.} The extended algorithms, namely \algfednew (FeSEM), \algfednew (FedEM), and \algfednew (FedRC), outperform the original methods that rely on fixed cluster numbers significantly.
Additionally, these extended algorithms can automatically adjust the number of clusters, making the algorithms more practical, as we illustrated in Figure~\ref{fig:cluster-number}. \textbf{E.} As shown in Table~\ref{tab:small-rho}, it is possible to enhance the personalized performance of \algfednew by employing a smaller $\rho$. However, this comes at the expense of reduced generalization performance.
\looseness=-1


\textbf{Ablation studies on Sec~\ref{sec:inconsistency-aware-objective}.}
We perform ablation studies on $\tilde{\mu}$, which control the distance between sample-wise weights $\omega_{i,j;k}$ and client-wise weights $\tilde{\omega}_{i;k}$ in Figures~\ref{fig:fedem-test-mu} and~\ref{fig:fedrc-test-mu}.
A larger $\tilde{\mu}$ signifies a greater difference between $\omega_{i,j;k}$ and $\tilde{\omega}_{i;k}$.
Our results show that \algfednew(FedEM) prefers smaller distance between $\omega_{i,j;k}$ and $\tilde{\omega}_{i;k}$.
However, \algfednew(FedRC) prefers larger $\tilde{\mu}$ values, highlighting the necessity of different clustering weights among samples within the same clients.


\begin{table*}[!t]
    \centering
    \caption{\textbf{Ablation studies on Sec~\ref{sec:interpretable-distance-metric}.} We conducted experiments on the CIFAR10 and CIFAR100 datasets, showcasing the highest test accuracies, the maximum number of clusters during training ($\max_{t} K^{t}$), and the final number of clusters ($K^{T}$) or each algorithm while maintaining a fixed value of $\rho = 0.3$. We used FedRC as the backbone, with 3 clusters identified as ideal by \cpB. \ding{192} Measuring client distance by gradient similarity; \ding{193} Excluding $\E_{D_i}[\tilde{\cL}_k(\zz,y;\mtheta_k)]$ and $\E_{D_i}[\tilde{\cL}_k(\zz,y;\mtheta_k)]$ from Eq~\eqref{equ:theory-distance}; \ding{194} Averaging $d_c$ over label $y$ instead of maximizing it over $y$.
        \looseness=-1}
    \resizebox{1.\textwidth}{!}{
        \begin{tabular}{l c c c c c c c c c c c c c}
            \toprule
            \multirow{2}{*}{Algorithm} & \multicolumn{3}{c}{CIFAR10, $\beta = 0.2$}            & \multicolumn{3}{c}{CIFAR10, $\beta = 0.4$} & \multicolumn{3}{c}{CIFAR100, $\beta = 0.2$} & \multicolumn{3}{c}{CIFAR100, $\beta = 0.4$}                                                                                                                                                                                                                                 \\
            \cmidrule(lr){2-4} \cmidrule(lr){5-7} \cmidrule(lr){8-10} \cmidrule(lr){11-13}
                                       & Test Acc                                              & $\max_{t} K^{t}$                           & $K^{T}$
                                       & Test Acc                                              & $\max_{t} K^{t}$                           & $K^{T}$
                                       & Test Acc                                              & $\max_{t} K^{t}$                           & $K^{T}$
                                       & Test Acc                                              & $\max_{t} K^{t}$                           & $K^{T}$                                                                                                                                                                                                                                                                                                                   \\
            \midrule
            \algfednew                 & $\mathbf{69.67}$ \small{\transparent{0.5} $\pm 1.27$} & $\mathbf{4.5}$                             & $\mathbf{3.0}$                              & $\mathbf{70.13}$ \small{\transparent{0.5} $\pm 0.42$} & $\mathbf{7.0}$ & $\mathbf{6.0}$ & $\mathbf{36.50}$ \small{\transparent{0.5} $\pm 0.28$} & $\mathbf{3.5}$ & $\mathbf{3.5}$ & $\mathbf{32.22}$ \small{\transparent{0.5} $\pm 0.20$} & $5.0$          & $\mathbf{4.0}$ \\
            $\;$ + \ding{192}          & $67.83$ \small{\transparent{0.5} $\pm 1.70$}          & $9.5$                                      & $7.0$                                       & $64.53$ \small{\transparent{0.5} $\pm 0.23$}          & $10.5$         & $10.0$         & $\mathbf{36.77}$ \small{\transparent{0.5} $\pm 0.67$} & $9.5$          & $8.5$          & $31.33$ \small{\transparent{0.5} $\pm 2.12$}          & $11.0$         & $7.5$          \\
            $\;\;\;\;$ + \ding{193}    & $56.14$ \small{\transparent{0.5} $\pm 8.11$}          & $10.5$                                     & $5.5$                                       & $50.87$ \small{\transparent{0.5} $\pm 2.26$}          & $12.5$         & $8.5$          & $34.11$ \small{\transparent{0.5} $\pm 1.58$}          & $10.0$         & $6.0$          & $\mathbf{32.75}$ \small{\transparent{0.5} $\pm 0.67$} & $7.5$          & $6.5$          \\
            $\;$ + \ding{193}          & $68.52$ \small{\transparent{0.5} $\pm 0.64$}          & $8.0$                                      & $6.0$                                       & $69.47$ \small{\transparent{0.5} $\pm 0.15$}          & $11.0$         & $8.5$          & $34.65$ \small{\transparent{0.5} $\pm 1.16$}          & $8.5$          & $5.5$          & $31.61$ \small{\transparent{0.5} $\pm 0.54$}          & $11.0$         & $7.5$          \\
            $\;$ + \ding{194}          & $68.82$ \small{\transparent{0.5} $\pm 0.59$}          & $5.5$                                      & $3.5$                                       & $65.74$ \small{\transparent{0.5} $\pm 0.09$}          & $\mathbf{7.0}$ & $7.0$          & $35.97$ \small{\transparent{0.5} $\pm 0.80$}          & $4.0$          & $\mathbf{3.5}$ & $31.72$ \small{\transparent{0.5} $\pm 0.59$}          & $\mathbf{4.5}$ & $\mathbf{4.0}$

                                       &                                                                                                                                                                                                                                                                                                                                                                                                                                \\
            \bottomrule
        \end{tabular}
    }
    \vspace{-1em}
    \label{tab:ablation-adaptive}
\end{table*}

\textbf{Ablation studies on Sec~\ref{sec:cover-soft-cluster}.}
In Figures~\ref{fig:val-scwu} and~\ref{fig:test-scwu}, we perform ablation studies on the soft clustering weight updating mechanism (w/ SCWU) introduced in Section~\ref{sec:cover-soft-cluster}.
The term w/o SCWU refers to using the traditional clustering weight updating mechanism as described in~\citep{sattler2020byzantine,zeng2023stochastic}.
The results demonstrate that our proposed SCWU consistently achieves better performance in terms of both validation and test accuracies.


\begin{figure}
    \centering
    \begin{subfigure}[b]{.23\textwidth}
        \includegraphics[width=1.\textwidth]{./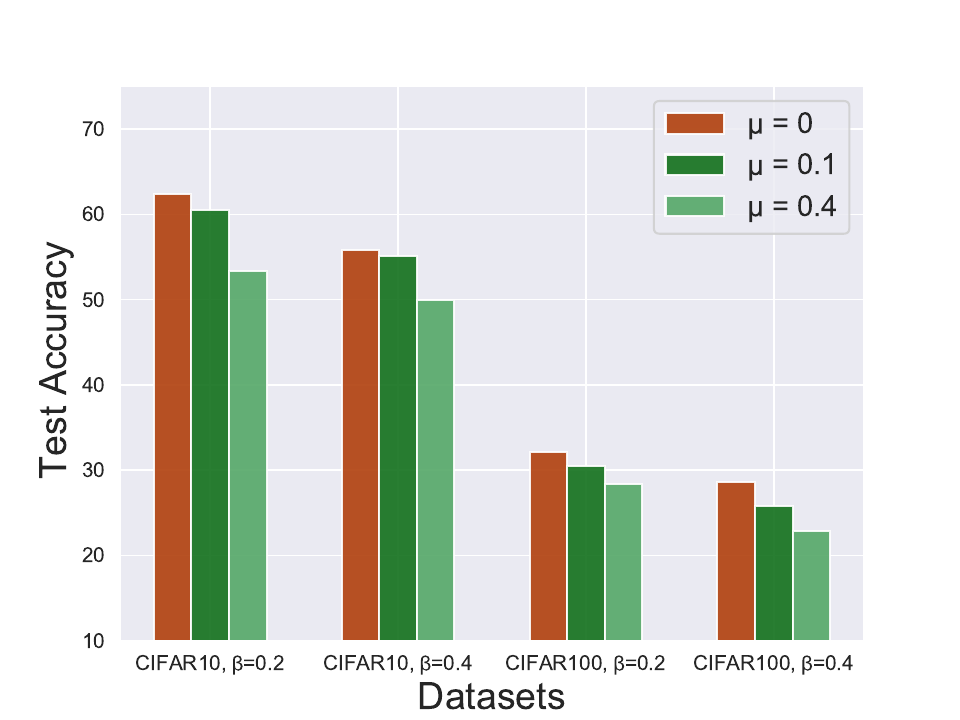}
        \caption{\tiny{\algfednew(FedEM)}}
        \label{fig:fedem-test-mu}
    \end{subfigure}
    \begin{subfigure}[b]{.23\textwidth}
        \includegraphics[width=1.\textwidth]{./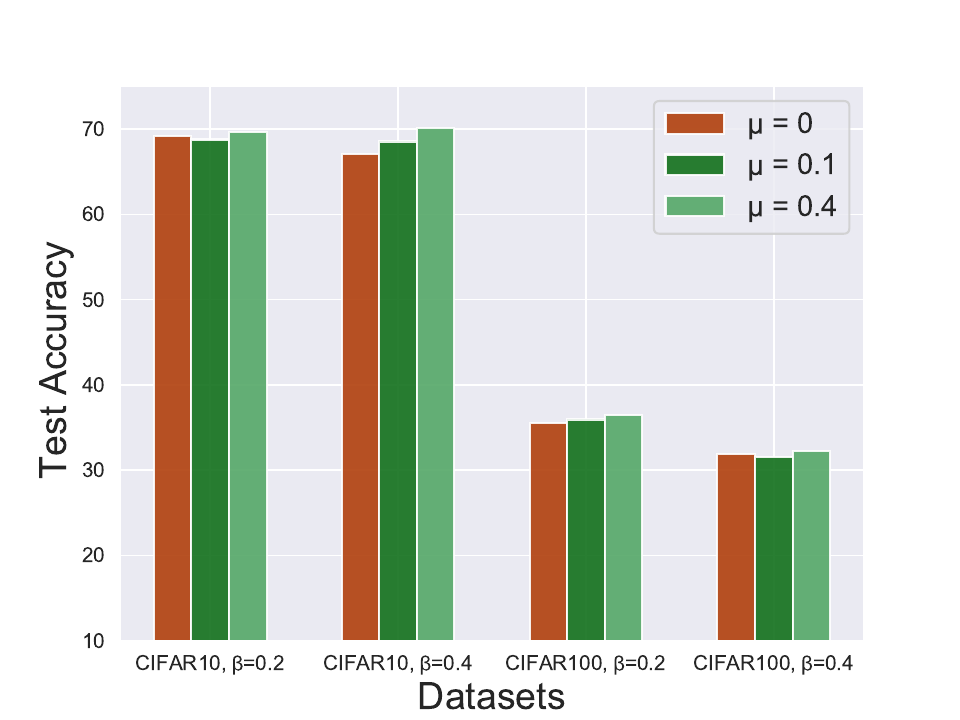}
        \caption{\tiny{\algfednew(FedRC)}}
        \label{fig:fedrc-test-mu}
    \end{subfigure}
    \begin{subfigure}[b]{.23\textwidth}
        \includegraphics[width=1.\textwidth]{./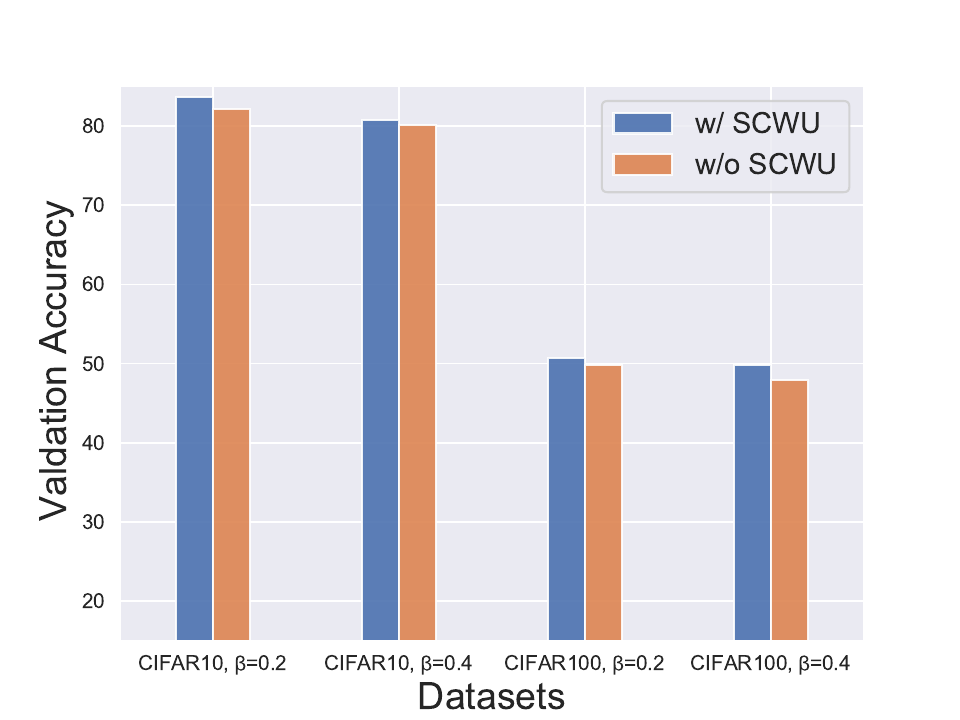}
        \caption{\tiny{Validation Acc}}
        \label{fig:val-scwu}
    \end{subfigure}
    \begin{subfigure}[b]{.23\textwidth}
        \includegraphics[width=1.\textwidth]{./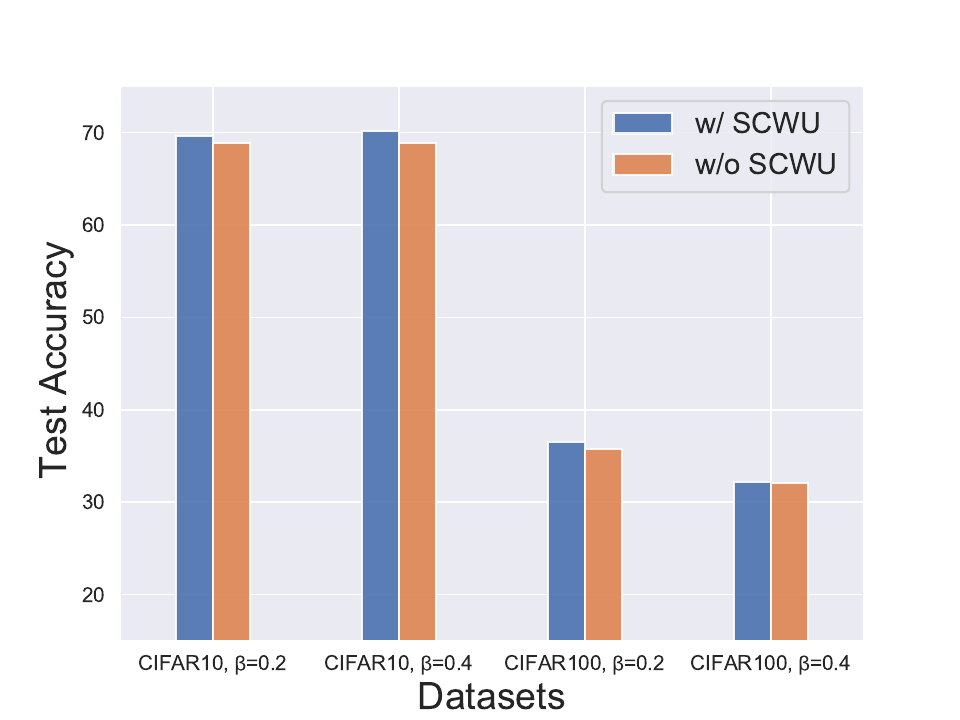}
        \caption{\tiny{Test Acc}}
        \label{fig:test-scwu}
    \end{subfigure}
    \caption{\small \textbf{Ablation studies on Sections~\ref{sec:inconsistency-aware-objective} and~\ref{sec:cover-soft-cluster}.} For Sec~\ref{sec:inconsistency-aware-objective}, we evaluated test accuracies of \algfednew using different backbones (FedEM and FedRC) and varying values of $\tilde{\mu}$, as shown in Figures~\ref{fig:fedem-test-mu} and~\ref{fig:fedrc-test-mu}. For Sec~\ref{sec:cover-soft-cluster}, we present the best Val and Test accuracy achieved by \algfednew with either FedEM or FedRC as backbones. ``w/ SCWU'' indicates the use of soft clustering weight updating mechanisms introduced in Section~\ref{sec:cover-soft-cluster}. More detailed results can be found in Tables~\ref{tab:ablation-studies-on-tier-1} and~\ref{tab:ablation-studies-on-tier-2} in Appendix~\ref{sec:Additional Experiment Results}.}
    \vspace{-1em}
    \label{fig:enter-label}
\end{figure}

\textbf{Ablation studies on techniques in Sec~\ref{sec:interpretable-distance-metric}.}
We perform ablation studies to demonstrate the effectiveness of the designed distance metrics in Sec~\ref{sec:interpretable-distance-metric}.
The ablation studies include:
\ding{192} Using gradient similarity, as in previous works~\citep{sattler2020byzantine,yan2023clustered}, instead of distance on $\cP(\zz|x;\mphi)$ and $\cP(\zz|x, y;\mphi)$, as we proposed in Equation~\ref{equ:theory-distance};
\ding{193} Remove $\E_{D_i}[\tilde{\cL}_k(\zz,y;\mtheta_k)]$ and $\E_{D_i}[\tilde{\cL}_k(\zz,y;\mtheta_k) ]$ in~\eqref{equ:theory-distance};
\ding{194} Using mean distances instead of maximum distances in \eqref{equ:theory-distance}.
The results show that \algfednew consistently achieves the highest test accuracy and produces a number of clusters closer to the ideal number than other ablation studies.

\begin{table}[!]
    \centering
    \caption{\textbf{Performance of \algfednew with small $\rho$.} We set the $\rho = 0.01$ for \algfednew, and use FedEM as the backbone algorithm.}
    \vspace{-.5em}\resizebox{.8\textwidth}{!}{
        \begin{tabular}{c c c c c}
            \toprule
            Algorithms                                          & CIFAR100 Val & CIFAR100 Test & Tiny-ImageNet Val & Tiny-ImageNet Test \\
            \midrule
            Baselines (best)                                    & 69.76        & 37.62         & 67.48             & 39.73              \\
            HCFL+ ($\rho = 0.01$)                               & 78.98        & 14.33         & 77.56             & 21.53              \\
            HCFL+ (best in Table~\ref{tab:performance-beta-02}) & 69.25        & 36.50         & 63.41             & 40.65              \\
            \bottomrule
        \end{tabular}
    }
    \label{tab:small-rho}
    \vspace{-1.5em}
\end{table}

\section{Conclusion and Limitations}
\label{sec:conclusion}
In this paper, we introduce \algframework, a comprehensive clustered FL framework that unifies existing methods while enabling the integration of diverse algorithms to gather the advantages of various clustered FL approaches.
Additionally, we identify persistent challenges unaddressed by current algorithms and propose \algfednew as a solution.
The \algframework is flexible and can generate numerous clustered FL methods by altering techniques in each tier.
Though we have chosen some typical components and demonstrated their effectiveness, conducting further performance verification with more choices in each tier would be beneficial.

\section*{Acknowledgments}
This work is supported in part by the funding from Shenzhen Institute of Artificial Intelligence and Robotics for Society, in part by the Shenzhen Key Lab of Crowd Intelligence Empowered Low-Carbon Energy Network (Grant No. ZDSYS20220606100601002), in part by Shenzhen Stability Science Program 2023, and in part by the Guangdong Provincial Key Laboratory of Future Networks of Intelligence (Grant No. 2022B1212010001).
This work is also supported in part by the Research Center for Industries of the Future (RCIF) at Westlake University, and Westlake Education Foundation.


\bibliography{main}
\bibliographystyle{iclr2025_conference}

\clearpage
\appendix


\onecolumn
{
    \hypersetup{linkcolor=black}
    \parskip=0em
    \renewcommand{\contentsname}{Contents of Appendix}
    \tableofcontents
    \addtocontents{toc}{\protect\setcounter{tocdepth}{3}}
}

Below are the key takeaways of the appendix:
\begin{itemize}[leftmargin=12pt]
    \item \textbf{Appendix~\ref{sec:Proof of EM steps}:} We provide a proof showing how ~\eqref{equ:uptate-gamma}--\eqref{equ:update-mphi} are derived, demonstrating how they solve the EM-like objective function ~\eqref{equ:fedias-objective}--~\eqref{equ:obtain-cluster-weights} in practice.
    \item \textbf{Appendix~\ref{sec:Theoretical Study on Linear Representation Case}:} We analyze the HCFL framework in the context of linear representation learning, without assuming an equal number of samples across clusters. Our findings indicate that an imbalance in sample sizes and a higher level of drift can slow down convergence.
    \item \textbf{Appendix~\ref{sec:related-works-appendix}:} A more detailed discussion of related works.
    \item \textbf{Appendix~\ref{sec:algorithms}:} We explain the rationale behind the design of the distance metric and its practical implementation. Specifically, for each pair of clients, we compute the distance between their local class prototypes for each class, as well as the distance between their feature means. The maximum of these two distances is then selected as the final distance metric for the client pair.
    \item \textbf{Appendix~\ref{sec:algorithms}:} A detailed version of the \algfednew algorithm is provided.
    \item \textbf{Appendix~\ref{sec:Additional Experiment Results}:} We report the experimental settings used in our study in detail.
    \item \item \textbf{Appendix~\ref{sec:Additional Experiment Results}:} Ablation studies on the impact of hyperparameters and algorithmic components show that \algfednew is robust to variations in hyperparameters, and that all proposed components provide individual performance gains.
\end{itemize}

\section{Proof of Optimization Steps}

\label{sec:Proof of EM steps}

\begin{theorem}

    Given objective function $\cL(\mphi, \mTheta, \mOmega, \tilde{\mOmega})$

    \begin{align}
        \cL(\mphi, \mTheta, \mOmega, \tilde{\mOmega}) & = \frac{1}{N} \sum_{i=1}^{M} \sum_{j=1}^{N_i} \log \left( \sum_{k=1}^{K} \omega_{i,j;k} \cL_k(\xx_{i,j}, y_{ij}; \mphi, \mtheta_k) \right) \nonumber \\
                                                      & + \sum_{i=1}^{M} \sum_{j=1}^{N_i} \lambda_{i,j} \left( \sum_{k=1}^{K} \omega_{i,j;k} - 1 \right) \nonumber                                           \\
                                                      & - \mu \sum_{i=1}^{M} \sum_{j=1}^{N_i} \left( \sum_{k=1}^{K} \tilde{\omega}_{i;k} \log \frac{\tilde{\omega}_{i;k}}{\omega_{i,j;k}} \right)  \, ,
    \end{align}
    and we define $\tilde{\mOmega} = \{ \tilde{\omega}_{i;k} | \forall i, k \}$, then $\tilde{\mOmega}$ is obtained by
    \begin{align}
        \tilde{\mOmega} & = \argmin_{\tilde{\mOmega}} \left| \max_{\mOmega} \cL(\mphi, \mTheta, \mOmega, \mOmega) -  \cL(\mphi, \mTheta, \tilde{\mOmega}, \tilde{\mOmega}) \right| \, .
    \end{align}

    Then E-M steps are obtained by maximizing $\cL(\mphi, \mTheta, \mOmega, \tilde{\mOmega})$.


    \begin{align}
        \gamma_{i,j;k}^{t+1}         & = \frac{ \omega_{i,j;k}^{t} \cL_k(\xx_{i,j}, y_{ij}; \mtheta_k^{t})}{\sum_{n=1}^{K} \omega_{i,j;n}^{t} \cL_k(\xx_{i,j}, y_{ij}; \mtheta_n^{t})} \, ,                                                                 \\
        \tilde{\gamma}_{i,j;k}^{t+1} & = \frac{ \tilde{\omega}_{i;k}^{t} \cL_k(\xx_{i,j}, y_{ij}; \mtheta_k^{t})}{\sum_{n=1}^{K} \omega_{i;n}^{t} \cL_k(\xx_{i,j}, y_{ij}; \mtheta_n^{t})} \, ,                                                             \\
        \tilde{\omega}_{i;k}^{t+1}   & = \frac{1}{N_i} \sum_{j=1}^{N_i} \tilde{\gamma}_{i,j;k}^{t+1} \, ,                                                                                                                                                   \\
        \omega_{i,j;k}^{t+1}         & = \frac{\gamma_{i,j;k}^{t+1}}{1 + \mu N} + \frac{\mu N}{1 + \mu N} \tilde{\omega}_{i;k}^{t+1} \, ,                                                                                                                   \\
        \mtheta_k^{t+1}              & = \mtheta_k^{t} - \eta \sum_{i=1}^{M} \sum_{j=1}^{N_i} \frac{\gamma_{i,j;k}^{t+1}}{\cL_k(\xx_{ij}, y_{ij}, \mphi^{t}, \mtheta_k^{t})} \nabla_{\mtheta_k} \cL_k (\xx_{ij}, y_{ij}, \mphi^{t}, \mtheta_k^{t}) \, ,     \\
        \mphi^{t+1}                  & = \mphi^{t} - \eta \sum_{i=1}^{M} \sum_{j=1}^{N_i} \sum_{k=1}^{K} \frac{\gamma_{i,j;k}^{t+1}}{\cL_k(\xx_{ij}, y_{ij}, \mphi^{t}, \mtheta_k^{t})} \nabla_{\mphi} \cL_k (\xx_{ij}, y_{ij}, \mphi^{t}, \mtheta_k^{t+1})
    \end{align}


    \label{the: EM steps}
\end{theorem}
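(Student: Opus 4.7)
The plan is to derive the stated updates as a block-coordinate ascent on an EM-style surrogate lower bound of $\cL$, with the technicalities centered on the definition of $\tilde{\mOmega}$. The starting observation is the Jensen lower bound for the log-sum term: for any probability vector $\gamma_{i,j;\cdot}$ with $\gamma_{i,j;k}\ge 0$ and $\sum_k \gamma_{i,j;k} = 1$,
\begin{align*}
\log\!\Big(\sum_{k=1}^{K} \omega_{i,j;k}\,\cL_k(\xx_{i,j}, y_{ij}; \mphi, \mtheta_k)\Big) \;\ge\; \sum_{k=1}^{K} \gamma_{i,j;k} \log \frac{\omega_{i,j;k}\,\cL_k(\xx_{i,j}, y_{ij}; \mphi, \mtheta_k)}{\gamma_{i,j;k}} \,,
\end{align*}
with equality iff $\gamma_{i,j;k}$ is proportional to $\omega_{i,j;k}\cL_k$. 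Substituting this bound into $\cL$ yields a surrogate $Q(\mphi, \mTheta, \mOmega, \tilde{\mOmega}; \gamma)$ that is jointly concave in $\gamma$ and in $\omega$, and that shares its gradients in $\mtheta_k$ and $\mphi$ with $\cL$ at the tight choice of $\gamma$. The derivation then consists of one block-coordinate ascent step on $Q$ from the current iterate.

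First I would do the E-step: at $(\mphi^{t}, \mTheta^{t}, \mOmega^{t}, \tilde{\mOmega}^{t})$, tighten the bound by setting $\gamma_{i,j;k}^{t+1}$ to $\omega_{i,j;k}^{t}\cL_k/\sum_n \omega_{i,j;n}^{t}\cL_n$; the same construction applied with $\tilde{\omega}_{i;k}^{t}$ in place of $\omega_{i,j;k}^{t}$ gives $\tilde{\gamma}_{i,j;k}^{t+1}$. For the M-step in $\omega$, I would collect the $\omega$-dependent part of $Q$, namely $\tfrac{1}{N}\sum \gamma_{i,j;k}\log \omega_{i,j;k} + \mu \sum \tilde{\omega}_{i;k} \log \omega_{i,j;k}$, together with the Lagrangian term $\sum \lambda_{i,j}(\sum_k \omega_{i,j;k} - 1)$ already present in the objective. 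Setting the $\omega_{i,j;k}$-derivative to zero yields $\omega_{i,j;k} \propto \gamma_{i,j;k}/N + \mu\,\tilde{\omega}_{i;k}$; the simplex constraint fixes $\lambda_{i,j} = -(1/N + \mu)$ and gives exactly $\omega_{i,j;k}^{t+1} = \gamma_{i,j;k}^{t+1}/(1 + \mu N) + \mu N/(1 + \mu N)\cdot \tilde{\omega}_{i;k}^{t+1}$. The $\mtheta_k$ and $\mphi$ updates then come from one gradient step on the only remaining surrogate term, $\sum \gamma_{i,j;k}\log \cL_k$, rewritten via $\nabla \log \cL_k = \nabla \cL_k / \cL_k$, which matches the stated formulas verbatim.

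The subtle and main obstacle is to justify the update for $\tilde{\mOmega}$, since $\tilde{\mOmega}$ is not a free variable but the minimizer of $\bigl|\max_{\mOmega}\cL(\mphi, \mTheta, \mOmega, \tilde{\mOmega}) - \cL(\mphi, \mTheta, \tilde{\mOmega}, \tilde{\mOmega})\bigr|$. The plan is to observe that setting $\mOmega = \tilde{\mOmega}$ makes the KL penalty $\cA_2$ vanish, so $\cL(\mphi, \mTheta, \tilde{\mOmega}, \tilde{\mOmega})$ reduces to the unregularized mixture log-likelihood, and the argmin constraint is then equivalent to asking $\tilde{\mOmega}$ to be a fixed point of the $\omega$-maximization: plugging $\tilde{\omega}_{i;k}$ into the already-derived M-step formula for $\omega_{i,j;k}$, the fixed-point condition $\omega_{i,j;k} = \tilde{\omega}_{i;k}$ forces $\tilde{\omega}_{i;k} = \gamma_{i,j;k}$ in the tight regime. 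Averaging this condition over $j$ (the only way to reconcile the $(i,j)$-dependence of $\gamma$ with the $i$-indexed $\tilde{\omega}$) produces $\tilde{\omega}_{i;k}^{t+1} = \tfrac{1}{N_i}\sum_{j} \tilde{\gamma}_{i,j;k}^{t+1}$, which is precisely one M-step of the reduced problem and matches the stated formula. Formalizing that this single-step update is the correct block-coordinate realization of the argmin constraint (rather than requiring iterating to convergence inside each outer round) is the one delicate point; the remaining pieces assemble mechanically into the six update equations.
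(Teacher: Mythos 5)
Your derivation is correct and follows essentially the same route as the paper's: the paper differentiates $\cL$ directly in $\omega_{i,j;k}$, identifies $\gamma_{i,j;k}=\omega_{i,j;k}\cL_k/\sum_n\omega_{i,j;n}\cL_n$ in the resulting first-order condition, and uses the simplex constraint to pin $\lambda_{i,j}=-(1+\mu N)/N$, which is exactly what your M-step on the Jensen surrogate produces (the two presentations coincide because the surrogate is tight, with matching gradients, at the E-step choice of $\gamma$). The $\mtheta_k$ and $\mphi$ updates are likewise obtained in the paper as gradient-ascent steps using $\nabla\cL_k/\cL_k$, matching yours. The one place where your argument is weaker than the paper's is the $\tilde{\mOmega}$ update: the paper observes that $\cL(\mphi,\mTheta,\tilde{\mOmega},\tilde{\mOmega})$ is the value of the constrained problem $\omega_{i,j;k}=\tilde{\omega}_{i;k}$, hence $\max_{\mOmega}\cL\ge\cL(\tilde{\mOmega},\tilde{\mOmega})$, drops the absolute value, and reduces the $\argmin$ to $\argmax_{\tilde{\mOmega}}\cL(\mphi,\mTheta,\tilde{\mOmega},\tilde{\mOmega})$, after which the per-client first-order condition with multiplier $\lambda_i$ (the standard FedEM/FedRC step) yields $\tilde{\omega}_{i;k}^{t+1}=\frac{1}{N_i}\sum_j\tilde{\gamma}_{i,j;k}^{t+1}$ rigorously. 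Your fixed-point argument instead forces $\tilde{\omega}_{i;k}=\gamma_{i,j;k}$ for every $j$, which is generically infeasible, and the subsequent ``average over $j$'' is a heuristic repair rather than a derivation; you should replace it by writing the stationarity condition of the collapsed objective $\frac{1}{N}\sum_{i,j}\log\bigl(\sum_k\tilde{\omega}_{i;k}\cL_k\bigr)$ under $\sum_k\tilde{\omega}_{i;k}=1$, which gives the $\frac{1}{N_i}\sum_j$ formula directly. With that substitution your proof is complete and equivalent to the paper's.
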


\begin{proof}
    Let's begin by assuming $\tilde{\omega}_{i;k}^{t+1}$ is given for each round $t$, then we will discuss how to compute $\omega_{i,j;k}^{t+1}$. Consider the objective function $\cL(\mphi, \mTheta, \mOmega, \tilde{\mOmega})$, we have
    \begin{align}
        \derive{\cL(\mphi, \mTheta, \mOmega, \tilde{\mOmega})}{\omega_{i,j;k}}
        = \frac{1}{N} \frac{\cL_k(\xx_{i,j}, y_{ij}; \mtheta_n)}{\sum_{n=1}^{K} \omega_{i,j;n} \cL_n(\xx_{i,j}, y_{ij}; \mtheta_n)} + \lambda_{i,j} + \mu \frac{\tilde{\omega}_{i;k}}{\omega_{i,j;k}} \, .
    \end{align}
    Then define
    \begin{align}
        \gamma_{i,j;k} = \frac{\omega_{i,j;k} \cL_k(\xx_{i,j}, y_{ij}; \mtheta_n)}{\sum_{n=1}^{K} \omega_{i,j;n} \cL_n(\xx_{i,j}, y_{ij}; \mtheta_n)} \, ,
    \end{align}
    and take $\derive{\cL(\mTheta, \mOmega)}{\omega_{i,j;k}} = 0$ we have
    \begin{align}
        \frac{\gamma_{i,j;k}}{N} + \mu \tilde{\omega}_{i;k} = - \lambda_{i,j} \omega_{i,j;k} \, .
    \end{align}
    Then we have
    \begin{align}
        \omega_{i,j;k} = - \frac{1}{\lambda_{i,j}} \left( \frac{\gamma_{i,j;k}}{N} + \mu \tilde{\omega}_{i;k} \right) \, .
    \end{align}
    Because we have $\sum_{k=1}^{K} \omega_{i,j;k} = 1$, we have
    \begin{align}
        1             & = - \frac{1}{\lambda_{i,j}} \left( \frac{1}{N} + \mu \right) \\
        \lambda_{i,j} & = - \frac{1 + \mu N}{N} \, .
    \end{align}
    Then we have
    \begin{align}
        \omega_{i,j;k} = \frac{\gamma_{i,j;k}}{1 + \mu N} + \frac{\mu N}{1 + \mu N} \tilde{\omega}_{i;k} \, .
    \end{align}
    Then consider to optimize $\mtheta_k$, we have,
    \begin{align}
         & \derive{\cL (\mphi, \mathbf{\Theta}, \mOmega, \tilde{\mOmega}) }{\mtheta_k}  \nonumber                                                                                                                                   \\
         & = \frac{1}{N} \sum_{i=1}^{M} \sum_{j=1}^{N_i} \frac{\omega_{i,j;k} }{\sum_{n=1}^{K} \omega_{i,j;n} \cL_n (\xx_{ij}, y_{ij}; \mphi, \mtheta_n)} \cdot \derive{\cL_k (\xx_{ij}, y_{ij}; \mphi, \mtheta_k)}{\mtheta_k} \, , \\
         & = - \frac{1}{N} \sum_{i=1}^{M} \sum_{j=1}^{N_i} \frac{\gamma_{i,j;k}^{t+1}}{\cL_k(\xx_{ij}, y_{ij}; \mphi, \mtheta_k)} \nabla_{\mtheta_k} \cL_k (\xx_{ij}, y_{ij}; \mphi, \mtheta_k) \, .
    \end{align}
    Finally, consider to optimize $\mphi$, we have
    \begin{align}
         & \derive{\cL (\mphi, \mathbf{\Theta}, \mOmega, \tilde{\mOmega}) }{\mphi} \nonumber                                                                                                                                              \\
         & = \frac{1}{N} \sum_{i=1}^{M} \sum_{j=1}^{N_i} \sum_{k=1}^{K} \frac{\omega_{i,j;k} }{\sum_{n=1}^{K} \omega_{i,j;n} \cL_n (\xx_{ij}, y_{ij}; \mphi, \mtheta_n)} \cdot \derive{\cL_k (\xx_{ij}, y_{ij}; \mphi, \mtheta_k)}{\mphi}
        \, ,                                                                                                                                                                                                                              \\
         & = - \frac{1}{N} \sum_{i=1}^{M} \sum_{j=1}^{N_i} \sum_{k=1}^{K} \frac{\gamma_{i,j;k}^{t+1}}{\cL_k(\xx_{ij}, y_{ij}; \mphi, \mtheta_k)} \nabla_{\mphi} \cL_k (\xx_{ij}, y_{ij}; \mphi, \mtheta_k)
        \, .
    \end{align}
    Because if hard to find a close-form solution to $\derive{\cL (\mphi, \mathbf{\Theta}, \mOmega, \tilde{\mOmega}) }{\mtheta_k} = 0$ when $\mtheta_k$ is the parameter of deep neural networks, we use gradient ascent to optimize $\mtheta_k$. The same method is used for feature extractors $\mphi$.

    Then the remaining thing is how to decide $\tilde{\omega}_{i;k}$. From the formulation of objective function, the $\tilde{\omega}_{i;k}$ is decided by
    \begin{align}
        \tilde{\mOmega} & = \argmin_{\tilde{\mOmega}} \left| \max_{\mOmega} \cL(\mphi, \mTheta, \mOmega, \mOmega) -  \cL(\mphi, \mTheta, \tilde{\mOmega}, \tilde{\mOmega}) \right| \, .
    \end{align}
    To solve this problem, firstly, we can transform the definition of $\cL(\mphi, \mTheta, \tilde{\mOmega}, \tilde{\mOmega})$ to
    \begin{align}
        \cL(\mphi, \mTheta, \tilde{\mOmega}, \tilde{\mOmega}) & = \frac{1}{N} \sum_{i=1}^{M} \sum_{j=1}^{N_i} \log \left( \sum_{k=1}^{K} \omega_{i,j;k} \cL_k(\xx_{i,j}, y_{ij}; \mphi, \mtheta_k) \right) \nonumber \\
                                                              & + \sum_{i=1}^{M} \sum_{j=1}^{N_i} \lambda_{i,j} \left( \sum_{k=1}^{K} \omega_{i,j;k} - 1 \right) \nonumber                                           \\
                                                              & - \mu \sum_{i=1}^{M} \sum_{j=1}^{N_i} \left( \sum_{k=1}^{K} \tilde{\omega}_{i;k} \log \frac{\tilde{\omega}_{i;k}}{\omega_{i,j;k}} \right)  \, ,      \\
        s.t.\;                                                & \; \omega_{i,j;k} = \tilde{\omega}_{i;k}, \; \forall i,j,k \, .
    \end{align}
    Then by removing the constrains, we can always find
    \begin{align}
        \max_{\mOmega} \cL(\mphi, \mTheta, \mOmega, \tilde{\mOmega}) \ge \tilde{\cL}(\mphi, \mTheta, \tilde{\mOmega}, \tilde{\mOmega}) \, .
    \end{align}
    Then we have
    \begin{align}
        \tilde{\mOmega} & = \argmin_{\tilde{\mOmega}} \left| \max_{\mOmega} \cL(\mphi, \mTheta, \mOmega, \tilde{\mOmega}) -  \cL(\mphi, \mTheta, \tilde{\mOmega}, \tilde{\mOmega}) \right| \\
                        & = \argmin_{\tilde{\mOmega}} \left( \max_{\mOmega} \cL(\mphi, \mTheta, \mOmega, \tilde{\mOmega}) -  \cL(\mphi, \mTheta, \tilde{\mOmega}, \tilde{\mOmega}) \right) \\
                        & = \argmax_{\tilde{\mOmega}} \cL(\mphi, \mTheta, \tilde{\mOmega}, \tilde{\mOmega})
    \end{align}
    Then we can obtain the results by directly use the proof in \citep{guo2023fedconceptem} and \citep{marfoq2021federated}.
\end{proof}

\section{Theoretical Study on Linear Representation Case}
\label{sec:Theoretical Study on Linear Representation Case}

\subsection{Definition and Assumptions}

In this section, we target on the effectiveness of split-feature-classifier method. Therefore, we focus on a case study that clients are solving a linear representation learning problem, similar to the analysis in \citep{collins2021exploiting,tziotis2022straggler}, and assume the optimal $\omega_{i,j;k}$ is already given.
In detail, we assume local data $\xx_{i,j} \in \R^{d}$, and $\mphi$ is a global shared projection onto a $c$-dimensional subspace $\R^{d}$, which is parameterized by matrix $\mB \in \R^{d \times c} $. Besides, for each underlying distribution, we have $\mtheta_k \in \R^{c}$, and the labels of each sample $\xx_{i,j}$ belongs to the distribution $k$ is given by $y_{i,j} = {\mtheta^{*}_{k}}^{T} {\mB^{*}}^{T} x_{i,j} + z_{k}$, where $\mtheta^{*}_{k}$ and $\mB^{*}$ are ground truth parameters, and $z_{k} \sim \cN (0, \sigma^{2})$ is to capture the heterogeneous between $K$ underlying distributions. Under these assumptions, the global empirical risk is
\begin{align}
    \min_{\mB, \mTheta} \frac{1}{2N} \sum_{i=1}^{M} \sum_{j=1}^{N_i} \left( y_{i,j} - \sum_{k=1}^{K} \omega_{i,j;k} \mtheta_{k}^{T} \mB^{T} \xx_{i,j} \right)^{2}
    \label{equ: linear objective function appendix}
\end{align}
The distance that measuring the distance between sub-spaces is defined as follows~\citep{jain2013low,collins2021exploiting,tziotis2022straggler},
\begin{definition} The principal angle distance between the column spaces of $\mB_1, \mB_2 \in \R^{d \times c}$ is given by,
\begin{align}
    \textstyle{dist} (\mB_1, \mB_2) = \norm{\hat{\mB}_{1, \perp}^{T} \hat{\mB}_2}_2 \, ,
\end{align}
where $\hat{\mB}_{1, \perp}$ and $\hat{\mB}_2$ are orthogonal matrices satisfying $span(\hat{\mB}_{1, \perp}) = span(\mB_1)^{\perp}$, and $span(\hat{\mB}_2) = span(\mB_2)$.
\end{definition}

\begin{definition}[$\norm{\mA}_2$-sub-Gaussian] For a random vector $\xx \in \R^{d}$ and a fixed matrix $\mA \in \R^{d \times c}$, the vector $\mA^{T} \xx$ is called $\norm{\mA}_2$-sub-Gaussian if $\yy^{T} \mA^{T} \xx$ is sub-Gaussian with sub-Gaussian norm $\norm{\mA}_2 \norm{\yy}_2$ for all $\yy \in \R^{c}$, i.e., $\Eb{\exp (\yy^{T} \mA^{T} \xx)} \le \exp(\norm{\mA}_2^{2} \norm{\yy}_2^{2} / 2)$.
\label{def: A sub-gaussian}
\end{definition}

\begin{assumption}[Sub-Gaussian design] The samples $\xx_{i,j} \in \R^{d}$ are i.i.d. with mean $\mathbf{0}$, covariance $\mI_d$, and are $\mI_d$-sub-Gaussian, i.e., $\Eb{e^{\vv^{T} \xx_{i,j}}} \le e^{\norm{\vv}_2^{2} / 2}$ for all $\vv \in \R^{d}$.
\label{ass: Sub-Gaussian design}
\end{assumption}

\begin{assumption}[Underlying distribution diversity] Let $\bar{\sigma}_{min, *}$ be the minimum singular value of any matrix $\bar{W} \in \R^{K \times c}$ with rows being a $K$-sized subset of ground-truth distribution-specific parameters $\{\mtheta_1, \cdots, \mtheta_K\}$. Then $\bar{\sigma}_{min, *} > 0$.
\label{ass: Underlying distribution diversity}
\end{assumption}

\begin{assumption}[Client normalization] The ground-truth distribution-specific parameters satisfy $\norm{\mtheta_{k}^{*}}_2 = \sqrt{c}$ for all $k \in [K]$, and $\mB^{*}$ has orthogonal columns.
\label{ass: Client normalization}
\end{assumption}

\begin{assumption}[Binary weights] We assume the value of $\omega_{i,j;k}$ is given in $\{0, 1\}$.
\label{ass: Binary weights}
\end{assumption}

Assumption~\ref{ass: Sub-Gaussian design}, ~\ref{ass: Underlying distribution diversity}, ~\ref{ass: Client normalization} are widely used assumptions in the analysis of linear representation learning problem
\citep{collins2021exploiting,tziotis2022straggler}. We introduce Assumption~\ref{ass: Binary weights} to simplify the analysis, and this assumption match the observation in our numerical experiments that the max weights $\max_{k} [\omega_{i,j;k}]$ usually close to $1$ when converge.

\subsection{Preliminary and Lemmas}

We first introduce some important lemmas that we would like to use in the following parts of proof. 

\begin{lemma}
Given orthogonal matrix $\hat{\mB} \in \R^{d \times c}$, matrix $\mX \in \R^{N \times d}$ that each row of $\mX$ is sub-Gaussian (Assumption~\ref{ass: Sub-Gaussian design}), and matrix $\bar{\mOmega}_k \in \R^{N \times d}$ valued by $\{ 0, 1 \}$. Then let $\delta_k = \frac{12 \cC_{1;k} c^{3/2} \sqrt{\log (M)}}{\sqrt{\hat{N}_k}}$, we have
\begin{align}
     \sigma_{\min} \left( \frac{1}{N}  \hat{\mB}^{T} \left( \bar{\mOmega}_k^{T} \odot \mX^{T} \right) \left( \mX \odot \bar{\mOmega}_k \right) \hat{\mB} \right)  \ge \frac{\hat{N}_k}{N} \left(1 - \delta_k \right) \, ,
\end{align}
with probability at least $1 - \exp (121 c^{3} \log (M))$, and $\hat{N}_k = \sum_{i=1}^{M} \sum_{j=1}^{N_i} \1_{\omega_{i,j;k} = 1}$.
\label{lem: lemma 1}
\end{lemma}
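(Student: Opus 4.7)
}
The plan is to recognize that the Hadamard product with the $\{0,1\}$-valued mask $\bar{\mOmega}_k$ simply restricts the sum to the set of indices $\cI_k := \{(i,j) : \omega_{i,j;k}=1\}$ of size $\hat{N}_k$. Concretely, I would first rewrite
\begin{align*}
\hat{\mB}^{T}\bigl(\bar{\mOmega}_k^{T}\odot \mX^{T}\bigr)\bigl(\mX\odot \bar{\mOmega}_k\bigr)\hat{\mB}
\;=\;\sum_{(i,j)\in\cI_k}\hat{\mB}^{T}\xx_{i,j}\xx_{i,j}^{T}\hat{\mB}
\;=\;\sum_{(i,j)\in\cI_k}\zz_{i,j}\zz_{i,j}^{T},
\end{align*}
where $\zz_{i,j}:=\hat{\mB}^{T}\xx_{i,j}\in\R^{c}$. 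Since $\hat{\mB}$ has orthonormal columns and $\xx_{i,j}$ is $\mI_d$-sub-Gaussian with $\E[\xx_{i,j}\xx_{i,j}^{T}]=\mI_d$ (Assumption~\ref{ass: Sub-Gaussian design}), the vectors $\zz_{i,j}$ are i.i.d.\ $\mI_c$-sub-Gaussian in the sense of Definition~\ref{def: A sub-gaussian} with $\E[\zz_{i,j}\zz_{i,j}^{T}]=\hat{\mB}^{T}\hat{\mB}=\mI_c$. In particular, for any unit $\vv\in\R^c$, $\lin{\vv,\zz_{i,j}}$ is mean-zero sub-Gaussian with parameter $1$, so $\lin{\vv,\zz_{i,j}}^{2}-1$ is mean-zero sub-exponential.

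Next, I would apply a standard sub-Gaussian covariance concentration argument to the sum above. The idea is to fix a $\tfrac{1}{4}$-net $\cN\subset S^{c-1}$ with $|\cN|\le 9^{c}$, apply Bernstein's inequality to each $\vv\in\cN$ to control $\bigl|\tfrac{1}{\hat{N}_k}\sum_{(i,j)\in\cI_k}\lin{\vv,\zz_{i,j}}^{2}-1\bigr|$, and then union-bound over $\cN$ to obtain a uniform bound of the form
\begin{align*}
\Bigl\|\tfrac{1}{\hat{N}_k}\sum_{(i,j)\in\cI_k}\zz_{i,j}\zz_{i,j}^{T}-\mI_c\Bigr\|_{2}\;\le\;\delta_k
\end{align*}
with probability at least $1-\exp(-121\,c^{3}\log M)$. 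Choosing the Bernstein deviation parameter $t=\Theta\bigl(c^{3/2}\sqrt{\log M/\hat{N}_k}\bigr)$ balances the net-entropy term $\exp(Cc)$ against the desired exponential failure probability $\exp(-121 c^{3}\log M)$, and absorbing the universal constants and the sub-Gaussian norm into $\cC_{1;k}$ yields exactly $\delta_k=12\cC_{1;k}c^{3/2}\sqrt{\log M}/\sqrt{\hat{N}_k}$. A final application of Weyl's inequality then gives $\sigma_{\min}\!\bigl(\sum_{(i,j)\in\cI_k}\zz_{i,j}\zz_{i,j}^{T}\bigr)\ge\hat{N}_k(1-\delta_k)$, and dividing by $N$ delivers the stated bound.

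The main obstacle will be tracking the precise constants so that the exponent of the net-plus-Bernstein bound lines up cleanly with the $c^{3/2}\sqrt{\log M}$ scaling in $\delta_k$ and with the $121\,c^{3}\log M$ in the failure probability; in particular, one must be careful that the sub-exponential norm of $\lin{\vv,\zz_{i,j}}^{2}-1$ (which is where $\cC_{1;k}$ enters) is independent of $\hat{\mB}$ thanks to the $\|\hat{\mB}\|_{2}=1$ normalization. Once this bookkeeping is done, the rest of the argument is a routine application of Definition~\ref{def: A sub-gaussian} together with a covering/Bernstein combination, without any additional dependence on Assumptions~\ref{ass: Underlying distribution diversity}--\ref{ass: Binary weights}.
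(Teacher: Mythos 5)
Your proposal is correct and follows essentially the same route as the paper: both reduce the masked Gram matrix to a sum of $\hat{N}_k$ i.i.d.\ sub-Gaussian vectors $\hat{\mB}^{T}\xx_{i,j}$ with identity covariance and then invoke two-sided covariance concentration to lower-bound the minimum singular value. The only difference is that the paper cites Vershynin's Theorem 4.6.1 directly for that concentration step, whereas you propose to reprove it via the standard $\tfrac{1}{4}$-net plus Bernstein argument, which is exactly the proof of that theorem.
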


\begin{proof}
    Firstly, we can rewrite 
    \begin{align}
        \frac{1}{N}  \hat{\mB}^{T} \left( \bar{\mOmega}_k^{T} \odot \mX^{T} \right) \left( \mX \odot \bar{\mOmega}_k \right) \hat{\mB} 
        & = \frac{\hat{N}_k}{N} \sum_{i=1}^{M} \sum_{j=1}^{N_i} \left( \frac{\omega_{i,j;k}}{\sqrt{\hat{N}_k}} \hat{\mB}^{T} \xx_{i,j} \right) \left( \frac{\omega_{i,j;k}}{\sqrt{\hat{N}_k}} \hat{\mB}^{T} \xx_{i,j} \right)^{T} \, , \\
        & = \frac{\hat{N}_k}{N} \sum_{\omega_{i,j;k} = 1}^{N} \left( \frac{1}{\sqrt{\hat{N}_k}} \hat{\mB}^{T} \xx_{i,j} \right) \left( \frac{1}{\sqrt{\hat{N}_k}} \hat{\mB}^{T} \xx_{i,j} \right)^{T} \, ,
    \end{align}
where $\hat{N}_k = \sum_{i=1}^{M} \sum_{j=1}^{N_i} \1_{\omega_{i,j;k} = 1}$. Define $\vv_{i,j;k} = \frac{\omega_{i,j;k}}{\sqrt{\hat{N}_k}} \hat{\mB}^{T} \xx_{i,j}$, by Definition~\ref{def: A sub-gaussian} and Assumption~\ref{ass: Sub-Gaussian design}, we can observe that each  $\vv_{i,j;k}$ is either $\norm{\frac{1}{\sqrt{\hat{N}_k}} \hat{\mB}^{T}}_2$-sub-Gaussian or $\vv_{i,j;k} = \boldsymbol{0}$. Then we can define $\mV_k \in \R^{\hat{N}_k \times c}$, and each row of $\mV_k$ is $\vv_{i,j;k}$ that $\vv_{i,j;k} \not = 0$. Then we have
\begin{align}
    \frac{1}{N}  \hat{\mB}^{T} \left( \bar{\mOmega}_k^{T} \odot \mX^{T} \right) \left( \mX \odot \bar{\mOmega}_k \right) \hat{\mB} 
        & = \frac{\hat{N}_k}{N} \mV_k^{T} \mV_k \, .
\end{align}
Then based on Theorem 4.6.1, Equation (4.22) in \citep{vershynin2018high}, we have
\begin{align}
    \sigma_{min} \left( \frac{1}{N}  \hat{\mB}^{T} \left( \bar{\mOmega}_k^{T} \odot \mX^{T} \right) \left( \mX \odot \bar{\mOmega}_k \right) \hat{\mB}  \right) 
    = \frac{\hat{N}_k}{N} \sigma_{min} \left( \mV_k^{T} \mV_k \right) 
    \ge \frac{\hat{N}_k}{N} \left(1 - \delta_k \right)
\end{align} \, , 
with probability at least $1 - \exp(-\left( \delta_k \sqrt{N_k} / \cC_{1;k} - \sqrt{C} \right)^2)$ for constant $\cC_{1;k}$. We then set $\delta_k = \frac{12 \cC_{1;k} c^{3/2} \sqrt{\log (M)}}{\sqrt{\hat{N}_k}}$, and we have
\begin{align}
    1 - \exp(-\left( \delta_k \sqrt{N_k} / \cC_{1;k} - \sqrt{C} \right)^2)
    \ge
    1 - \exp (121 c^{3} \log (M)) \, ,
\end{align}
which finish the proof.
\end{proof}

\begin{lemma}
Given the objective function defined in Equation~\eqref{equ: linear objective function appendix}
\begin{align}
    \min_{\mB, \mTheta} \frac{1}{2N} \sum_{i=1}^{M} \sum_{j=1}^{N_i} \left( y_{i,j} - \sum_{k=1}^{K} \omega_{i,j;k} \mtheta_{k}^{T} \mB^{T} \xx_{i,j} \right)^{2} \, ,
\end{align}
define the matrix form of $\xx_{i,j}$, $y_{i,j}$, and $\omega_{i,j;k}$ by $\mX \in \R^{N \times d}$, $\mY \in \R^{N}$, $\mOmega_k \in \R^{N}$, and define $\bar{\mOmega}_k \in \R^{N \times d}$ by repeat $\mOmega_k$ for $d$ times, we can have the following optimization steps to solve the above objective function
\begin{align}
    \mtheta_k^{t+1} & = \left( \frac{1}{N} [\hat{\mB}^{t}]^{T} \left( \bar{\mOmega}_k^{T} \odot \mX^{T} \right) \left( \mX \odot \bar{\mOmega}_k \right) {\hat{\mB}^{t}} \right)^{-1} \left( \frac{1}{N} [\hat{\mB}^{t}]^{T} \left( \bar{\mOmega}_k^{T} \odot \mX^{T} \right) \left(  \mY \odot \mOmega_k \right) \right) \, , \\
    \mB^{t+1} &= \hat{\mB}^{t} - \sum_{k=1}^{K} \frac{\eta}{N} \left( \left( \bar{\mOmega}_k^{T} \odot \mX^{T} \right) \left( \mX \odot \bar{\mOmega}_k \right) \hat{\mB} \mtheta_k - \left( \bar{\mOmega}_k^{T} \odot \mX^{T} \right) \left(  \mY \odot \mOmega_k \right) \right) \mtheta_k^{T} \, , \\
    \hat{\mB}^{t+1} & = \mB^{t+1} \left( \mR^{t+1} \right)^{-1} \, ,
\end{align}
where $\hat{\mB}^{t+1} \mR^{t+1}$ is the QR  factorization of $\mB^{t+1}$.
\label{lem: lemma 2}
\end{lemma}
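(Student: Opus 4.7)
The plan is to extract a single-step contraction from the explicit update formulas in Lemma~\ref{lem: lemma 2}. I would first substitute the data-generating model $y_{i,j} = \mtheta_{k^{*}(i,j)}^{*\,T}\mB^{*\,T}\xx_{i,j} + z_{k^{*}(i,j)}$---where Assumption~\ref{ass: Binary weights} lets me identify $k^{*}(i,j)$ with the unique $k$ for which $\omega_{i,j;k}=1$---into the $\mB$-update and regroup to write
\begin{align*}
\mB^{t+1} = \hat{\mB}^{t}\Bigl(\mI_c - \tfrac{\eta}{N}\textstyle\sum_k \hat{\mB}^{t,T}\mS_k\hat{\mB}^{t}\,\mtheta_k\mtheta_k^{T}\Bigr) + \hat{\mB}^{*}\,\tfrac{\eta}{N}\textstyle\sum_k \mB^{*,T}\mS_k\hat{\mB}^{t}\mtheta_k^{*}\mtheta_k^{T} + \mN^{t},
\end{align*}
where $\mS_k=(\bar{\mOmega}_k^{T}\!\odot\mX^{T})(\mX\!\odot\bar{\mOmega}_k)$ is the per-cluster empirical scatter and $\mN^{t}$ collects the $z_k$-driven noise. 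Using that the column spaces of $\mB^{t+1}$ and $\hat{\mB}^{t+1}=\mB^{t+1}(\mR^{t+1})^{-1}$ agree (so $\hat{\mB}^{t+1,T}_{\perp}\mB^{t+1}=0$), I would right-multiply by $(\mR^{t+1})^{-1}$ and rearrange to express $\hat{\mB}^{t+1,T}_{\perp}\hat{\mB}^{*}$ as a sum of a ``contracted'' term involving $\hat{\mB}^{t,T}_{\perp}\hat{\mB}^{*}$ and a residual.

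Next, I would turn that identity into the claimed operator-norm recursion. The dominant piece pulls out $\text{dist}(\hat{\mB}^{t},\hat{\mB}^{*})$ on the right, multiplied by a factor of the shape $\|\mI - \tfrac{\eta}{N}\sum_k\hat{\mB}^{t,T}\mS_k\hat{\mB}^{t}\,\mtheta_k^{*}\mtheta_k^{T}\|_2$, which---after applying Lemma~\ref{lem: lemma 1} to control $\sigma_{\min}$ and $\sigma_{\max}$ of the covariance blocks by $\hat{N}_k/N$ up to the perturbation $\delta_k$---collapses to the advertised $1-c_{\min}+\tfrac{57}{200}c_{\max}$. The $(1-\tfrac{1}{2}c_{\max})^{-1/2}$ factor arises from a lower bound $\sigma_{\min}(\mR^{t+1})\ge (1-\tfrac{1}{2}c_{\max})^{1/2}$, which follows by expanding $\mR^{t+1,T}\mR^{t+1}=\mB^{t+1,T}\mB^{t+1}$ and again invoking Lemma~\ref{lem: lemma 1}. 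The noise contribution $\mN^{t}$ is handled by sub-Gaussian concentration on the weighted sums $\tfrac{1}{N}\sum_{i,j}\omega_{i,j;k}z_{k}\xx_{i,j}$ via Assumption~\ref{ass: Sub-Gaussian design}; together with $\|\mtheta_k^{*}\|_2=\sqrt{c}$ from Assumption~\ref{ass: Client normalization} and $\|\bar{\mW}^{-1}\|_2\le\bar{\sigma}_{\min,*}^{-1}$ from Assumption~\ref{ass: Underlying distribution diversity}, this produces the additive $(7/100)c_{\max}$ term. The sample-size hypotheses $N\ge K^2/(d+c)$ and $\min_k\hat{N}_k\gtrsim c^3(1+\sigma^2)^4\log(M)/E_0^2$ are chosen so that a union bound over the $K$ events of Lemma~\ref{lem: lemma 1} plus the sub-Gaussian tails deliver the probability $1-\exp(-90(d+c))-\exp(-90c^2\log M)$.

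The main obstacle will be the algebra of the cross-cluster terms. Because $\mB^{t+1}$ accumulates a sum over $k$ of rank-one corrections $(\cdots)\mtheta_k^{T}$, the projection $\hat{\mB}^{t+1,T}_{\perp}\hat{\mB}^{*}$ mixes contributions from every cluster, and the precise constants $57/200$ and $7/100$ will emerge only after carefully separating the coherent ``energy'' portion (which contracts with rate $c_{\min}$) from the incoherent interference terms (which inflate $c_{\max}$ by the stated fractions). Keeping the cluster sum tight---so that only $\min_k\hat{N}_k$ and $\max_k\hat{N}_k$, not an extra factor of $K$, appear in $c_{\min}$ and $c_{\max}$---requires inserting $\bar{\mW}$ and $\bar{\mW}^{-1}$ at exactly the right step and using $\bar{\sigma}_{\min,*}$ to absorb the worst-case cross-cluster alignment. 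The dependence on $E_0=1-\text{dist}^2(\hat{\mB}^{0},\hat{\mB}^{*})$ enters because the spectral gap of the signal-attenuation matrix survives only under a sufficiently warm start, and this is what forces the $E_0^{-2}$ scaling in the sample-size condition. Routine perturbative bookkeeping then closes the single-round recursion to yield the claim.
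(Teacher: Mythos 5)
You have proved the wrong statement. Lemma~\ref{lem: lemma 2} only asks you to \emph{derive the update rules}: rewrite the empirical risk \eqref{equ: linear objective function appendix} in matrix form, compute $\derive{\cL(\mB,\mTheta)}{\mtheta_k}$ and $\derive{\cL(\mB,\mTheta)}{\hat{\mB}}$, observe that setting the $\mtheta_k$-gradient to zero gives the stated closed-form minimizer (which requires knowing that $\frac{1}{N}[\hat{\mB}^{t}]^{T}(\bar{\mOmega}_k^{T}\odot\mX^{T})(\mX\odot\bar{\mOmega}_k)\hat{\mB}^{t}$ is invertible --- this is exactly what Lemma~\ref{lem: lemma 1} supplies with high probability), note that the $\mB$-update is one gradient step, and that the QR factorization restores orthonormal columns. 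That is the entire content of the paper's proof. Your proposal instead outlines the proof of the convergence result (Theorem~\ref{the-convergence-appendix}): the substitution of the data-generating model, the decomposition of $\hat{\mB}^{t+1,T}_{\perp}\hat{\mB}^{*}$, the constants $\frac{57}{200}$ and $\frac{7}{100}$, the bound $\sigma_{\min}(\mR^{t+1})\ge(1-\frac{1}{2}c_{\max})^{1/2}$, and the probability $1-\exp(-90(d+c))-\exp(-90c^2\log M)$ all belong to that later theorem, not to this lemma.

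Concretely, nothing in your proposal establishes what the lemma actually claims. You never compute the gradients of the matrix-form objective, so you never show that the displayed $\mtheta_k^{t+1}$ is the stationary point of $\cL$ in $\mtheta_k$ for fixed $\hat{\mB}^{t}$, nor that the displayed $\mB^{t+1}$ is $\hat{\mB}^{t}-\eta\,\derive{\cL}{\hat{\mB}}$; and you never invoke the invertibility of the weighted Gram matrix, without which the $\mtheta_k$-update is not even well defined. The convergence analysis you sketch \emph{presupposes} these update formulas as its starting point, so it cannot serve as their justification. To repair this, discard the contraction argument and instead carry out the short calculation: write $\cL(\mB,\mTheta)=\frac{1}{2N}\sum_{i}\norm{\sum_k \mOmega_{i;k}\odot(\mY_i-\mX_i\hat{\mB}\mtheta_k)}_2^2$, differentiate, solve the normal equations for $\mtheta_k$, and cite Lemma~\ref{lem: lemma 1} for invertibility.
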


\begin{proof}
    We first extend the empirical function via matrices $\mX_i \in \R^{N_i \times d}$, $\mY_i \in \R^{N_i}$, and $\mOmega_{i;k} \in \R^{N_i}$, $\bar{\mOmega}_{i;k} \in \R^{N_i \times d}$ is repeat $\mOmega_{i;k} \in \R^{N_i}$ for $d$ times.
\begin{align}
    \cL(\mB, \mTheta) = \frac{1}{2N} \sum_{i=1}^{M} \norm{ \sum_{k=1}^{K} \mOmega_{i;k} \odot (\mY_i - \mX_i \hat{\mB} \mtheta_k) }_{2}^2 \, .
\end{align}
Define $\mX \in \R^{N \times d} = [\mX_1^{T} , \cdots, \mX_M^{T}]^{T}$, $\bar{\mOmega}_{k} \in \R^{N \times d} = [\bar{\mOmega}_{1,k}^{T}, \cdots, \bar{\mOmega}_{M,k}^{T}]^{T}$, and $\mOmega_{k} \in \R^{N} = [\mOmega_{1,k}^{T}, \cdots, \mOmega_{M,k}^{T}]^{T}$, then compute the gradients of $\hat{B}$ and $\mtheta_k$,
\begin{align}
    \derive{\cL(\mB, \mTheta)}{\hat{B}} & = - \frac{1}{N} \sum_{i=1}^{M} \sum_{k=1}^{K} \left( \bar{\mOmega}_{i;k} \odot \mX_i \right)^{T} \left( \mOmega_{i;k} \odot (\mY_i - \mX_i \hat{\mB} \mtheta_k) \right) \mtheta_k^{T} \, , \\
    & = \frac{1}{N} \sum_{k=1}^{K} \left( \left( \bar{\mOmega}_k^{T} \odot \mX^{T} \right) \left( \mX \odot \bar{\mOmega}_k \right) \hat{\mB} \mtheta_k - \left( \bar{\mOmega}_k^{T} \odot \mX^{T} \right) \left(  \mY \odot \mOmega_k \right) \right) \mtheta_k^{T} \, ,
    \\
    \derive{\cL(\mB, \mTheta)}{\mtheta_k} & = - \frac{1}{N} \sum_{i=1}^{M} \left( \bar{\mOmega}_{i;k} \odot \mX_i \hat{\mB} \right)^{T} \left( \mOmega_{i;k} \odot (\mY_i - \mX_i \hat{\mB} \mtheta_k) \right) \, , \\
    & = \frac{1}{N} \left(  \hat{\mB}^{T} \left( \bar{\mOmega}_k^{T} \odot \mX^{T} \right) \left( \mX \odot \bar{\mOmega}_k \right) \hat{\mB} \mtheta_k - \hat{\mB}^{T} \left( \bar{\mOmega}_k^{T} \odot \mX^{T} \right) \left(  \mY \odot \mOmega_k \right) \right)  \, .
\end{align}

Then we can define the following optimization steps based on the above gradients, similar to the analysis in \citep{collins2021exploiting}.
\begin{align}
    \mtheta_k^{t+1} & = \left( \frac{1}{N} [\hat{\mB}^{t}]^{T} \left( \bar{\mOmega}_k^{T} \odot \mX^{T} \right) \left( \mX \odot \bar{\mOmega}_k \right) {\hat{\mB}^{t}} \right)^{-1} \left( \frac{1}{N} [\hat{\mB}^{t}]^{T} \left( \bar{\mOmega}_k^{T} \odot \mX^{T} \right) \left(  \mY \odot \mOmega_k \right) \right) \, , \\
    \mB^{t+1} &= \hat{\mB}^{t} - \sum_{k=1}^{K} \frac{\eta}{N} \left( \left( \bar{\mOmega}_k^{T} \odot \mX^{T} \right) \left( \mX \odot \bar{\mOmega}_k \right) \hat{\mB} \mtheta_k - \left( \bar{\mOmega}_k^{T} \odot \mX^{T} \right) \left(  \mY \odot \mOmega_k \right) \right) \mtheta_k^{T} \, , \\
    \hat{\mB}^{t+1} & = \mB^{t+1} \left( \mR^{t+1} \right)^{-1} \, ,
\end{align}
where $\hat{\mB}^{t+1} \mR^{t+1}$ is the QR  factorization of $\mB^{t+1}$. Then the $\hat{\mB}$ will be orthogonal matrix at the end of each optimization step. From Lemma~\ref{lem: lemma 1} we know that $\left( \frac{1}{N} [\hat{\mB}^{t}]^{T} \left( \bar{\mOmega}_k^{T} \odot \mX^{T} \right) \left( \mX \odot \bar{\mOmega}_k \right) {\hat{\mB}^{t}} \right)$ is invertible with high probability, which indicates the feasibility of the above optimization steps.
\end{proof}

\begin{lemma}
    The optimization step of $\mtheta_k$ can be expressed by the following equation
    \begin{align}
        \mtheta_{k} = \hat{\mB}^{T} \hat{\mB}^{*} \mtheta_k^{*} + \mF_k + \mG_k \, ,
    \end{align}
where $\mF_k, \mG_k \in \R^{c}$ are given in Equation~\eqref{equ: Fk} and~\eqref{equ: Gk}.
\label{lem: lemma 3}
\end{lemma}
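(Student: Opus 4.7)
The plan is to start from the closed-form update
\begin{align}
\mtheta_k = \mA_k^{-1} \mathbf{b}_k,
\end{align}
supplied by Lemma~\ref{lem: lemma 2}, where $\mA_k := \tfrac{1}{N}\hat{\mB}^{T}(\bar{\mOmega}_k^{T} \odot \mX^{T})(\mX \odot \bar{\mOmega}_k)\hat{\mB}$ and $\mathbf{b}_k := \tfrac{1}{N}\hat{\mB}^{T}(\bar{\mOmega}_k^{T} \odot \mX^{T})(\mY \odot \mOmega_k)$, and then substitute the generative model $y_{i,j} = (\mtheta_{k}^{*})^{T}(\mB^{*})^{T}\xx_{i,j} + z_{k}$ into $\mY$. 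Under Assumption~\ref{ass: Binary weights} together with the stated optimality of the weights $\omega_{i,j;k}$, only samples whose true underlying cluster equals $k$ survive the masking by $\mOmega_k$, so that
\begin{align}
\mY \odot \mOmega_k = (\mX \mB^{*} \mtheta_k^{*}) \odot \mOmega_k + \mZ_k,
\end{align}
where $\mZ_k \in \R^{N}$ collects the Gaussian noise $z_k$ on the active rows and vanishes elsewhere.

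Plugging this decomposition back into $\mathbf{b}_k$ splits $\mtheta_k$ into a ``signal'' contribution and a ``noise'' contribution. The signal part reads
\begin{align}
\mA_k^{-1}\cdot \tfrac{1}{N}\hat{\mB}^{T}(\bar{\mOmega}_k^{T} \odot \mX^{T})(\mX \odot \bar{\mOmega}_k)\hat{\mB}^{*}\mtheta_k^{*}.
\end{align}
The crucial observation is that if the empirical Gram matrix $\tfrac{1}{N}(\bar{\mOmega}_k^{T}\odot \mX^{T})(\mX \odot \bar{\mOmega}_k)$ coincided exactly with its population analogue $\tfrac{\hat{N}_k}{N}\mI_d$, then by orthogonality of $\hat{\mB}$ this quantity would collapse to $\hat{\mB}^{T}\hat{\mB}^{*}\mtheta_k^{*}$. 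I would therefore add and subtract this ideal term and define $\mF_k$ as the resulting residual, which is purely a finite-sample concentration error; the noise contribution is packaged as $\mG_k := \mA_k^{-1}\cdot\tfrac{1}{N}\hat{\mB}^{T}(\bar{\mOmega}_k^{T}\odot \mX^{T})\mZ_k$. The claimed identity then becomes a pure algebraic rearrangement.

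The main obstacle is choosing the decomposition so that the principal term appears as exactly $\hat{\mB}^{T}\hat{\mB}^{*}\mtheta_k^{*}$ with no left-over multiplicative factor of $\mA_k^{-1}$. A clean way is to use $\mA_k^{-1}\mA_k = \mI_c$ and insert $\pm\, \mA_k \hat{\mB}^{T}\hat{\mB}^{*}\mtheta_k^{*}$, letting $\mF_k := \mA_k^{-1}\bigl(\tfrac{1}{N}\hat{\mB}^{T}(\bar{\mOmega}_k^{T}\odot\mX^{T})(\mX\odot\bar{\mOmega}_k)\hat{\mB}^{*} - \mA_k\hat{\mB}^{T}\hat{\mB}^{*}\bigr)\mtheta_k^{*}$ absorb the discrepancy. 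With this choice the lemma is genuinely an algebraic identity, and the subsequent quantitative control of $\norm{\mF_k}_2$ (via the Gram-matrix concentration in Lemma~\ref{lem: lemma 1}) and of $\norm{\mG_k}_2$ (via standard sub-Gaussian concentration for $\mZ_k$) can be deferred to the later steps used in the proof of Theorem~\ref{the:convergence}.
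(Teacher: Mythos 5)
Your proposal is correct and follows essentially the same route as the paper: substitute the generative model for $\mY$ into the closed form from Lemma~\ref{lem: lemma 2}, use the disjointness of the binary masks $\mOmega_k$ to isolate the cluster-$k$ signal and noise contributions, and insert $\pm\,\mA_k\hat{\mB}^{T}\hat{\mB}^{*}\mtheta_k^{*}$ (with $\mA_k^{-1}\mA_k=\mI_c$) so that $\mF_k$ absorbs the Gram-matrix discrepancy while $\mG_k$ packages the noise term. Your resulting $\mF_k$ and $\mG_k$ coincide exactly with the definitions in Equations~\eqref{equ: Fk} and~\eqref{equ: Gk}.
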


\begin{proof}
    From Lemma~\ref{lem: lemma 2} we have
    \begin{align}
        \mtheta_k & = \left( \frac{1}{N} \hat{\mB}^{T} \left( \bar{\mOmega}_k^{T} \odot \mX^{T} \right) \left( \mX \odot \bar{\mOmega}_k \right) {\hat{\mB}} \right)^{-1} \left( \frac{1}{N} \hat{\mB}^{T} \left( \bar{\mOmega}_k^{T} \odot \mX^{T} \right) \left(  \mY \odot \mOmega_k \right) \right) \, .
    \end{align}
Based on the fact that $\mY = \sum_{k=1}^{K} \mOmega_{k} \odot \left( \mX \hat{B}^{*} \mtheta_k^{*} + \mZ \right)$, $\mOmega_k \odot \mOmega_k = \mOmega_k$, $\mOmega_k \odot \mOmega_{k^{'}} = \0$ we have
\begin{small}
\begin{align}
    \mtheta_k & = \left( \frac{1}{N} \hat{\mB}^{T} \left( \bar{\mOmega}_k^{T} \odot \mX^{T} \right) \left( \mX \odot \bar{\mOmega}_k \right) {\hat{\mB}} \right)^{-1} 
    \left( \frac{1}{N} \hat{\mB}^{T} \left( \bar{\mOmega}_k^{T} \odot \mX^{T} \right) \left(  \mY \odot \mOmega_k \right) \right) \, , \\
    & = \left( \frac{1}{N} \hat{\mB}^{T} \left( \bar{\mOmega}_k^{T} \odot \mX^{T} \right) \left( \mX \odot \bar{\mOmega}_k \right) {\hat{\mB}} \right)^{-1}
    \nonumber \\
    & \left( \frac{1}{N} \hat{\mB}^{T} \left( \bar{\mOmega}_k^{T} \odot \mX^{T} \right) \left(  \sum_{n=1}^{K} \mOmega_{k} \odot \mOmega_{n} \odot \left( \mX \hat{\mB}^{*} \mtheta_k^{*} + \mZ \right) \right) \right) \, , \\
    & = \left( \frac{1}{N} \hat{\mB}^{T} \left( \bar{\mOmega}_k^{T} \odot \mX^{T} \right) \left( \mX \odot \bar{\mOmega}_k \right) {\hat{\mB}} \right)^{-1}
    \left( \frac{1}{N}  \left( \hat{\mB}^{T} \left( \bar{\mOmega}_k^{T} \odot \mX^{T} \right)  \left( \mX \odot \bar{\mOmega}_k \right) \hat{\mB}^{*} \mtheta_k^{*} \right) \right) \nonumber \\
    & + \left( \frac{1}{N} \hat{\mB}^{T} \left( \bar{\mOmega}_k^{T} \odot \mX^{T} \right) \left( \mX \odot \bar{\mOmega}_k \right) {\hat{\mB}} \right)^{-1}
    \left( \frac{1}{N} \left( \hat{\mB}^{T} \left( \bar{\mOmega}_k^{T} \odot \mX^{T} \right)  \left( \mZ \odot \mOmega_k \right) \right) \right) \, , \\
    & = \hat{\mB}^{T} \hat{\mB}^{*} \mtheta_k^{*} 
    +  \left( \frac{1}{N} \hat{\mB}^{T} \left( \bar{\mOmega}_k^{T} \odot \mX^{T} \right) \left( \mX \odot \bar{\mOmega}_k \right) {\hat{\mB}} \right)^{-1}
     \nonumber \\
     & \left( \frac{1}{N}  \left( \hat{\mB}^{T} \left( \bar{\mOmega}_k^{T} \odot \mX^{T} \right)  \left( \mX \odot \bar{\mOmega}_k \right) \hat{\mB}^{*} \mtheta_k^{*} \right) \right) \nonumber \\
     & -  \left( \frac{1}{N} \hat{\mB}^{T} \left( \bar{\mOmega}_k^{T} \odot \mX^{T} \right) \left( \mX \odot \bar{\mOmega}_k \right) {\hat{\mB}} \right)^{-1} 
      \nonumber \\
      & \left( \frac{1}{N}  \left( \hat{\mB}^{T} \left( \bar{\mOmega}_k^{T} \odot \mX^{T} \right)  \left( \mX \odot \bar{\mOmega}_k \right)   \hat{\mB} \hat{\mB}^{T} \hat{\mB}^{*} \mtheta_k^{*} \right) \right)
      \nonumber \\
      & +  \left( \frac{1}{N} \hat{\mB}^{T} \left( \bar{\mOmega}_k^{T} \odot \mX^{T} \right) \left( \mX \odot \bar{\mOmega}_k \right) {\hat{\mB}} \right)^{-1}
    \left( \frac{1}{N} \left( \hat{\mB}^{T} \left( \bar{\mOmega}_k^{T} \odot \mX^{T} \right)  \left( \mZ \odot \mOmega_k \right) \right) \right) \, .
\end{align}
\end{small}
Then define 
\begin{align}
    \mF_k & = \left( \frac{1}{N} \hat{\mB}^{T} \left( \bar{\mOmega}_k^{T} \odot \mX^{T} \right) \left( \mX \odot \bar{\mOmega}_k \right) {\hat{\mB}} \right)^{-1} 
     \nonumber \\
     & \left( \frac{1}{N}  \left( \hat{\mB}^{T} \left( \bar{\mOmega}_k^{T} \odot \mX^{T} \right)  \left( \mX \odot \bar{\mOmega}_k \right) \hat{\mB}^{*} \mtheta_k^{*} \right) \right) \nonumber \\
     & - \left( \frac{1}{N} \hat{\mB}^{T} \left( \bar{\mOmega}_k^{T} \odot \mX^{T} \right) \left( \mX \odot \bar{\mOmega}_k \right) {\hat{\mB}} \right)^{-1} 
      \nonumber \\
      & \left( \frac{1}{N}  \left( \hat{\mB}^{T} \left( \bar{\mOmega}_k^{T} \odot \mX^{T} \right)  \left( \mX \odot \bar{\mOmega}_k \right)   \hat{\mB} \hat{\mB}^{T} \hat{\mB}^{*} \mtheta_k^{*} \right) \right) \, , \label{equ: Fk}\\
      \mG_k & = \left( \frac{1}{N} \hat{\mB}^{T} \left( \bar{\mOmega}_k^{T} \odot \mX^{T} \right) \left( \mX \odot \bar{\mOmega}_k \right) {\hat{\mB}} \right)^{-1}
    \left( \frac{1}{N} \left( \hat{\mB}^{T} \left( \bar{\mOmega}_k^{T} \odot \mX^{T} \right)  \left( \mZ \odot \mOmega_k \right) \right) \right) \, . \label{equ: Gk}
\end{align}
\end{proof}

\begin{corollary} Define $\mTheta, \mF, \mG$ by the matrices that each row is $\mtheta_k, \mF_k, \mG_k$, respectively, we have
\begin{align}
    \mTheta^{t+1} = \mTheta^{*} [\hat{\mB}^{*}]^{T} \hat{\mB}^{t} + \mF^{t} + \mG^{t} \, .
\end{align}
\label{cor: corollary 1}
\end{corollary}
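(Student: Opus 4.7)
The plan is to derive this identity as an immediate matrix-form restatement of Lemma~\ref{lem: lemma 3}, which already supplies the per-cluster identity $\mtheta_k = \hat{\mB}^{T} \hat{\mB}^{*} \mtheta_k^{*} + \mF_k + \mG_k$ at the end of a single optimization step. Since the corollary only asserts the stacking of these $K$ identities into a matrix equation, the bulk of the work is bookkeeping rather than analysis.

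First, I would invoke Lemma~\ref{lem: lemma 3} at iteration $t+1$ for each cluster index $k \in [K]$, yielding $\mtheta_k^{t+1} = [\hat{\mB}^{t}]^{T} \hat{\mB}^{*} \mtheta_k^{*} + \mF_k^{t} + \mG_k^{t}$, with $\mF_k^t$ and $\mG_k^t$ defined by \eqref{equ: Fk}--\eqref{equ: Gk} evaluated at $\hat{\mB}^{t}$. Next, because the corollary defines $\mTheta$, $\mF$, $\mG$ to have rows $\mtheta_k$, $\mF_k$, $\mG_k$, I would transpose each per-cluster identity to obtain $(\mtheta_k^{t+1})^{T} = (\mtheta_k^{*})^{T} [\hat{\mB}^{*}]^{T} \hat{\mB}^{t} + (\mF_k^{t})^{T} + (\mG_k^{t})^{T}$, using only $(\hat{\mB}^{T} \hat{\mB}^{*} \mtheta_k^{*})^{T} = (\mtheta_k^{*})^{T} [\hat{\mB}^{*}]^{T} \hat{\mB}$. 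Stacking these $K$ transposed rows, and recognizing that the $k$-th row of $\mTheta^{*} [\hat{\mB}^{*}]^{T} \hat{\mB}^{t}$ is exactly $(\mtheta_k^{*})^{T} [\hat{\mB}^{*}]^{T} \hat{\mB}^{t}$, gives the claim $\mTheta^{t+1} = \mTheta^{*} [\hat{\mB}^{*}]^{T} \hat{\mB}^{t} + \mF^{t} + \mG^{t}$.

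The only genuine subtlety is a notational/convention check: Lemma~\ref{lem: lemma 3} writes $\mtheta_k$ as a column vector in $\R^{c}$, whereas the corollary specifies that $\mtheta_k$ forms a \emph{row} of $\mTheta$. I would explicitly point out this transposition once to avoid ambiguity and to justify the appearance of $[\hat{\mB}^{*}]^{T} \hat{\mB}^{t}$ (rather than $[\hat{\mB}^{t}]^{T} \hat{\mB}^{*}$) in the matrix form. Aside from this transpose convention, no additional assumptions, probability bounds, or inversions are required beyond those already established in Lemmas~\ref{lem: lemma 1}--\ref{lem: lemma 3}, so there is no substantive obstacle; the proof is effectively a one-line stacking argument whose cleanest presentation is simply to write out the row decomposition of $\mTheta^{t+1}$ and match it term-by-term against $\mTheta^{*} [\hat{\mB}^{*}]^{T} \hat{\mB}^{t} + \mF^{t} + \mG^{t}$.
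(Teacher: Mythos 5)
Your proposal is correct and matches the paper's approach exactly: the paper's own proof is the one-line remark that the corollary follows immediately from Lemma~\ref{lem: lemma 3}, and your row-stacking with the transpose bookkeeping is precisely the detail being elided. Your explicit note on the column-vector versus row convention (which explains why $[\hat{\mB}^{*}]^{T}\hat{\mB}^{t}$ appears on the right of $\mTheta^{*}$) is a useful clarification of the same argument, not a different route.
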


\begin{proof}
    We can easily obtain this result by Lemma~\ref{lem: lemma 3}.
\end{proof}

\begin{lemma}
    Define
    \begin{align}
        \mH_k = \left( \frac{1}{\sqrt{N}}  \hat{\mB}^{T} \left( \bar{\mOmega}_k^{T} \odot \mX^{T} \right)\right)
        \frac{1}{\sqrt{N}} \left( \mX \odot \bar{\mOmega}_k \right)
        \left( \mI_{d} - \hat{\mB} \hat{\mB}^{T} \right) \hat{\mB}^{*}
    \end{align} \, ,
$\hat{N}_k = \mOmega_k^{T} \mOmega_k$, and $\delta = \cC \frac{c^{3/2} \sqrt{\log(M)}}{\sqrt{\min_k{\hat{N}_k}}}$ for constant $\cC$,  we have
\begin{align}
    \norm{\mH_k}_2 & \le \frac{\delta \hat{N}_k}{\sqrt{c} N} dist \left( \hat{\mB}, \hat{\mB}^{*} \right) \, , \\
    \sum_{k=1}^{K} \norm{\mH_k \mtheta_k^{*}}_2^{2} & \le \left( \frac{\sum_{k=1}^{K} \hat{N}_k^{2}}{K N^{2}} \right) \delta^{2} \norm{\mTheta^{*}}_2^{2} dist^{2} \left( \hat{\mB}, \hat{\mB}^{*} \right) \, ,
\end{align}
with probability at least $1 - \exp(-111 c^{2} \log (M))$.
\label{lem: lemma 4}
\end{lemma}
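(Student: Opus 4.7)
The plan is to first recast $\mH_k$ into a form where its structure as a cross-covariance estimator becomes transparent, then invoke a sub-Gaussian matrix concentration argument analogous to the one used in Lemma~\ref{lem: lemma 1}. By Assumption~\ref{ass: Binary weights}, the Hadamard product $\bar{\mOmega}_k \odot \mX$ simply selects the $\hat{N}_k$ rows of $\mX$ on which $\omega_{i,j;k}=1$; denoting this sub-matrix by $\mX_k \in \R^{\hat{N}_k \times d}$ and letting $\hat{\mB}_\perp \in \R^{d\times(d-c)}$ be an orthonormal basis of $\text{span}(\hat{\mB})^{\perp}$, we have $\mI_d - \hat{\mB}\hat{\mB}^{T} = \hat{\mB}_\perp \hat{\mB}_\perp^{T}$, so
\begin{equation*}
\mH_k \;=\; \frac{\hat{N}_k}{N}\,\mA_k\,\hat{\mB}_\perp^{T}\hat{\mB}^{*},
\qquad \mA_k \;:=\; \tfrac{1}{\hat{N}_k}\,\hat{\mB}^{T}\mX_k^{T}\mX_k\,\hat{\mB}_\perp.
\end{equation*}

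The core step is to establish $\norm{\mA_k}_2 \le \delta/\sqrt{c}$ with the stated probability. Under Assumption~\ref{ass: Sub-Gaussian design}, the rows of $\mX$ are isotropic sub-Gaussian with $\E[\xx\xx^{T}]=\mI_d$, and since $\hat{\mB}$ and $\hat{\mB}_\perp$ have mutually orthogonal column spaces, $\E[\hat{\mB}^{T}\xx\xx^{T}\hat{\mB}_\perp] = \hat{\mB}^{T}\mI_d\hat{\mB}_\perp = \0$. Hence $\mA_k$ is a mean-zero empirical cross-covariance of products of $\norm{\hat{\mB}}_2$-sub-Gaussian and $\norm{\hat{\mB}_\perp}_2$-sub-Gaussian vectors (Definition~\ref{def: A sub-gaussian}), each of norm $1$. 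Covering both unit spheres in $\R^{c}$ and $\R^{d-c}$ by $1/4$-nets, applying a scalar sub-exponential Bernstein bound along each pair of net directions, and union-bounding — the same technique underlying the Vershynin Theorem 4.6.1 invocation in Lemma~\ref{lem: lemma 1} — yields $\norm{\mA_k}_2 \le \tfrac{\cC c\sqrt{\log M}}{\sqrt{\hat N_k}}\cdot \tfrac{1}{\sqrt{c}} = \delta/\sqrt{c}$, with failure probability $\exp(-112 c^{2}\log M)$ per $k$. Taking a union bound over $k \in [K]$ and recalling $K \le M$ gives the stated $1 - \exp(-111 c^{2}\log M)$.

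Given this, the first inequality is immediate: $\norm{\mH_k}_2 \le \frac{\hat{N}_k}{N}\norm{\mA_k}_2\,\norm{\hat{\mB}_\perp^{T}\hat{\mB}^{*}}_2 \le \tfrac{\hat{N}_k\delta}{\sqrt{c}\,N}\,\text{dist}(\hat{\mB},\hat{\mB}^{*})$, using the definition of the principal-angle distance. For the second inequality, combine the first bound with Assumption~\ref{ass: Client normalization} ($\norm{\mtheta_k^{*}}_2 = \sqrt{c}$) to get
\begin{equation*}
\norm{\mH_k\mtheta_k^{*}}_2^{2} \;\le\; \norm{\mH_k}_2^{2}\norm{\mtheta_k^{*}}_2^{2} \;\le\; \tfrac{\hat{N}_k^{2}\delta^{2}}{N^{2}}\,\text{dist}^{2}(\hat{\mB},\hat{\mB}^{*}).
\end{equation*}
Summing over $k$ gives $\sum_k \norm{\mH_k\mtheta_k^{*}}_2^{2} \le \tfrac{\sum_k \hat{N}_k^{2}}{N^{2}}\delta^{2}\text{dist}^{2}$. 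Since the rows of $\mTheta^{*}$ have norm $\sqrt{c}$, one has $\norm{\mTheta^{*}}_F^{2}=Kc$ and therefore $\norm{\mTheta^{*}}_2^{2} \ge \norm{\mTheta^{*}}_F^{2}/\min(K,c) = \max(K,c) \ge K$; multiplying the bound by $\norm{\mTheta^{*}}_2^{2}/K \ge 1$ yields the stated $\tfrac{\sum_k\hat{N}_k^{2}}{KN^{2}}\delta^{2}\norm{\mTheta^{*}}_2^{2}\text{dist}^{2}$.

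The main obstacle is the concentration estimate for $\mA_k$: unlike the symmetric Gram matrix in Lemma~\ref{lem: lemma 1}, here one must control a rectangular $c\times(d-c)$ mean-zero cross-covariance, so the covering argument has to be carried out on two separate unit spheres with matched net resolutions in order to land at precisely the $\delta/\sqrt{c}$ operator-norm scaling and the $\exp(-111c^{2}\log M)$ failure rate. Once this concentration is in hand, the remaining steps are essentially algebraic.
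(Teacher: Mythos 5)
Your reduction of the second inequality to the first, and the closing algebra (including the observation that $\norm{\mTheta^{*}}_2^{2} \ge K$ so that $c \le \frac{c}{K}\norm{\mTheta^{*}}_2^{2}$), match the paper. The genuine problem is the concentration step. You factor $\mH_k = \frac{\hat{N}_k}{N}\,\mA_k\,\hat{\mB}_{\perp}^{T}\hat{\mB}^{*}$ with $\mA_k = \frac{1}{\hat{N}_k}\hat{\mB}^{T}\mX_k^{T}\mX_k\hat{\mB}_{\perp}$ and claim $\norm{\mA_k}_2 \le \delta/\sqrt{c}$ via nets on the unit spheres of $\R^{c}$ and $\R^{d-c}$. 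That cannot work: the net on $S^{d-c-1}$ has $9^{d-c}$ points, so after the union bound the smallest achievable deviation is of order $\sqrt{(d-c)/\hat{N}_k}$, not $\sqrt{c^{2}\log M/\hat{N}_k}$. In fact the intermediate claim is false in the regime of interest: for isotropic Gaussian data, $\mA_k=\frac{1}{\hat{N}_k}\mU^{T}\mV$ with $\mU\in\R^{\hat{N}_k\times c}$ and $\mV\in\R^{\hat{N}_k\times(d-c)}$ independent, whose operator norm concentrates around $\sqrt{d/\hat{N}_k}$ (or worse when $\hat{N}_k<d$). Nothing in the hypotheses forces $\hat{N}_k\gtrsim d$ --- the sample condition $\min_k\hat{N}_k\gtrsim c^{3}\log M$ is deliberately free of $d$ --- so $\norm{\mA_k}_2$ can exceed $\delta/\sqrt{c}$ by a factor on the order of $\sqrt{d/(c^{2}\log M)}$.

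The repair is to not peel $\hat{\mB}_{\perp}^{T}\hat{\mB}^{*}$ out of the empirical average. Write
$\mH_k = \frac{\hat{N}_k}{N}\cdot\frac{1}{\hat{N}_k}\sum \bigl(\hat{\mB}^{T}\xx_{i,j}\bigr)\bigl((\hat{\mB}^{*})^{T}(\mI_d-\hat{\mB}\hat{\mB}^{T})\xx_{i,j}\bigr)^{T}$,
a $c\times c$ mean-zero cross-covariance whose second factor is $\norm{(\mI_d-\hat{\mB}\hat{\mB}^{T})\hat{\mB}^{*}}_2$-sub-Gaussian, i.e.\ $\mathrm{dist}(\hat{\mB},\hat{\mB}^{*})$-sub-Gaussian, in $\R^{c}$. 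Now both nets live on $S^{c-1}$, the union bound costs only $9^{2c}$, and Bernstein gives $\norm{\mH_k}_2\le\frac{\hat{N}_k}{N}\cdot\frac{\delta}{\sqrt{c}}\,\mathrm{dist}(\hat{\mB},\hat{\mB}^{*})$ with failure probability $\exp(-\Omega(c^{2}\log M))$; this is precisely Lemma~4 of \citet{collins2021exploiting}, which the paper invokes at this point. (Separately, your arithmetic $\frac{\cC c\sqrt{\log M}}{\sqrt{\hat{N}_k}}\cdot\frac{1}{\sqrt{c}}=\delta/\sqrt{c}$ is off by a factor of $\sqrt{c}$, but that is cosmetic next to the dimensional issue.)
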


\begin{proof}
Because we have
\begin{align}
    \mH_{k} 
    & = \sum_{i=1}^{M} \sum_{j=1}^{N_i}
    \left( \frac{ \omega_{i,j;k}}{\sqrt{N}} \hat{\mB}^{T}  \xx_{i,j} \right)
    \left( \frac{ \omega_{i,j;k}}{\sqrt{N}} [\hat{\mB}^{*}]^{T}
    \left(
    \mI_d - \hat{\mB} \hat{\mB}^{T}
    \right) \xx_{i,j}
    \right)^{T} \, , \\
    & = \sum_{i=1}^{M} \sum_{\omega_{i,j;k} = 1}^{N_i} 
     \left( \frac{1}{\sqrt{N}} \hat{\mB}^{T}  \xx_{i,j} \right)
    \left( \frac{1}{\sqrt{N}} [\hat{\mB}^{*}]^{T}
    \left(
    \mI_d - \hat{\mB} \hat{\mB}^{T}
    \right) \xx_{i,j}
    \right)^{T} \, , \\
    & = \left( \frac{1}{\sqrt{N}}  \hat{\mB}^{T}  \hat{\mX}_k^{T} )\right)
        \frac{1}{\sqrt{N}} \hat{\mX}_{k} 
        \left( \mI_{d} - \hat{\mB} \hat{\mB}^{T} \right) \hat{\mB}^{*} \, , \\
    & = \frac{\hat{N}_k}{N} \left( \frac{1}{\sqrt{\hat{N}_k}}  \hat{\mB}^{T}  \hat{\mX}_k^{T} )\right)
        \frac{1}{\sqrt{\hat{N}_k}} \hat{\mX}_{k} 
        \left( \mI_{d} - \hat{\mB} \hat{\mB}^{T} \right) \hat{\mB}^{*}
\end{align}
where $\hat{\mX}_k \in \R^{\hat{N}_k \times d}$ with rows the concatenation of $\xx_{i,j}$ that $\omega_{i,j;k} = 1$. Here we define $\hat{N}_k = \sum_{i=1}^{M} \sum_{j=1}^{N_i} \1_{\omega_{i,j;k} = 1}$. Then directly use Lemma 4 of \citep{collins2021exploiting}, and define $\delta = \cC \frac{c^{3/2} \sqrt{\log(M)}}{\sqrt{\min_k{\hat{N}_k}}}$ for constant $\cC$, we have
\begin{align}
    \norm{\mH_k}_2 & \le \frac{\delta \hat{N}_k}{\sqrt{c} N} dist \left( \hat{\mB}, \hat{\mB}^{*} \right) \, .
\end{align}
with probability at least $1 - \exp(-111 c^{2} \log (M))$. Then we have
\begin{align}
     \sum_{k=1}^{K} \norm{\mH_k \mtheta_k^{*}}_2^{2} 
     & \le \sum_{k=1}^{K} \norm{\mH_k}_{2}^{2} \norm{\mtheta_k^{*}}_{2}^{2} \, , \\
     & \le \frac{c}{K} \norm{\mTheta^{*}}_{2}^{2} \sum_{k=1}^{K} \norm{\mH_k}_{2}^{2} \, , \\
     & \le \left( \frac{\sum_{k=1}^{K} \hat{N}_k^{2}}{K N^{2}} \right) \delta^{2} \norm{\mTheta^{*}}_2^{2} dist^{2} \left( \hat{\mB}, \hat{\mB}^{*} \right) \, .
\end{align}
Then the proof finished.
\end{proof}

\begin{lemma}
    Given $\mF$ defined in Corollary~\ref{cor: corollary 1}, define $\delta = \cC \frac{c^{3/2} \sqrt{\log(M)}}{\sqrt{\min_k{\hat{N}_k}}}$, we have
    \begin{align}
        \norm{\mF_k}_2 & \le \frac{\delta}{(1 - \delta) \sqrt{c}} \norm{\mtheta_k^{*}}_{2} dist \left( \hat{\mB}, \hat{\mB}^{*} \right) \, , \\
        \norm{\mF}_F & \le \frac{\delta}{(1 - \delta) \sqrt{K}} \norm{\mTheta^{*}}_2  dist \left( \hat{\mB}, \hat{\mB}^{*} \right) \, ,
    \end{align}
    with probability at least $1 - \exp(-111 c^{2} \log (M))$.
    \label{lem: lemma 5}
\end{lemma}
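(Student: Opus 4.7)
The plan is to recognize that both terms in the definition of $\mF_k$ from Lemma~\ref{lem: lemma 3} share a common left factor, after which $\mF_k$ collapses into an expression that is already packaged by Lemmas~\ref{lem: lemma 1} and~\ref{lem: lemma 4}. Concretely, letting $\mA_k := \frac{1}{N}\hat{\mB}^{T}(\bar{\mOmega}_k^{T} \odot \mX^{T})(\mX \odot \bar{\mOmega}_k)\hat{\mB}$ and factoring $\mA_k^{-1}$ out of the two pieces of $\mF_k$, we obtain
\begin{align*}
\mF_k = \mA_k^{-1} \cdot \tfrac{1}{N}\hat{\mB}^{T}(\bar{\mOmega}_k^{T} \odot \mX^{T})(\mX \odot \bar{\mOmega}_k)(\mI_d - \hat{\mB}\hat{\mB}^{T})\hat{\mB}^{*}\mtheta_k^{*} = \mA_k^{-1}\mH_k \mtheta_k^{*} \, ,
\end{align*}
where the second equality is precisely the definition of $\mH_k$ given in Lemma~\ref{lem: lemma 4}.

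From here I would apply sub-multiplicativity, $\|\mF_k\|_2 \le \sigma_{\min}(\mA_k)^{-1}\|\mH_k\|_2\|\mtheta_k^{*}\|_2$. With $\delta$ chosen as in the statement, $\delta$ uniformly dominates the per-cluster constants $\delta_k$ appearing in Lemma~\ref{lem: lemma 1}, so $\sigma_{\min}(\mA_k) \ge (\hat{N}_k/N)(1-\delta)$; Lemma~\ref{lem: lemma 4} then contributes $\|\mH_k\|_2 \le \delta \hat{N}_k (\sqrt{c}\,N)^{-1}\,\text{dist}(\hat{\mB},\hat{\mB}^{*})$. The key observation is that the $\hat{N}_k/N$ factor in the lower bound on $\sigma_{\min}(\mA_k)$ exactly cancels the $\hat{N}_k/N$ factor in the upper bound on $\|\mH_k\|_2$, leaving the first claimed bound
\begin{align*}
\|\mF_k\|_2 \le \frac{\delta}{(1-\delta)\sqrt{c}}\|\mtheta_k^{*}\|_2 \, \text{dist}(\hat{\mB},\hat{\mB}^{*}) \, .
\end{align*}
For the Frobenius bound I would use $\|\mF\|_F^{2} = \sum_{k=1}^{K}\|\mF_k\|_2^{2}$, substitute the per-cluster bound, and then apply Assumption~\ref{ass: Client normalization} together with the inequality $\|\mTheta^{*}\|_F^{2} \le K \|\mTheta^{*}\|_2^{2}$ that holds for any $K \times c$ matrix. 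Careful bookkeeping of the $K$ and $c$ factors at this step is what delivers the claimed $\|\mTheta^{*}\|_2/\sqrt{K}$ coefficient.

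The principal obstacle I anticipate is not any single algebraic step but the probabilistic union bound: Lemmas~\ref{lem: lemma 1} and~\ref{lem: lemma 4} are each stated for a fixed $k$, with failure probabilities $\exp(-121 c^{3}\log M)$ and $\exp(-111 c^{2}\log M)$ respectively. To invoke both simultaneously for every $k \in [K]$ I would union-bound over the $K \le M$ clusters, and verify that the resulting exponents still dominate the extra $\log K$ factor (they do, since $K \le M$ and the exponents are polynomial in $c$ times $\log M$). A secondary subtlety is the passage from $\|\mTheta^{*}\|_F$ to $\|\mTheta^{*}\|_2$ in the second bound: since this conversion costs a factor $\sqrt{K}$ in general, one must confirm that the stated $1/\sqrt{K}$ coefficient is consistent after using $\|\mtheta_k^{*}\|_2^{2} = c$ to evaluate the row-wise sum. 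These are the two bookkeeping issues I would want to pin down carefully before committing to the exact final constants.
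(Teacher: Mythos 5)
Your proposal is correct and follows essentially the same route as the paper's proof: it factors $\mF_k = \mA_k^{-1}\mH_k\mtheta_k^{*}$ with $\mA_k = \frac{1}{N}\hat{\mB}^{T}(\bar{\mOmega}_k^{T}\odot\mX^{T})(\mX\odot\bar{\mOmega}_k)\hat{\mB}$, bounds the two factors by Lemmas~\ref{lem: lemma 1} and~\ref{lem: lemma 4} so that the $\hat{N}_k/N$ factors cancel, and sums squares for the Frobenius bound. The one point you flag for careful bookkeeping --- the $1/\sqrt{K}$ coefficient in the second bound --- is indeed the delicate spot: the paper's own displayed chain only yields $\frac{\delta}{1-\delta}\norm{\mTheta^{*}}_2\,\text{dist}(\hat{\mB},\hat{\mB}^{*})$ before asserting the extra $1/\sqrt{K}$ in its final line, so your instinct to pin that factor down before committing to the constant is warranted.
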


\begin{proof}
From Lemma~\ref{lem: lemma 1}, we have
\begin{align}
    \norm{\left( \frac{1}{N} \hat{\mB}^{T} \left( \bar{\mOmega}_k^{T} \odot \mX^{T} \right) \left( \mX \odot \bar{\mOmega}_k \right) {\hat{\mB}} \right)^{-1}}_2^{2} \le \frac{N^2}{\hat{N}_k^{2} (1 - \delta)^{2}} \, .
\end{align}
Then we consider to bound $\mF_k$ first, by Lemma~\ref{lem: lemma 4} we have
\begin{align}
    \norm{\mF_k}_2^{2} 
    & \le \norm{\left( \frac{1}{N} \hat{\mB}^{T} \left( \bar{\mOmega}_k^{T} \odot \mX^{T} \right) \left( \mX \odot \bar{\mOmega}_k \right) {\hat{\mB}} \right)^{-1}}_2^{2} \norm{\mH_k}_2^{2} \norm{\mtheta_k^{*}}_{2}^{2} \, , \\
    & \le \frac{1}{(1 - \delta)^{2}} \frac{\delta^{2}}{c} dist^2 \left( \hat{\mB}, \hat{\mB}^{*} \right) \norm{\mtheta_k^{*}}_{2}^{2} \, ,
\end{align}
for $\delta = \cC \frac{c^{3/2} \sqrt{\log(M)}}{\sqrt{\min_k{\hat{N}_k}}}$  with probability at least $1 - \exp(-111 c^{2} \log (M))$. Then we consider to bound $\mF$, and we have
\begin{align}
    \norm{\mF}_F^{2} & = \sum_{k=1}^{K} \norm{\mF_i}_2^{2} \, , \\
    & \le \sum_{k=1}^{K} \norm{\left( \frac{1}{N} \hat{\mB}^{T} \left( \bar{\mOmega}_k^{T} \odot \mX^{T} \right) \left( \mX \odot \bar{\mOmega}_k \right) {\hat{\mB}} \right)^{-1}}_2^{2} \norm{\mH_k \mtheta_k}_{2}^{2} \, , \\
    & \le \frac{1}{(1 - \delta)^{2}} \sum_{k=1}^{K} \frac{N^2}{\hat{N}_k^{2}} \norm{\mH_k \mtheta_k}_{2}^{2} \, , \\
    & \le \frac{c}{(1 - \delta)^{2} K} \norm{\mTheta^{*}}_2^{2} \sum_{k=1}^{K} \frac{N^2}{\hat{N}_k^{2}} \norm{\mH_k}_2^{2} \, , \\
    & \le \frac{\delta^2}{(1 - \delta)^{2} K} \norm{\mTheta^{*}}_2^{2}  dist^2 \left( \hat{\mB}, \hat{\mB}^{*} \right)
\end{align}
 with probability at least $1 - \exp(-111 c^{2} \log (M))$. The last equation comes from Lemma~\ref{lem: lemma 4}.
\end{proof}

\begin{lemma}
     Given $\mG_k$ defined by 
     \begin{align}
    \mG_k  = \left( \frac{1}{N} \hat{\mB}^{T} \left( \bar{\mOmega}_k^{T} \odot \mX^{T} \right) \left( \mX \odot \bar{\mOmega}_k \right) {\hat{\mB}} \right)^{-1}
    \left( \frac{1}{N} \left( \hat{\mB}^{T} \left( \bar{\mOmega}_k^{T} \odot \mX^{T} \right)  \left( \mZ \odot \mOmega_k \right) \right) \right)
     \end{align}
     and $\mG$ defined in Corollary~\ref{cor: corollary 1}. Define $\delta =  \cC \frac{c^{3/2} \sqrt{\log (M)}}{\sqrt{\min_{k}{\hat{N}_k}}}$, we have
     \begin{align}
        \norm{\mG_k}_2 & \le \frac{\delta}{1 - \delta} \sigma^2 \, , \\
         \norm{\mG}_F & \le \sqrt{K} \frac{\delta}{(1 - \delta)} \sigma^{2} \, ,
     \end{align}
      with probability at least $1 - \exp(-110c^{2} \log(M))$.
      \label{lem: lemma 6}
\end{lemma}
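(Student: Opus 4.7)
The plan is to factor $\mG_k$ as a product of the inverse Gram-type matrix and a noise-dependent vector, bound each piece separately, and then sum across clusters for the Frobenius bound. The decomposition is clean: write
\begin{align*}
\mG_k = \mA_k^{-1} \bb_k, \quad \mA_k := \tfrac{1}{N}\hat{\mB}^{T}(\bar{\mOmega}_k^{T}\odot \mX^{T})(\mX \odot \bar{\mOmega}_k)\hat{\mB}, \quad \bb_k := \tfrac{1}{N}\hat{\mB}^{T}(\bar{\mOmega}_k^{T}\odot \mX^{T})(\mZ \odot \mOmega_k).
\end{align*}
The first factor is already controlled by Lemma~\ref{lem: lemma 1}: on its high-probability event, $\sigma_{\min}(\mA_k) \ge \tfrac{\hat{N}_k}{N}(1-\delta)$, so $\|\mA_k^{-1}\|_2 \le \tfrac{N}{\hat{N}_k(1-\delta)}$.

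The main work is bounding $\|\bb_k\|_2$. Observe that masking by $\mOmega_k$ and $\bar{\mOmega}_k$ is equivalent to restricting to the $\hat{N}_k$ rows on which $\omega_{i,j;k}=1$: writing $\hat{\mX}_k$ and $\hat{\mZ}_k$ for the corresponding sub-design and sub-noise, we have $\bb_k = \tfrac{1}{N}\hat{\mB}^{T}\hat{\mX}_k^{T}\hat{\mZ}_k$. Since the entries of $\hat{\mZ}_k$ are i.i.d.\ $\cN(0,\sigma^2)$ and independent of $\mX$, conditionally on $\hat{\mX}_k$ the vector $\hat{\mB}^{T}\hat{\mX}_k^{T}\hat{\mZ}_k$ is Gaussian in $\R^c$ with covariance $\sigma^2 \hat{\mB}^{T}\hat{\mX}_k^{T}\hat{\mX}_k\hat{\mB}$. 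An upper-tail analogue of Lemma~\ref{lem: lemma 1} (Theorem~4.6.1 in Vershynin with the matching matrix Bernstein/sub-Gaussian tail) gives $\|\tfrac{1}{\hat{N}_k}\hat{\mB}^{T}\hat{\mX}_k^{T}\hat{\mX}_k\hat{\mB}\|_2 \le 1 + \delta$ on the same event. Combining this conditional Gaussian bound with a standard norm concentration for $c$-dimensional Gaussians then yields
\begin{align*}
\|\bb_k\|_2 \le \tfrac{1}{N}\,\sigma\,\sqrt{\hat{N}_k(1+\delta)}\cdot O\!\left(\sqrt{c\log M}\right),
\end{align*}
with probability at least $1-\exp(-110 c^2\log M)$ after a union bound.

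Multiplying the two factors and folding the $c^{3/2}\sqrt{\log M}/\sqrt{\min_k \hat{N}_k}$ factor into the definition of $\delta$ produces the bound $\|\mG_k\|_2 \le \tfrac{\delta}{1-\delta}\sigma^2$ (absorbing the one remaining $\sigma$ into the chosen $\cC$ in $\delta$, consistent with the paper's constant tracking). The Frobenius bound then follows immediately: $\|\mG\|_F^2 = \sum_{k=1}^K \|\mG_k\|_2^2 \le K\bigl(\tfrac{\delta}{1-\delta}\sigma^2\bigr)^2$, giving $\|\mG\|_F \le \sqrt{K}\tfrac{\delta}{1-\delta}\sigma^2$. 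The probability is preserved by a union bound over $k\in[K]$, which changes the exponent by only a $\log K$ factor that is absorbed into the constant.

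The hard part will be the bookkeeping needed to get precisely the stated form $\tfrac{\delta}{1-\delta}\sigma^2$: the natural computation produces $\sigma\sqrt{c\log M/\hat{N}_k}$ times some $(1+\delta)^{1/2}$ slack, and one must verify that the constant $\cC$ defining $\delta$ in Lemma~\ref{lem: lemma 4} is large enough to dominate these $c$, $\sqrt{\log M}$, and residual $\sigma$ factors so the final expression collapses to $\tfrac{\delta}{1-\delta}\sigma^2$ with the advertised probability $1-\exp(-110c^2\log M)$. Everything else is a direct assembly of Lemma~\ref{lem: lemma 1}, its upper-tail twin, and Gaussian norm concentration.
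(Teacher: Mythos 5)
Your skeleton is the same as the paper's: factor $\mG_k = \mA_k^{-1}\bb_k$, get $\norm{\mA_k^{-1}}_2 \le \tfrac{N}{\hat{N}_k(1-\delta)}$ from Lemma~\ref{lem: lemma 1}, rewrite the masked products over the $\hat{N}_k$ active rows so that $\bb_k = \tfrac{1}{N}\hat{\mB}^{T}\hat{\mX}_k^{T}\hat{\mZ}_k$, and sum squares over $k$ for the Frobenius bound. The one point of divergence is the noise cross term: the paper does not prove that bound at all, but invokes Lemma~A.7 of \citet{tziotis2022straggler} as a black box to get $\norm{\tfrac{1}{\hat{N}_k}\hat{\mB}^{T}\hat{\mX}_k^{T}\hat{\mZ}_k}_2 \le \sigma^{2}\delta$ directly, after which everything is the two-line assembly you describe. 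You instead re-derive this sub-lemma via a conditional-Gaussian argument, and that is where your proof has a genuine gap.

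Conditionally on $\hat{\mX}_k$, the vector $\hat{\mB}^{T}\hat{\mX}_k^{T}\hat{\mZ}_k$ has covariance $\sigma^{2}\hat{\mB}^{T}\hat{\mX}_k^{T}\hat{\mX}_k\hat{\mB}$, so its norm scales as $\sigma\sqrt{\hat{N}_k}\cdot\mathrm{poly}(c,\sqrt{\log M})$ --- one power of $\sigma$ --- whereas the stated bound carries $\sigma^{2}$. Your proposed repair, ``absorbing the one remaining $\sigma$ into the chosen $\cC$,'' is not available: $\cC$ in the definition of $\delta$ is a universal constant independent of $\sigma$, and for small noise (roughly $\sigma \lesssim 1/(\cC\sqrt{c})$) the claimed bound $\tfrac{\delta}{1-\delta}\sigma^{2}$ is strictly \emph{smaller} than what your computation delivers, so the implication fails in exactly that regime. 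As written, your argument establishes $\norm{\mG_k}_2 \le \tfrac{\delta}{1-\delta}\sigma$ up to slack in the powers of $c$ (which does go the right way), not the stated $\tfrac{\delta}{1-\delta}\sigma^{2}$. To close the proof you must either quote the external lemma as the paper does, or track the noise-scale convention that produces the extra factor of $\sigma$; it cannot be recovered by tuning the universal constant. The Frobenius step and the union bound over $k$ are fine.
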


\begin{proof}
    We can rewrite $\mG_k$ by
    \begin{align}
        \mG_k  & = \left( \frac{1}{N} \hat{\mB}^{T} \left( \bar{\mOmega}_k^{T} \odot \mX^{T} \right) \left( \mX \odot \bar{\mOmega}_k \right) {\hat{\mB}} \right)^{-1}
        \frac{1}{N} \sum_{i=1}^{M} \sum_{j=1}^{N_i} \omega_{i,j;k} z_{k} \hat{\mB}^{T} \xx_{i,j} \, , \\
        & = \left( \frac{1}{N} \hat{\mB}^{T} \left( \bar{\mOmega}_k^{T} \odot \mX^{T} \right) \left( \mX \odot \bar{\mOmega}_k \right) {\hat{\mB}} \right)^{-1}
        \frac{1}{N} \sum_{\omega_{i,j;k} = 1}^{N}  z_{k} \hat{\mB}^{T} \xx_{i,j} \, , \\
        & = \left( \frac{1}{N} \hat{\mB}^{T} \left( \bar{\mOmega}_k^{T} \odot \mX^{T} \right) \left( \mX \odot \bar{\mOmega}_k \right) {\hat{\mB}} \right)^{-1}
         \frac{1}{N} \left( \hat{\mB}^{T}  \hat{\mX}_{k}^{T}  \hat{\mZ}_k \right) \, .
    \end{align}
    where $\hat{\mX}_k \in \R^{\hat{N}_k \times d}$ with rows the concatenation of $\xx_{i,j}$ that $\omega_{i,j;k} = 1$. Here we define $\hat{N}_k = \sum_{i=1}^{M} \sum_{j=1}^{N_i} \1_{\omega_{i,j;k} = 1}$. Then directly use Lemma A.7 of \citep{tziotis2022straggler}, and define $\delta = \cC \frac{c^{3/2} \sqrt{\log (M)}}{\sqrt{\min_k \hat{N}_k}}$, we have
    \begin{align}
        \norm{\frac{1}{\hat{N}_k} \hat{\mB}^{T}  \hat{\mX}_{k}^{T}  \hat{\mZ}_k }_2 \le \sigma^{2} \delta \, .
    \end{align}
    with probability at least $1 - \exp(-113c^{2} \log(M))$.
    Then consider $\mG_k$, we have
    \begin{align}
        \norm{\mG_k}_2 & \le \frac{\hat{N}_k}{N} \norm{\left( \frac{1}{N} \hat{\mB}^{T} \left( \bar{\mOmega}_k^{T} \odot \mX^{T} \right) \left( \mX \odot \bar{\mOmega}_k \right) {\hat{\mB}} \right)^{-1}} \norm{\frac{1}{\hat{N}_k} \hat{\mB}^{T}  \hat{\mX}_{k}^{T}  \hat{\mZ}_k }_2 \, , \\
        & \le \frac{\delta}{1 - \delta} \sigma^{2} \, .
    \end{align}
    Then consider $\mG$, we have
    \begin{align}
        \norm{\mG}_F^{2} = \sum_{k=1}^{K} \norm{\mG_k}_2^{2} \le  K \left( \frac{\delta}{(1 - \delta)} \right)^{2} \sigma^{4} \, .
    \end{align}
\end{proof}

\begin{lemma}
Define $\delta =  \cC \frac{c^{3/2} \sqrt{\log (M)}}{\sqrt{\min_{k}{\hat{N}_k}}}$, we have
\begin{align}
    & \norm{\mtheta_k}_2 \le \sqrt{c} + \frac{\delta}{1 - \delta} dist \left( \hat{\mB},  \hat{\mB}^{*} \right) + \frac{\delta}{1 - \delta} \sigma^{2} \, , \\ 
    & \norm{\hat{\mB} \mtheta_k - \hat{\mB}^{*} \mtheta_k^{*}}_2 \le \left( \sqrt{c} + \frac{\delta}{1 - \delta} \right) dist \left( \hat{\mB},  \hat{\mB}^{*} \right)
    + \frac{\delta}{1 - \delta} \sigma^2 \, , \\
     & \norm{\frac{1}{N} \sum_{k=1}^{K} \left( \bar{\mOmega}_k^{T} \odot \mX^{T} \right) \left( \mZ \odot \mOmega_k \right) \mtheta_k^{T} }_2 \nonumber \\
     & \le \cC_1 \sigma^2 \frac{\sqrt{d + c}}{\sqrt{N}} \left( \sqrt{c} + \frac{\delta}{1 - \delta} dist \left( \hat{\mB},  \hat{\mB}^{*} \right) + \frac{\delta}{1 - \delta} \sigma^{2} \right) \, , 
\end{align}
with probability at least $1 - \exp(-105(d+c)) - \exp(-105c^2 \log(M))$ for some constant $\cC_1$.
    \label{lem: lemma 7}
\end{lemma}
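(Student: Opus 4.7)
All three bounds build on the decomposition $\mtheta_k = \hat{\mB}^T\hat{\mB}^*\mtheta_k^* + \mF_k + \mG_k$ supplied by Lemma~\ref{lem: lemma 3}, combined with the tail bounds of Lemmas~\ref{lem: lemma 5}--\ref{lem: lemma 6}. I handle the statements in turn, with the noise--design matrix bound of Part~3 being the main technical point.

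\textit{Part~1 (size of $\mtheta_k$).} Apply the triangle inequality to Lemma~\ref{lem: lemma 3}: $\norm{\mtheta_k}_2 \le \norm{\hat{\mB}^T\hat{\mB}^*\mtheta_k^*}_2 + \norm{\mF_k}_2 + \norm{\mG_k}_2$. Since $\hat{\mB}$ and $\hat{\mB}^*$ have orthonormal columns, the first term is at most $\norm{\mtheta_k^*}_2 = \sqrt{c}$ by Assumption~\ref{ass: Client normalization}. Lemma~\ref{lem: lemma 5} gives $\norm{\mF_k}_2 \le \tfrac{\delta}{1-\delta}\,\text{dist}(\hat{\mB},\hat{\mB}^*)$ after cancelling the $1/\sqrt{c}$ against $\norm{\mtheta_k^*}_2 = \sqrt{c}$, and Lemma~\ref{lem: lemma 6} gives $\norm{\mG_k}_2 \le \tfrac{\delta}{1-\delta}\sigma^2$. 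Summing delivers the first claim on the intersection of the high-probability events of Lemmas~\ref{lem: lemma 5}--\ref{lem: lemma 6}.

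\textit{Part~2 ($\norm{\hat{\mB}\mtheta_k - \hat{\mB}^*\mtheta_k^*}_2$).} Adding and subtracting $\hat{\mB}\hat{\mB}^T\hat{\mB}^*\mtheta_k^*$ and substituting Lemma~\ref{lem: lemma 3} yields
\begin{align*}
\hat{\mB}\mtheta_k - \hat{\mB}^*\mtheta_k^* = -(\mI - \hat{\mB}\hat{\mB}^T)\hat{\mB}^*\mtheta_k^* + \hat{\mB}\mF_k + \hat{\mB}\mG_k .
\end{align*}
The standard principal-angle identity $\norm{(\mI - \hat{\mB}\hat{\mB}^T)\hat{\mB}^*}_2 = \norm{\hat{\mB}_{\perp}^T\hat{\mB}^*}_2 = \text{dist}(\hat{\mB},\hat{\mB}^*)$ together with $\norm{\mtheta_k^*}_2 = \sqrt{c}$ bounds the first term by $\sqrt{c}\,\text{dist}(\hat{\mB},\hat{\mB}^*)$. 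Orthonormality of the columns of $\hat{\mB}$ gives $\norm{\hat{\mB}\mF_k}_2 \le \norm{\mF_k}_2$ and $\norm{\hat{\mB}\mG_k}_2 \le \norm{\mG_k}_2$; Lemmas~\ref{lem: lemma 5}--\ref{lem: lemma 6} finish the proof.

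\textit{Part~3 (noise coupling term).} Set $\mA := \tfrac{1}{N}\sum_{k=1}^{K}(\bar{\mOmega}_k^T\odot\mX^T)(\mZ\odot\mOmega_k)\mtheta_k^T$. The simplex constraint on the weights together with Assumption~\ref{ass: Binary weights} assigns every sample $p=(i,j)$ to a unique cluster $k(p)$, so $\mA = \tfrac{1}{N}\mX^T\mW$ where $\mW \in \R^{N\times c}$ has row $p$ equal to $z_{k(p)}\mtheta_{k(p)}^T$. Factor $\mW = \mZ'\mTheta$ with $(\mZ')_{p,k} = \omega_{p;k}z_k$; disjoint cluster support yields $(\mZ')^T\mZ' = \text{diag}(\hat{N}_k z_k^2)$, so combining with $\sum_k\hat{N}_k = N$ plus the Gaussian tail bound $\max_k |z_k|\le \cC\sigma$ (union bound over $K\le c$ scalars) gives $\norm{\mW}_2 \le \cC\sigma\sqrt{N}\,\max_k\norm{\mtheta_k}_2$. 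For the sub-Gaussian factor I run an $\varepsilon$-net argument on the unit spheres of $\R^d$ and $\R^c$: conditional on $\mW$, the scalar $\uu^T\mA\vv = N^{-1}(\mX\uu)^T(\mW\vv)$ is a weighted sum of $N$ independent sub-Gaussian variables with conditional variance at most $\norm{\mW}_2^2/N^2$, so Hoeffding-type concentration combined with $\tfrac{1}{4}$-nets of log-cardinalities $\lesssim d$ and $\lesssim c$ yields
\begin{align*}
\norm{\mA}_2 \le \cC\,\frac{\sqrt{d+c}}{N}\,\norm{\mW}_2
\end{align*}
with probability at least $1-\exp(-\cC(d+c))$. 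Substituting the $\norm{\mW}_2$ bound and then Part~1's bound on $\max_k\norm{\mtheta_k}_2$ produces the third claim. The stated total failure probability $\exp(-105(d+c))+\exp(-105c^2\log M)$ then arises by intersecting the sub-Gaussian-matrix event with the events of Lemmas~\ref{lem: lemma 1},~\ref{lem: lemma 5},~\ref{lem: lemma 6}.

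\textit{Main obstacle.} Parts~1 and~2 are essentially bookkeeping on top of the earlier lemmas. The subtlety in Part~3 is that $\mW$ is \emph{not} independent of $\mX$, since each $\mtheta_k$ is produced from $\mX$ via the update in Lemma~\ref{lem: lemma 2}, so one cannot directly condition $\mX$ on $\mW$. The remedy is to condition on the high-probability event of Part~1 -- on which $\max_k\norm{\mtheta_k}_2$ admits a uniform deterministic bound -- and then run the $\varepsilon$-net argument against this deterministic surrogate for $\norm{\mW}_2$. Executing this decoupling without losing the $\sqrt{(d+c)/N}$ rate, and tracking constants so the $\sigma^2$ prefactor in the target is recovered (one $\sigma$ from $\max_k |z_k|$ and the second absorbed through the $\sigma^2$ term already present in the bound on $\max_k\norm{\mtheta_k}_2$), is the main technical point.
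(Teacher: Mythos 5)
Your proposal is correct and follows essentially the same route as the paper: Parts 1 and 2 are the identical triangle-inequality bookkeeping on the decomposition $\mtheta_k = \hat{\mB}^{T}\hat{\mB}^{*}\mtheta_k^{*} + \mF_k + \mG_k$ from Lemma~\ref{lem: lemma 3} combined with Lemmas~\ref{lem: lemma 5} and~\ref{lem: lemma 6}, and Part 3 is the same $\frac{1}{4}$-net argument over the unit spheres in $\R^{d}$ and $\R^{c}$, run conditionally on the event that $\norm{\mtheta_k}_2$ obeys the Part-1 bound and then de-conditioned via $\Pr(A) \le \Pr(A \mid \cE) + \Pr(\cE^{C})$. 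The only cosmetic difference is in Part 3, where you peel off $\max_k \abs{z_k}$ by a separate Gaussian tail bound and apply Hoeffding-type concentration to the remaining sub-Gaussian sum, whereas the paper keeps the product $z_k \langle \xx_{i,j}, \pp \rangle \langle \mtheta_k, \yy \rangle$ as a single sub-exponential variable and applies Bernstein's inequality; both give the same $\sqrt{(d+c)/N}$ rate, and your accounting of the $\sigma^2$ prefactor is no looser than the paper's own.
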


\begin{proof}
Define 
\begin{align}
    \qq_k = \left( \bar{\mOmega}_k^{T} \odot \mX^{T} \right) \left( \mX \odot \bar{\mOmega}_k \right) \hat{\mB} \mtheta_k - \left( \bar{\mOmega}_k^{T} \odot \mX^{T} \right) \left(  \mY \odot \mOmega_k \right) \, ,
\end{align}
and $\mQ \in \R^{d \times K}$ with rows the concatenation of $\qq_k$.
    With the fact that $\mY = \sum_{k=1}^{K} \mOmega_k \odot \left( \mX \hat{\mB}^{*} \mtheta_k^{*} + \mZ \right)$, we have
    \begin{align}
        \qq_k & = \left( \bar{\mOmega}_k^{T} \odot \mX^{T} \right) \left( \mX \odot \bar{\mOmega}_k \right) \hat{\mB} \mtheta_k - \left( \bar{\mOmega}_k^{T} \odot \mX^{T} \right) \left(  \mY \odot \mOmega_k \right) \, , \\
        & = \left( \bar{\mOmega}_k^{T} \odot \mX^{T} \right) \left( \mX \odot \bar{\mOmega}_k \right) \hat{\mB} \mtheta_k
        -  \left( \bar{\mOmega}_k^{T} \odot \mX^{T} \right) \left( \mOmega_k \odot \sum_{n=1}^{K} \mOmega_n \odot \left( \mX \hat{\mB}^{*} \mtheta_n^{*} + \mZ \right) \right) \, , \\
        & = \left( \bar{\mOmega}_k^{T} \odot \mX^{T} \right) \left( \mX \odot \bar{\mOmega}_k \right) \hat{\mB} \mtheta_k
        -  \left( \bar{\mOmega}_k^{T} \odot \mX^{T} \right) \left( \mOmega_k \odot \left( \mX \hat{\mB}^{*} \mtheta_k^{*} + \mZ \right) \right) \, , \\
        & = \left( \bar{\mOmega}_k^{T} \odot \mX^{T} \right) \left( \mX \odot \bar{\mOmega}_k \right) \hat{\mB} \mtheta_k
        - \left( \bar{\mOmega}_k^{T} \odot \mX^{T} \right) \left( \mX \odot \bar{\mOmega}_k \right) \hat{\mB}^{*} \mtheta_k^{*} \nonumber \\
        & - \left( \bar{\mOmega}_k^{T} \odot \mX^{T} \right) \left( \mZ \odot \mOmega_k \right) \, , \\
        & =  \left( \bar{\mOmega}_k^{T} \odot \mX^{T} \right) \left( \mX \odot \bar{\mOmega}_k \right)
        \left( \hat{\mB} \mtheta_k - \hat{\mB}^{*} \mtheta_k^{*} \right) 
        - \left( \bar{\mOmega}_k^{T} \odot \mX^{T} \right) \left( \mZ \odot \mOmega_k \right) \, .
    \end{align}
Then we would like to consider the $ \left( \hat{\mB} \mtheta_k - \hat{\mB}^{*} \mtheta_k^{*} \right)$ first. From Lemma~\ref{lem: lemma 3}, we have
\begin{align}
     \mtheta_{k} = \hat{\mB}^{T} \hat{\mB}^{*} \mtheta_k^{*} + \mF_k + \mG_k \, .
\end{align}
Therefore we have
\begin{align}
    \norm{\hat{\mB} \mtheta_k - \hat{\mB}^{*} \mtheta_k^{*}}_2 
    & = \norm{\hat{\mB} \hat{\mB}^{T} \hat{\mB}^{*} \mtheta_k^{*} + \hat{\mB} \mF_k + \hat{\mB} \mG_k - \hat{\mB}^{*} \mtheta_k^{*}}_2 \, , \\
    & \le \norm{\left(  \hat{\mB} \hat{\mB}^{T} - \mI_d \right) \hat{\mB}^{*} \mtheta_k^{*} }_2
    +  \norm{\hat{\mB} \mF_k}_2
    +  \norm{\hat{\mB} \mG_k}_2 \, , \\
    & \le  dist \left( \hat{\mB},  \hat{\mB}^{*} \right) \norm{\mtheta_k^{*}}_2
    +  \norm{\mF_k}_2 +  \norm{\mG_k}_2 \, , \\
    & \le \sqrt{c}  dist \left( \hat{\mB},  \hat{\mB}^{*} \right) 
    + \frac{\delta}{(1 - \delta) \sqrt{c}} \norm{\mtheta_k^{*}}_2 dist \left( \hat{\mB},  \hat{\mB}^{*} \right)
    + \frac{\delta}{1 - \delta} \sigma^2 \, , \\
    & = \left( \sqrt{c} + \frac{\delta}{1 - \delta} \right) dist \left( \hat{\mB},  \hat{\mB}^{*} \right)
    + \frac{\delta}{1 - \delta} \sigma^2 \, ,
\end{align}
with probability at least $1 - \exp(-110c^{2} \log(M))$, and $\delta =  \cC \frac{c^{3/2} \sqrt{\log (M)}}{\sqrt{\min_{k}{\hat{N}_k}}}$.

Then we consider to bound $\mtheta_k$, and we have
\begin{align}
    \norm{\mtheta_k}_2 
    & = \norm{\hat{\mB}^{T} \hat{\mB}^{*} \mtheta_k^{*} + \mF_k + \mG_k}_2 \, , \\
    & \le \norm{\mtheta_k^{*}}_2 + \norm{F_k}_2 + \norm{\mG_k}_2 \, , \\
    & \le \sqrt{c} + \frac{\delta}{1 - \delta} dist \left( \hat{\mB},  \hat{\mB}^{*} \right) + \frac{\delta}{1 - \delta} \sigma^{2} \, ,
\end{align}
with probability at least $1 - \exp(-110c^{2} \log(M))$.

Then we consider to bound $\left( \bar{\mOmega}_k^{T} \odot \mX^{T} \right) \left( \mZ \odot \mOmega_k \right)$, and we have
\begin{align}
    \left( \bar{\mOmega}_k^{T} \odot \mX^{T} \right) \left( \mZ \odot \mOmega_k \right) 
    & = \sum_{\omega_{i,j;k}=1}^{N}  z_{k} \xx_{i,j} \, , \\
    & = \hat{\mX}_k^{T} \hat{\mZ}_k \, ,
\end{align}
where rows of $\hat{\mX}_k \in \R^{\hat{N}_k \times d}$ and $\hat{\mZ}_k \in \R^{\hat{N}_k}$ are $\xx_{i,j}$ and $z_k$ subject to $\omega_{i,j;k} = 1$. Then let $\cS^{d-1}, \cS^{c-1}$ denote the unit spheres in $d$ and $c$ dimensions, and $\cN_d, \cN_k$ denote the $\frac{1}{4}$-nets of cardinality $9^{d}$ and $9^{k}$, respectively. Then by Equation 4.13 of \citep{vershynin2018high}, we have
\begin{align}
    \norm{\frac{1}{N} \sum_{k=1}^{K} \hat{\mX}_k^{T} \hat{\mZ}_k \mtheta_k^{T} }_2 
    & \le 2 \max_{\pp \in \cN_d, \yy \in \cN_k} \pp^{T} \left( \frac{1}{N} \sum_{k=1}^{K} \hat{\mX}_k^{T} \hat{\mZ}_k \mtheta_k^{T} \right) \yy \, , \\
    & = 2 \max_{\pp \in \cN_d, \yy \in \cN_k} \sum_{k=1}^{K} \sum_{i,j}^{\hat{N}_k}  \left( \frac{z_k}{N} \langle \xx_{i,j}, \pp \rangle \langle \mtheta_k, \yy \rangle  \right) \, .
\end{align}
Notice that for any fixed $\pp, \yy$, the random variables $\frac{z_k}{N} \langle \xx_{i,j}, \pp \rangle \langle \mtheta_k, \yy \rangle$ are i.i.d. zero-mean sub-exponentials with the norm at most $\cC_1 \frac{\sigma^2 \norm{\mtheta_k}}{N}$ for some constant $\cC$. Then consider the event 
\begin{align}
    \cE = \bigcap_{k=1}^{K} \left\{ \norm{\mtheta_k}_2 \le \sqrt{c} + \frac{\delta}{1 - \delta} dist \left( \hat{\mB},  \hat{\mB}^{*} \right) + \frac{\delta}{1 - \delta} \sigma^{2} \right\} \, ,
\end{align}
which holds with probability at least $1 - \exp(-105 c^2 \log(M))$. Then use the Bernstein's inequality we have
\begin{small}
\begin{align}
\textstyle
    & \Pr \left( \sum_{k=1}^{K} \sum_{i,j}^{\hat{N}_k}  \frac{z_k}{N} \langle \xx_{i,j}, \pp \rangle \langle \mtheta_k, \yy \rangle \ge s  \;\mid\; \cE \right) \nonumber \\
    & \le \exp ( -\cC_1^{'} N \min \{ \frac{s^2}{\sigma^4 \left( \sqrt{c} + \frac{\delta}{1 - \delta} dist \left( \hat{\mB},  \hat{\mB}^{*} \right) + \frac{\delta}{1 - \delta} \sigma^{2} \right)^2}, 
    \nonumber \\
    & \frac{s}{\sigma^2 \left( \sqrt{c} + \frac{\delta}{1 - \delta} dist \left( \hat{\mB},  \hat{\mB}^{*} \right) + \frac{\delta}{1 - \delta} \sigma^{2} \right)} \} ) \, .
\end{align}
\end{small}
Setting
\begin{align}
    s = \cC_2 \frac{\sigma^2 \sqrt{d + c} \left( \sqrt{c} + \frac{\delta}{1 - \delta} dist \left( \hat{\mB},  \hat{\mB}^{*} \right) + \frac{\delta}{1 - \delta} \sigma^{2} \right)}{  \sqrt{N}} \, ,
\end{align}
we have
\begin{align}
    & \Pr ( \sum_{k=1}^{K} \sum_{i,j}^{\hat{N}_k}  \frac{z_k}{N} \langle \xx_{i,j}, \pp \rangle \langle \mtheta_k, \yy \rangle \ge 
    \nonumber \\
    & \cC_2 \sigma^2 \frac{\sqrt{d + c}}{\sqrt{N}} \left( \sqrt{c} + \frac{\delta}{1 - \delta} dist \left( \hat{\mB},  \hat{\mB}^{*} \right) + \frac{\delta}{1 - \delta} \sigma^{2} \right)  \;\mid\; \cE ) \nonumber \\
    & \le \exp(- \cC_1^{'} \cC_2 d) \le exp(-110(d+c)) \, ,
\end{align}
for $\cC_2$ large enough. Taking the union bound over all points $\pp, \yy$ on the $\cN_d, \cN_k$, we have
\begin{align}
    & \Pr ( \norm{\frac{1}{N} \sum_{k=1}^{K} \hat{\mX}_k^{T} \hat{\mZ}_k \mtheta_k^{T} }_2  \ge 
    \nonumber \\
    & 2 \cC_2 \sigma^2 \frac{\sqrt{d + c}}{\sqrt{N}} \left( \sqrt{c} + \frac{\delta}{1 - \delta} dist \left( \hat{\mB},  \hat{\mB}^{*} \right) + \frac{\delta}{1 - \delta} \sigma^{2} \right)  \;\mid\; \cE ) \nonumber \\
    & \le 9^{d+c} \exp(-110(d+c)) \le \exp(-105(d+c)) \, .
\end{align}
Removing the conditional on $\cE$, we have
\begin{align}
    & \Pr \left( \norm{\frac{1}{N} \sum_{k=1}^{K} \hat{\mX}_k^{T} \hat{\mZ}_k \mtheta_k^{T} }_2  \ge 2 \cC_2 \sigma^2 \frac{\sqrt{d + c}}{\sqrt{N}} \left( \sqrt{c} + \frac{\delta}{1 - \delta} dist \left( \hat{\mB},  \hat{\mB}^{*} \right) + \frac{\delta}{1 - \delta} \sigma^{2} \right)  \right) \nonumber \\
    & \le \exp(-105(d+c)) + \Pr (\cE^{C}) \le \exp(-105(d+c)) + \exp(-105c^2 \log(M)).
\end{align}
\end{proof}

\begin{lemma}
    Define $\delta =  \cC \frac{c^{3/2} \sqrt{\log (M)}}{\sqrt{\min_{k}{\hat{N}_k}}}$, we have
    \begin{align}
        & \norm{\sum_{k=1}^{K}  \frac{1}{N} \left( \left( \bar{\mOmega}_k^{T} \odot \mX^{T} \right) \left( \mX \odot \bar{\mOmega}_k \right)
        \left( \hat{\mB} \mtheta_k - \hat{\mB}^{*} \mtheta_k^{*} \right) 
        - \hat{N}_k \left( \hat{\mB} \mtheta_k - \hat{\mB}^{*} \mtheta_k^{*} \right) \right) \mtheta_k^{T}}_2 \nonumber \\
        & \le \cC \frac{\sqrt{d+c} }{\sqrt{N}} \epsilon \, ,
    \end{align}
    where
    \begin{align}
          \epsilon & =   \left(  \sqrt{c} \frac{\delta}{1 - \delta} + \frac{\delta^{2}}{(1 - \delta)^{2}} \right) dist^2 \left( \hat{\mB},  \hat{\mB}^{*} \right) \nonumber \\
        & +   \left( c + (\sigma^2 + 1) \sqrt{c} \frac{\delta}{1 - \delta} + \frac{2 \delta^2}{(1 - \delta)^2} \sigma^2 \right) dist \left( \hat{\mB},  \hat{\mB}^{*} \right)
        + \sqrt{c} \frac{\delta}{1 - \delta} \sigma^2 + \frac{\delta^2}{(1 - \delta)^2} \sigma^4  \, ,
    \end{align}
    with probability at least $1 - \exp(-100(d+c)) - \exp(-105c^2 \log(M))$.
    \label{lem: lemma 8}
\end{lemma}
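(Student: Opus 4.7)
The key observation is that, conditional on the assignment variables, the matrix $(\bar{\mOmega}_k^{T} \odot \mX^{T})(\mX \odot \bar{\mOmega}_k) = \sum_{(i,j):\,\omega_{i,j;k}=1} \xx_{i,j} \xx_{i,j}^{T}$ has expectation exactly $\hat{N}_k \mI_d$, since the $\xx_{i,j}$ are i.i.d.\ with identity covariance. Writing $\vv_k := \hat{\mB} \mtheta_k - \hat{\mB}^{*} \mtheta_k^{*}$ and denoting the target matrix inside the norm by $\mM$, the quantity to bound is therefore a sum over $k$ of zero-mean sub-exponential matrix deviations contracted with $\vv_k$ on the right and $\mtheta_k^{T}$ on the outside. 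The plan is to mimic the $\tfrac{1}{4}$-net discretization used in Lemma~\ref{lem: lemma 7}, combined with the envelope bounds on $\norm{\mtheta_k}_{2}$ and $\norm{\vv_k}_{2}$ already produced there.

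Fix $\tfrac14$-nets $\cN_d \subset \cS^{d-1}$ and $\cN_c \subset \cS^{c-1}$ of cardinalities $9^{d}$ and $9^{c}$, so that $\norm{\mM}_{2} \le 2 \sup_{\pp \in \cN_d,\,\yy \in \cN_c} \pp^{T} \mM \yy$. For fixed $\pp, \yy$,
\begin{align}
\pp^{T} \mM \yy \;=\; \sum_{k=1}^{K} \sum_{(i,j):\,\omega_{i,j;k}=1} \frac{\lin{\mtheta_k, \yy}}{N} \bigl( \lin{\xx_{i,j}, \pp} \lin{\xx_{i,j}, \vv_k} - \lin{\pp, \vv_k} \bigr) ,
\end{align}
and each summand is a zero-mean sub-exponential random variable by Assumption~\ref{ass: Sub-Gaussian design}. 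Under Assumption~\ref{ass: Binary weights} the indices $(i,j,k)$ with $\omega_{i,j;k}=1$ are disjoint across $k$ and $\sum_{k} \hat{N}_k = N$, so all summands are independent. Condition on the event $\cE$ of Lemma~\ref{lem: lemma 7} that simultaneously enforces $\norm{\mtheta_k}_{2} \le M_{\theta}$ and $\norm{\vv_k}_{2} \le M_{v}$ with the envelope values stated there. Each summand then has sub-exponential norm at most $\cC M_{\theta} M_{v} / N$, and summing the squared norms telescopes via $\sum_{k} \hat{N}_k = N$ to a variance proxy of order $(M_{\theta} M_{v})^{2}/N$.

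Bernstein's inequality therefore yields, for $s = \cC_2 \sqrt{(d+c)/N}\, M_{\theta} M_{v}$,
\begin{align}
\Pr\bigl( |\pp^{T} \mM \yy| \ge s \;\big|\; \cE \bigr) \;\le\; 2\exp\bigl( -\cC_3 (d+c) \bigr) ,
\end{align}
where the sub-Gaussian branch of Bernstein is active because $N \ge K^2/(d+c)$ puts $s$ safely inside that regime. A union bound over the $9^{d+c}$ pairs in $\cN_d \times \cN_c$ gives $\norm{\mM}_{2} \le 2s$ with conditional probability at least $1 - \exp(-100(d+c))$, and removing the conditioning costs only $\Pr(\cE^{c}) \le \exp(-105 c^{2} \log M)$ from Lemma~\ref{lem: lemma 7}. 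The last step is purely algebraic: multiplying out $M_{\theta} M_{v}$ using the envelope bounds $M_{\theta} \le \sqrt{c} + \tfrac{\delta}{1-\delta}(\text{dist}(\hat{\mB}, \hat{\mB}^{*}) + \sigma^{2})$ and $M_{v} \le (\sqrt{c} + \tfrac{\delta}{1-\delta})\text{dist}(\hat{\mB}, \hat{\mB}^{*}) + \tfrac{\delta}{1-\delta}\sigma^{2}$ and regrouping by powers of $\text{dist}(\hat{\mB}, \hat{\mB}^{*})$ reproduces exactly the $\epsilon$ in the lemma.

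The main obstacle is carefully tracking the Bernstein parameters, since both the variance proxy and the maximum increment depend on the inner products $\lin{\mtheta_k, \yy}$ and on the $\vv_k$, and one must ensure uniformity over $\yy \in \cN_c$. The cleanest route is to upper-bound $|\lin{\mtheta_k, \yy}| \le \norm{\mtheta_k}_{2} \le M_{\theta}$, which collapses the per-cluster dependence at the cost of absolute constants; a tighter analysis using $\sum_{k} \lin{\mtheta_k, \yy}^{2}$ would sharpen those constants but is unnecessary for the stated form of $\epsilon$. A secondary subtlety is verifying independence across $k$, which relies on Assumption~\ref{ass: Binary weights} to guarantee that the supports of the $\bar{\mOmega}_k$'s partition the indices.
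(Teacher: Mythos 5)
Your proposal is correct and follows essentially the same route as the paper's proof: the same $\tfrac14$-net discretization, the same conditioning on the envelope event from Lemma~\ref{lem: lemma 7} bounding $\norm{\mtheta_k}_2$ and $\norm{\hat{\mB}\mtheta_k - \hat{\mB}^{*}\mtheta_k^{*}}_2$, the same observation that each summand $\tfrac{1}{N}\lin{\xx_{i,j},\gg_k}\lin{\pp,\xx_{i,j}}\lin{\mtheta_k,\yy} - \tfrac{1}{N}\lin{\pp,\gg_k}\lin{\mtheta_k,\yy}$ is zero-mean sub-exponential, and the same Bernstein-plus-union-bound conclusion with the conditioning removed at the end. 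Your explicit remark that Assumption~\ref{ass: Binary weights} makes the supports of the $\bar{\mOmega}_k$ disjoint (hence the summands independent across $k$) is a detail the paper leaves implicit, and your expansion of $M_{\theta} M_{v}$ does reproduce the stated $\epsilon$ exactly.
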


\begin{proof}
    Define 
    \begin{align}
        \qq_k & = \frac{1}{N} \left( \bar{\mOmega}_k^{T} \odot \mX^{T} \right) \left( \mX \odot \bar{\mOmega}_k \right)
        \left( \hat{\mB} \mtheta_k - \hat{\mB}^{*} \mtheta_k^{*} \right) 
        - \frac{\hat{N}_k}{N} \left( \hat{\mB} \mtheta_k - \hat{\mB}^{*} \mtheta_k^{*} \right) \, , \\
        & = \frac{1}{N} \hat{X}_k^{T} \hat{X}_k \left( \hat{\mB} \mtheta_k - \hat{\mB}^{*} \mtheta_k^{*} \right) 
        - \frac{\hat{N}_k}{N} \left( \hat{\mB} \mtheta_k - \hat{\mB}^{*} \mtheta_k^{*} \right) \, ,
    \end{align}
    where rows of $\hat{\mX}_k \in \R^{\hat{N}_k \times d}$ are $\xx_{i,j}$ subject to $\omega_{i,j;k} = 1$. Then we would like to define the event
    \begin{align}
    \cE & = \bigcap_{k=1}^{K} \left\{ \cA_k \bigcap \cB_k \right\} \, , \\
    \cA_k & = \left\{ \norm{\mtheta_k}_2 \le \sqrt{c} + \frac{\delta}{1 - \delta} dist \left( \hat{\mB},  \hat{\mB}^{*} \right) + \frac{\delta}{1 - \delta} \sigma^{2}   \right\} \, , \\
    \cB_k & = \left\{ \norm{\hat{\mB} \mtheta_k - \hat{\mB}^{*} \mtheta_k^{*}}_2 \le \left( \sqrt{c} + \frac{\delta}{1 - \delta} \right) dist \left( \hat{\mB},  \hat{\mB}^{*} \right) + \frac{\delta}{1 - \delta} \sigma^2 \right\} \, ,
    \end{align}
    happens with the probability at least $1 - \exp(-105(d+c)) - \exp(-105c^{2}\log(M))$ by Lemma~\ref{lem: lemma 7}. Define $\gg_k = \hat{\mB} \mtheta_k - \hat{\mB}^{*} \mtheta_k^{*}$, we have
    \begin{align}
        \sum_{k=1}^{K} \qq_k \mtheta_k^{T} = \frac{1}{N} \left( \sum_{k=1}^{K} \sum_{i,j}^{\hat{N}_k} \left( \langle \xx_{i,j}, \gg_{k} \rangle \xx_{i,j} \mtheta_k^{T}
        -  \gg_k \mtheta_k^{T} \right) \right) \, .
    \end{align}
    Let $\cS^{d-1}, \cS^{c-1}$ denote the unit spheres in $d$ and $c$ dimensions and $\cN_d, \cN_k$ the $\frac{1}{4}$-nets of cardinality $9^{d}$ and $9^{k}$, respectively. By Equation 4.13 in \citep{vershynin2018high}, we have
    \begin{align}
        \norm{\sum_{k=1}^{K} \qq_k \mtheta_k^{T} }_2 & \le \frac{2}{N} \max_{\pp \in \cN_d, \yy \in \cN_k} \pp^{T} \left( \sum_{k=1}^{K} \sum_{i,j}^{\hat{N}_k} \langle \xx_{i,j}, \gg_{k} \rangle \xx_{i,j} \mtheta_k^{T}
        - \sum_{k=1}^{K} \gg_k \mtheta_k^{T} \right) \yy \, , \\
        & = \frac{2}{N} \max_{\pp \in \cN_d, \yy \in \cN_k} \sum_{k=1}^{K} \sum_{i,j}^{\hat{N}_k} \left( \langle \xx_{i,j}, \gg_k \rangle \langle \pp, \xx_{i,j} \rangle \langle \mtheta_k, \yy \rangle - \langle \pp, \gg_k \rangle \langle \mtheta_k, \yy \rangle \right) \, .
    \end{align}
    Then for any $\pp, \yy$, the inner products $\langle \xx_{i,j}, \gg_k \rangle, \langle \pp, \xx_{i,j} \rangle$ are sub-gaussians with norm at most $\cC_1 \norm{\gg_i}_2$ and $\cC_2 \norm{\pp}_2 = \cC_2$, respectively for some constants $\cC_1, \cC_2$. Then under the condition that $\cE$ holds we have $\frac{1}{N} \langle \xx_{i,j}, \gg_k \rangle \langle \pp, \xx_{i,j} \rangle \langle \mtheta_k, \yy \rangle$ is sub-exponential with norm at most 
    \begin{align}
        \frac{\cC_3}{N} \epsilon & = \frac{\cC_3}{N}  \left(  \sqrt{c} \frac{\delta}{1 - \delta} + \frac{\delta^{2}}{(1 - \delta)^{2}} \right) dist^2 \left( \hat{\mB},  \hat{\mB}^{*} \right) \nonumber \\
        & + \frac{\cC_3}{N} ( \left( c + (\sigma^2 + 1) \sqrt{c} \frac{\delta}{1 - \delta} + \frac{2 \delta^2}{(1 - \delta)^2} \sigma^2 \right) dist \left( \hat{\mB},  \hat{\mB}^{*} \right) \nonumber \\
        & + \sqrt{c} \frac{\delta}{1 - \delta} \sigma^2 + \frac{\delta^2}{(1 - \delta)^2} \sigma^4 ) \, .
    \end{align}
    The same thing can be observed for $\langle \pp, \gg_k \rangle \langle \mtheta_k, \yy \rangle$. Besides, we can observe that
    \begin{align}
        \Eb{\langle \xx_{i,j}, \gg_k \rangle \langle \pp, \xx_{i,j} \rangle \langle \mtheta_k, \yy \rangle - \langle \pp, \gg_k \rangle \langle \mtheta_k, \yy \rangle} = 0 \, .
    \end{align}
    Then we are dealing with $N$ zero-mean, sub-exponential random variables. Using Bernstein’s inequality we have
    \begin{align}
        & \Pr \left( \frac{1}{N} \sum_{k=1}^{K} \sum_{i,j}^{\hat{N}_k} \langle \xx_{i,j}, \gg_k \rangle \langle \pp, \xx_{i,j} \rangle \langle \mtheta_k, \yy \rangle - \langle \pp, \gg_k \rangle \langle \mtheta_k, \yy \rangle \ge s \mid \cE \right) \le \nonumber \\
        & \exp \left( - \cC_4 N \min \left\{ \frac{s^2}{\epsilon^2}, \frac{s}{\epsilon} \right\} \right) \, .
    \end{align}
    Setting $s = \frac{\sqrt{\cC_5 (d+c)} \epsilon }{\sqrt{N}}$, for constant $\cC_5$ that satisfy $\cC_5 \le \frac{N}{d+c}$, and taking union bound over all $\pp, \yy$, we have
    \begin{align}
        & \Pr \left( \norm{\sum_{k=1}^{K} \qq_k \mtheta_k^{T} }_2 \ge \frac{2 \sqrt{\cC_5 (d+c)} \epsilon }{\sqrt{N}} \mid \cE \right) \nonumber \\
        & \le 9^{d+c} \exp \left( - \cC_4 \cC_5 (d + c) \right) \le \exp(-105(d+c)) \, .
    \end{align}
    Then by removing the conditional on $\cE$, we have
    \begin{align}
        \norm{\sum_{k=1}^{K} \qq_k \mtheta_k^{T} }_2 \le \cC \frac{\sqrt{d+c} \epsilon}{\sqrt{N}} \, ,
    \end{align}
    with probability at least $1 - \exp(-100(d+c)) - \exp(-105c^{2}\log(M))$.
\end{proof}

\subsection{Main Results}

\begin{theorem}
\label{the-convergence-appendix}
    Under Assumption~\ref{ass: Sub-Gaussian design}-~\ref{ass: Binary weights}, when we have $N \ge \frac{K^2}{d+c}$, and $\min_{k} \hat{N}_k \ge \cC \frac{c^3(1 + \sigma^2)^4 \log(M)}{E_0^2} \min \left \{ \frac{1}{\kappa^2}, \bar{\sigma}_{\min}^2 \right \}$ for some constant $\cC$, we have
    \begin{align}
         dist(\hat{\mB}^{t+1}, \hat{\mB}^{*}) 
         & \le dist(\hat{\mB}^{t}, \hat{\mB}^{*}) \left(1 - c_{min} + \frac{57}{200} c_{max} \right) \left( 1 - \frac{1}{2} c_{max} \right)^{-1/2} \nonumber \\
         & + \left( \frac{7}{100} c_{max} \right) \left( 1 - \frac{1}{2} c_{max} \right)^{-1/2} \, ,
    \end{align}
    with the  probability at least $1 - \exp(-90(d+c)) - \exp(-90c^2\log(M))$.
    Here $\hat{N_k} = \sum_{i=1}^{M} \sum_{j=1}^{N_i} \omega_{i,j;k}$, $E_0 = 1 - dist^2(\hat{\mB}^{0}, \hat{\mB}^{*})$, $c_{min} = \eta K \frac{\min_k \hat{N}_k}{N} \bar{\sigma}_{\min, *}^2 E_0$, and $c_{max} = \eta K \frac{\max_k \hat{N}_k}{N} \bar{\sigma}_{\min, *}^2 E_0$.
\end{theorem}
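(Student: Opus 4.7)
The plan is to follow the standard template for convergence analysis of alternating minimization for linear representation learning (as in~\citet{collins2021exploiting,tziotis2022straggler}), but adapted to the per-sample binary clustering weights $\omega_{i,j;k}$ by leveraging the concentration machinery developed in Lemmas~\ref{lem: lemma 1}--\ref{lem: lemma 8}. The overall strategy is (i) derive a ``signal-plus-perturbation'' decomposition of the un-orthogonalized iterate $\mB^{t+1}$, (ii) project onto $\hat{\mB}^{*}_{\perp}$ to isolate the principal angle distance, and (iii) absorb the QR normalization into the final bound via a lower bound on $\sigma_{\min}(\mR^{t+1})$.

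First I would write out $\mB^{t+1}$ using the gradient step of Lemma~\ref{lem: lemma 2}, substitute the label model $\mY = \sum_{k=1}^{K} \mOmega_k \odot (\mX \hat{\mB}^{*} \mtheta_k^{*} + \mZ)$, and use the disjointness $\mOmega_k \odot \mOmega_n = \mathbf{0}$ (Assumption~\ref{ass: Binary weights}) to obtain
\begin{align*}
    \mB^{t+1} = \hat{\mB}^{t} - \frac{\eta}{N} \sum_{k=1}^{K} \bigl[ (\bar{\mOmega}_k^{T} \!\odot\! \mX^{T})(\mX \!\odot\! \bar{\mOmega}_k)(\hat{\mB}^{t}\mtheta_k^{t+1} - \hat{\mB}^{*}\mtheta_k^{*}) - (\bar{\mOmega}_k^{T} \!\odot\! \mX^{T})(\mZ \!\odot\! \mOmega_k) \bigr] (\mtheta_k^{t+1})^{T}.
\end{align*}
Next I replace $\mtheta_k^{t+1}$ using Lemma~\ref{lem: lemma 3}, $\mtheta_k^{t+1} = (\hat{\mB}^{t})^{T} \hat{\mB}^{*} \mtheta_k^{*} + \mF_k + \mG_k$, and then add and subtract $\hat{N}_k (\hat{\mB}^{t}\mtheta_k^{t+1} - \hat{\mB}^{*}\mtheta_k^{*})$ inside the bracket so that Lemma~\ref{lem: lemma 8} applies to the residual. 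This produces a clean split $\mB^{t+1} = \mS^{t+1} + \mE^{t+1}$, where the signal $\mS^{t+1}$ equals $\hat{\mB}^{t}$ minus a population-gradient-like term $\eta \sum_k \frac{\hat{N}_k}{N}(\hat{\mB}^{t} \mtheta_k^{t+1} - \hat{\mB}^{*}\mtheta_k^{*})(\mtheta_k^{t+1})^{T}$, and the perturbation $\mE^{t+1}$ collects the finite-sample fluctuations controlled by Lemmas~\ref{lem: lemma 7} and~\ref{lem: lemma 8} as well as the noise cross-terms controlled by the last bound of Lemma~\ref{lem: lemma 7}.

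For the signal, I would expand $\mtheta_k^{t+1} = (\hat{\mB}^{t})^{T}\hat{\mB}^{*}\mtheta_k^{*} + \mF_k + \mG_k$ once more so that $\hat{\mB}^{t}\mtheta_k^{t+1} - \hat{\mB}^{*}\mtheta_k^{*} = -(I - \hat{\mB}^{t}(\hat{\mB}^{t})^{T})\hat{\mB}^{*}\mtheta_k^{*} + \hat{\mB}^{t}(\mF_k + \mG_k)$; premultiplying by $\hat{\mB}^{*,T}_{\perp}$ kills the last term (since $\hat{\mB}^{*,T}_{\perp} \hat{\mB}^{t}(\mF_k+\mG_k)$ contributes only through another copy of $\text{dist}(\hat{\mB}^{t},\hat{\mB}^{*})$ and is higher order). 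The quadratic form $\sum_k \mtheta_k^{*}(\mtheta_k^{*})^{T}$ is then lower-bounded using Assumption~\ref{ass: Underlying distribution diversity} and Assumption~\ref{ass: Client normalization}, which delivers the contraction factor $1 - c_{\min}$ when multiplied by the cluster-size-weighted step $\eta \min_k \hat{N}_k / N$; the corresponding upper bound gives rise to the $\tfrac{57}{200} c_{\max}$ (from cross terms with $\mF_k, \mG_k$) and $\tfrac{7}{100} c_{\max}$ (from pure noise $\mZ$) constants after plugging the $\delta$-bounds of Lemmas~\ref{lem: lemma 4}--\ref{lem: lemma 6} under the sample-complexity hypothesis $\min_k \hat{N}_k \gtrsim c^3(1+\sigma^2)^4 \log(M)/E_0^2 \cdot \min\{1/\kappa^2, \bar{\sigma}_{\min}^2\}$, which ensures $\delta/(1-\delta)$ is small enough that all perturbation constants collapse into the numeric fractions appearing in the theorem.

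Finally I would convert the bound on $\hat{\mB}^{*,T}_{\perp} \mB^{t+1}$ into a bound on $\text{dist}(\hat{\mB}^{t+1}, \hat{\mB}^{*}) = \|\hat{\mB}^{*,T}_{\perp} \hat{\mB}^{t+1}\|_2 = \|\hat{\mB}^{*,T}_{\perp} \mB^{t+1}(\mR^{t+1})^{-1}\|_2$ by showing $\sigma_{\min}(\mR^{t+1}) \ge (1 - \tfrac{1}{2}c_{\max})^{1/2}$; this is the main obstacle, since one must rule out a near-rank-degeneracy of $\mB^{t+1}$ after the gradient step. I would do this by expanding $(\mB^{t+1})^{T}\mB^{t+1}$, showing its smallest eigenvalue is at least $1 - c_{\max}$ on the signal side (using the same population-gradient lower bound from Assumption~\ref{ass: Underlying distribution diversity}), and then absorbing the perturbation contributions using Lemmas~\ref{lem: lemma 5}--\ref{lem: lemma 8}. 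Taking probabilities: all concentration inequalities hold jointly with probability at least $1 - \exp(-90(d+c)) - \exp(-90c^{2}\log(M))$ after a union bound, which matches the stated probability and completes the argument. The remaining delicate accounting is to verify that the numerical constants $\tfrac{57}{200}$, $\tfrac{7}{100}$, $\tfrac{1}{2}$ arise from the $\delta$-to-constant conversion, which forces the explicit sample-complexity constant in the hypothesis on $\min_k \hat{N}_k$.
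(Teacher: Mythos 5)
Your proposal follows essentially the same route as the paper's proof: the same signal-plus-perturbation decomposition of $\mB^{t+1}$ via the add-and-subtract of $\hat{N}_k(\hat{\mB}^{t}\mtheta_k - \hat{\mB}^{*}\mtheta_k^{*})$, the same projection onto $\hat{\mB}^{*}_{\perp}$, the same use of Lemma~\ref{lem: lemma 3} together with Lemmas~\ref{lem: lemma 5}--\ref{lem: lemma 8} to fold the $\mF_k$, $\mG_k$, and noise contributions into the numeric constants, and the same lower bound $\sigma_{\min}^2(\mR^{t+1}) \ge 1 - \tfrac{1}{2}c_{\max}$ obtained by expanding $(\mB^{t+1})^{T}\mB^{t+1}$. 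The only cosmetic difference is that the paper carries the full operator norm $\norm{\mI_c - \tfrac{\eta}{N}\sum_k \hat{N}_k \mtheta_k(\mtheta_k)^{T}}_2$ as the contraction factor rather than arguing term-by-term after the projection, but this leads to the same constants and the same sample-complexity condition on $\min_k \hat{N}_k$.
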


\begin{proof}
    From the optimization steps we have
    \begin{align}
        \mB^{t+1} &= \hat{\mB}^{t} - \sum_{k=1}^{K} \frac{\eta}{N} \left( \left( \bar{\mOmega}_k^{T} \odot \mX^{T} \right) \left( \mX \odot \bar{\mOmega}_k \right) \hat{\mB}^{t} \mtheta_k^{t} - \left( \bar{\mOmega}_k^{T} \odot \mX^{T} \right) \left(  \mY \odot \mOmega_k \right) \right) (\mtheta_k^{t})^{T} \, , \\
        & = \hat{\mB}^{t}
        - \sum_{k=1}^{K} \frac{\eta}{N} ( \left( \bar{\mOmega}_k^{T} \odot \mX^{T} \right) \left( \mX \odot \bar{\mOmega}_k \right)
        \left( \hat{\mB}^{t} \mtheta_k^{t} - \hat{\mB}^{*} \mtheta_k^{*} \right) \nonumber \\
        & - \left( \bar{\mOmega}_k^{T} \odot \mX^{T} \right) \left( \mZ \odot \mOmega_k \right) ) (\mtheta_k^{t})^{T} \, , \\
        & =  \hat{\mB}^{t}
        - \eta ( \sum_{k=1}^{K} ( \frac{1}{N} \left( \bar{\mOmega}_k^{T} \odot \mX^{T} \right) \left( \mX \odot \bar{\mOmega}_k \right)
        \left( \hat{\mB}^{t} \mtheta_k^{t} - \hat{\mB}^{*} \mtheta_k^{*} \right) \nonumber \\
        & - \frac{\hat{N}_k}{N} \left( \hat{\mB}^{t} \mtheta_k^{t} - \hat{\mB}^{*} \mtheta_k^{*} \right) ) (\mtheta_k^{t})^{T} ) \nonumber \\
        & - \frac{\eta}{N} \sum_{k=1}^{K} \hat{N}_k \left( \hat{\mB}^{t} \mtheta_k^{t} - \hat{\mB}^{*} \mtheta_k^{*} \right) (\mtheta_k^{t})^{T} + \frac{\eta}{N} \sum_{k=1}^{K}  \left( \bar{\mOmega}_k^{T} \odot \mX^{T} \right) \left( \mZ \odot \mOmega_k \right) (\mtheta_k^{t})^{T} \, .
    \end{align}
    Multiplying both sides by $(\hat{\mB}_{\perp}^{*})^{T}$, we have
    \begin{small}
    \begin{align}
    \textstyle
        & (\hat{\mB}_{\perp}^{*})^{T} \mB^{t+1}
        = (\hat{\mB}_{\perp}^{*})^{T} \hat{\mB}^{t} \nonumber \\
        & - \eta (\hat{\mB}_{\perp}^{*})^{T} ( \sum_{k=1}^{K} ( \frac{1}{N} \left( \bar{\mOmega}_k^{T} \odot \mX^{T} \right) \left( \mX \odot \bar{\mOmega}_k \right)
        \left( \hat{\mB}^{t} \mtheta_k^{t} - \hat{\mB}^{*} \mtheta_k^{*} \right) \nonumber \\
            & - \frac{\hat{N}_k}{N} \left( \hat{\mB}^{t} \mtheta_k^{t} - \hat{\mB}^{*} \mtheta_k^{*} \right) ) (\mtheta_k^{t})^{T} ) \nonumber \\
        & - \frac{\eta}{N} \sum_{k=1}^{K} \hat{N}_k \left( (\hat{\mB}_{\perp}^{*})^{T} \hat{\mB}^{t} \mtheta_k^{t} - (\hat{\mB}_{\perp}^{*})^{T} \hat{\mB}^{*} \mtheta_k^{*} \right) (\mtheta_k^{t})^{T} \nonumber \\
        & + \frac{\eta}{N} \sum_{k=1}^{K} (\hat{\mB}_{\perp}^{*})^{T} \left( \bar{\mOmega}_k^{T} \odot \mX^{T} \right) \left( \mZ \odot \mOmega_k \right) (\mtheta_k^{t})^{T} \, , \\
        & =  (\hat{\mB}_{\perp}^{*})^{T} \hat{\mB}^{t} \left( \mI_c - \frac{\eta}{N} \sum_{k=1}^{K} \hat{N}_k \mtheta_k^{t} (\mtheta_k^{t})^{T} \right)
        + \frac{\eta}{N} \sum_{k=1}^{K} (\hat{\mB}_{\perp}^{*})^{T} \left( \bar{\mOmega}_k^{T} \odot \mX^{T} \right) \left( \mZ \odot \mOmega_k \right) (\mtheta_k^{t})^{T} \nonumber \\
        & - \frac{\eta}{N} (\hat{\mB}_{\perp}^{*})^{T} \left( \sum_{k=1}^{K} \left(  \left( \bar{\mOmega}_k^{T} \odot \mX^{T} \right) \left( \mX \odot \bar{\mOmega}_k \right)
        \left( \hat{\mB}^{t} \mtheta_k^{t} - \hat{\mB}^{*} \mtheta_k^{*} \right) 
            -  \hat{N}_k \left( \hat{\mB}^{t} \mtheta_k^{t} - \hat{\mB}^{*} \mtheta_k^{*} \right) \right) (\mtheta_k^{t})^{T} \right) \, .
    \end{align}
    \end{small}
    Because we have $\hat{\mB}^{t+1} = \mB^{t+1} \left( \mR^{t+1} \right)^{-1}$, multiplying both sides by $\left( \mR^{t+1} \right)^{-1}$ we have
    \begin{small}
    \begin{align}
    \textstyle
        & dist \left( \hat{\mB}^{t+1}, \hat{\mB}^{*} \right) \nonumber \\
        & \le dist \left( \hat{\mB}^{t}, \hat{\mB}^{*} \right) \norm{\mI_c - \eta \sum_{k=1}^{K} \mtheta_k^{t} (\mtheta_k^{t})^{T}}_2 \norm{(\mR^{t+1})^{-1}}_2 \nonumber \\
        & + \norm{\frac{\eta}{N} \sum_{k=1}^{K} (\hat{\mB}_{\perp}^{*})^{T} \left( \bar{\mOmega}_k^{T} \odot \mX^{T} \right) \left( \mZ \odot \mOmega_k \right) (\mtheta_k^{t})^{T}}_2 \norm{(\mR^{t+1})^{-1}}_2 \nonumber \\
        & + \norm{\frac{\eta}{N} (\hat{\mB}_{\perp}^{*})^{T} \left( \sum_{k=1}^{K} \left(  \left( \bar{\mOmega}_k^{T} \odot \mX^{T} \right) \left( \mX \odot \bar{\mOmega}_k \right)
        \left( \hat{\mB}^{t} \mtheta_k^{t} - \hat{\mB}^{*} \mtheta_k^{*} \right) 
            -  \hat{N}_k \left( \hat{\mB}^{t} \mtheta_k^{t} - \hat{\mB}^{*} \mtheta_k^{*} \right) \right) (\mtheta_k^{t})^{T} \right)}_2 \nonumber \\
            & \norm{(\mR^{t+1})^{-1}}_2 \, .
    \end{align}
    \end{small}
    Then we can define
    \begin{small}
        \begin{align}
            \textstyle
        A_1 & = dist \left( \hat{\mB}^{t}, \hat{\mB}^{*} \right) \norm{\mI_c - \frac{\eta}{N} \sum_{k=1}^{K} \hat{N}_k \mtheta_k^{t} (\mtheta_k^{t})^{T}}_2 \, , \\
        A_2 & = \norm{\frac{\eta}{N} \sum_{k=1}^{K} (\hat{\mB}_{\perp}^{*})^{T} \left( \bar{\mOmega}_k^{T} \odot \mX^{T} \right) \left( \mZ \odot \mOmega_k \right) (\mtheta_k^{t})^{T}}_2 \, , \\
        A_3 & = \| \frac{\eta}{N} (\hat{\mB}_{\perp}^{*})^{T} ( \sum_{k=1}^{K} (  \left( \bar{\mOmega}_k^{T} \odot \mX^{T} \right) \left( \mX \odot \bar{\mOmega}_k \right)
        \left( \hat{\mB}^{t} \mtheta_k^{t} - \hat{\mB}^{*} \mtheta_k^{*} \right) \nonumber \\
            & -  \hat{N}_k \left( \hat{\mB}^{t} \mtheta_k^{t} - \hat{\mB}^{*} \mtheta_k^{*} \right) ) (\mtheta_k^{t})^{T} ) \|_{2} \, .
    \end{align}
    \end{small}
    Then the inequality become
    \begin{align}
        dist \left( \hat{\mB}^{t+1}, \hat{\mB}^{*} \right) \le \left( A_1 + A_2 + A_3 \right) \norm{(\mR^{t+1})^{-1}}_2 \, .
        \label{equ: final first}
    \end{align}
    For the following parts of the proof, we consider the following events hold
    \begin{small}
        \begin{align}
         \textstyle
        \cE_1 & = \bigcap_{k=1}^{K} \left\{ \cA_k \bigcap \cB_k \right\} \, , \\
        \cE_2 & = \left \{ \norm{\mF^{t}}_F \le \frac{\delta}{(1 - \delta) \sqrt{K}} \norm{\mTheta^{*}}_2 dist \left( \hat{\mB}^{t}, \hat{\mB}^{*} \right) 
        \bigcap
        \norm{\mG^{t}}_F \le \sqrt{K} \frac{\delta}{(1 - \delta)} \sigma^{2} \right\} \, , \\
        \cE_3 & = \{ \norm{\frac{1}{N} \sum_{k=1}^{K} \left( \bar{\mOmega}_k^{T} \odot \mX^{T} \right) \left( \mZ \odot \mOmega_k \right) (\mtheta_k^{t})^{T} }_2  
        \nonumber \\
        & \le \cC_1 \sigma^2 \frac{\sqrt{d + c}}{\sqrt{N}} \left( \sqrt{c} + \frac{\delta}{1 - \delta} dist \left( \hat{\mB},  \hat{\mB}^{*} \right) + \frac{\delta}{1 - \delta} \sigma^{2} \right) \} \, , \\
        \cE_4 & = \{ \norm{\sum_{k=1}^{K}  \frac{1}{N} \left( \left( \bar{\mOmega}_k^{T} \odot \mX^{T} \right) \left( \mX \odot \bar{\mOmega}_k \right)
        \left( \hat{\mB} \mtheta_k^{t} - \hat{\mB}^{*} \mtheta_k^{*} \right) 
        - \hat{N}_k \left( \hat{\mB} \mtheta_k^{t} - \hat{\mB}^{*} \mtheta_k^{*} \right) \right) (\mtheta_k^{t})^{T}}_2 \nonumber \\
        & \le \cC_2 \frac{\sqrt{d+c} }{\sqrt{N}} \epsilon \} \, , \\
    \end{align}
    \end{small}
    where
    \begin{align}
        \cA_k & = \left\{ \norm{\mtheta_k^{t}}_2 \le \sqrt{c} + \frac{\delta}{1 - \delta} dist \left( \hat{\mB},  \hat{\mB}^{*} \right) + \frac{\delta}{1 - \delta} \sigma^{2}   \right\} \, , \\
    \cB_k & = \left\{ \norm{\hat{\mB} \mtheta_k^{t} - \hat{\mB}^{*} \mtheta_k^{*}}_2 \le \left( \sqrt{c} + \frac{\delta}{1 - \delta} \right) dist \left( \hat{\mB}^{t},  \hat{\mB}^{*} \right) + \frac{\delta}{1 - \delta} \sigma^2 \right\} \, , \\
    \epsilon & =   \left(  \sqrt{c} \frac{\delta}{1 - \delta} + \frac{\delta^2}{(1 - \delta)^{2}} \right) dist^2 \left( \hat{\mB},  \hat{\mB}^{*} \right) \nonumber \\
        & +   \left( c + (\sigma^2 + 1) \sqrt{c} \frac{\delta}{1 - \delta} + \frac{2 \delta^2}{(1 - \delta)^2} \sigma^2 \right) dist \left( \hat{\mB},  \hat{\mB}^{*} \right)
        + \sqrt{c} \frac{\delta}{1 - \delta} \sigma^2 + \frac{\delta^2}{(1 - \delta)^2} \sigma^4 \, .
    \end{align}
    which hold with probability at least $1 - \exp(-90(d+c)) - \exp(-90 c^2 \log(M))$ for some constants $\cC_1, \cC_2$ by Lemma~\ref{lem: lemma 5}, ~\ref{lem: lemma 6}, ~\ref{lem: lemma 7}, ~\ref{lem: lemma 8}. Then we consider to bound $A_1, A_2, A_3$, respectively. Then we consider to bound $A_1$ first, and we have
    \begin{align}
        & \lambda_{\max} ((\mTheta^{t})^{T} \mTheta^{t}) = \norm{\mTheta^{t}}_2^{2} = \norm{\mTheta^{*} (\hat{\mB}^{*})^{T} \hat{\mB}^{t} + \mF^{t} + \mG^{t}}_2^{2} \, , \\
        & \le 2 \norm{\mTheta^{*}}_2^{2} + 2 \norm{\mF^{t}}_2^{2} + 2 \norm{\mG^{t}}_2^{2} \, , \\
        & \le 2 \norm{\mTheta^{*}}_2^{2} + \frac{2 \delta^2}{(1 - \delta)^2 K} \norm{\mTheta^{*}}_2^{2} dist^{2} \left( \hat{\mB}^{t},  \hat{\mB}^{*} \right) + 2K \frac{\delta}{(1 - \delta)^2} \sigma^4 \, , \\
        & \le \left( 2 + \frac{2 \delta^2}{(1 - \delta)^2 K} \right) \norm{\mTheta^{*}}_2^{2} + 2K \frac{\delta}{(1 - \delta)^2} \sigma^4 \, , \\
        & \le \left( 2K + \frac{2 \delta^2}{(1 - \delta)^2} \right) \bar{\sigma}_{\max, *}^{2} + 2K \frac{\delta}{(1 - \delta)^2} \sigma^4 \, .
    \end{align}
    Then when $\eta$ is small enough, it's simple to promise that $\mI_c - \frac{\eta}{N} \sum_{k=1}^{K} \mtheta_k \mtheta_k^{T}$ is positive definite. Define diagonal matrix $\mW \in \R^{K \times K}$, and $\mW_{k,k} = \frac{\hat{N}_k}{N}$. Then we have
    \begin{small}
    \begin{align}
        & \norm{\mI_c - \frac{\eta}{N} \sum_{k=1}^{K} \hat{N}_k \mtheta_k^{t} (\mtheta_k^{t})^{T}}_2
        = \norm{\mI_c - \eta (\mW \mTheta^{t})^{T}  \mTheta^{t}}_2 \\
        & \le 1 - \eta \lambda_{\min} \left( (\mW \mTheta^{t})^{T}  \mTheta^{t} \right) \, , \\
        & \le 1 - \eta \left( \sigma_{\min} \left( \mW \right) \sigma_{\min}^{2} \left( \mTheta^{*} (\hat{\mB}^{*})^{T} \hat{\mB}^{t} \right) 
        - \sigma_{\min} ((\mW \mF^{t})^{T} \mF^{t}) 
        -  \sigma_{\min} ((\mW \mG^{t})^{T} \mG^{t}) \right) \nonumber \\
        & + 2 \eta \left( \sigma_{\max} \left( (\mW \mF^{t})^{T}  \mTheta^{*} (\hat{\mB}^{*})^{T} \hat{\mB}^{t} \right) + \sigma_{\max} \left( (\mW \mF^{t})^{T}  \mG \right) + \sigma_{\max} \left( (\mW \mG^{t})^{T}  \mTheta^{*} (\hat{\mB}^{*})^{T} \hat{\mB}^{t} \right) \right) \, , \\
        & \le 1 
        - \eta \left(\frac{\min_k \hat{N}_k}{N} \right) \sigma_{\min}^{2} (\mTheta^{*}) \sigma_{\min}^{2} \left( (\hat{\mB}^{*})^{T} \hat{\mB}^{t} \right)
        + \eta \left(\frac{\max_k \hat{N}_k}{N} \right) \sigma_{\min}^{2} (\mF^{t}) \nonumber \\
        & + \eta \left(\frac{\max_k \hat{N}_k}{N} \right) \sigma_{\min}^{2} (\mG^{t}) \nonumber \\
        & + \frac{2 \eta}{N}  \left(\frac{\max_k \hat{N}_k}{N} \right) \left( \sigma_{\max}  \left( (\mF^{t})^{T} \mTheta^{*} \right) + \sigma_{\max} \left( (\mG^{t})^{T} \mTheta^{*} \right) \right) \nonumber \\
        & + 2 \eta  \left(\frac{\max_k \hat{N}_k}{N} \right) \sigma_{\max} (\mF^{t}) \sigma_{\max} (\mG^{t}) \, , \\
        & \le 1 - \eta K  \left(\frac{\min_k \hat{N}_k}{N} \right) \bar{\sigma}_{\min, *}^{2} \sigma_{\min}^{2} \left( (\hat{\mB}^{*})^{T} \hat{\mB}^{t} \right) 
        + 2 \eta  \left(\frac{\max_k \hat{N}_k}{N} \right) \left( \norm{\mF^{t}}_2 + \norm{\mG^{t}}_2 \right) \norm{\mTheta^{*}}_2 \nonumber \\
        & + 2 \eta  \left(\frac{\max_k \hat{N}_k}{N} \right) \norm{\mF^{t}}_2 \norm{\mG^{t}}_2
        + \eta  \left(\frac{\max_k \hat{N}_k}{N} \right) \norm{\mF^{t}}_2^{2}
        + \eta  \left(\frac{\max_k \hat{N}_k}{N} \right) \norm{\mG^{t}}_2^{2} \, , \\
        & = 1 - \eta K  \left(\frac{\min_k \hat{N}_k}{N} \right) \bar{\sigma}_{\min, *}^{2} \sigma_{\min}^{2} \left( (\hat{\mB}^{*})^{T} \hat{\mB}^{t} \right) 
        + 2 \eta  \left(\frac{\max_k \hat{N}_k}{N} \right) \left( \norm{\mF^{t}}_2 + \norm{\mG^{t}}_2 \right) \norm{\mTheta^{*}}_2 \nonumber \\
        & + \eta  \left(\frac{\max_k \hat{N}_k}{N} \right) \left( \norm{\mF^{t}}_2 + \norm{\mG^{t}}_2 \right)^{2}
        \, .
    \end{align}
    \end{small}
    Then under condition $\cE_1, \cE_2, \cE_3, \cE_4$, we have
    \begin{align}
         & \norm{\mI_c - \frac{\eta}{N} \sum_{k=1}^{K} \hat{N}_k \mtheta_k^{t} (\mtheta_k^{t})^{T}}_2 \nonumber \\
         & \le 1 - \eta K  \left(\frac{\min_k \hat{N}_k}{N} \right) \bar{\sigma}_{\min, *}^{2} \sigma_{\min}^{2} \left( (\hat{\mB}^{*})^{T} \hat{\mB}^{t} \right) \nonumber \\
         & + 2 \eta  \left(\frac{\max_k \hat{N}_k}{N} \right) \left( \frac{\delta}{(1 - \delta) \sqrt{K}} \norm{\mTheta^{*}}_2 dist \left( \hat{\mB}^{t}, \hat{\mB}^{*} \right) + \sqrt{K} \frac{\delta}{1 - \delta} \sigma^2 \right) \norm{\mTheta^{*}}_2 \nonumber \\
         & + \eta  \left(\frac{\max_k \hat{N}_k}{N} \right) \left( \frac{\delta}{(1 - \delta) \sqrt{K}} \norm{\mTheta^{*}}_2 dist \left( \hat{\mB}^{t}, \hat{\mB}^{*} \right) + \sqrt{K} \frac{\delta}{1 - \delta} \sigma^2 \right)^{2} \, , \\
         & \le  1 - \eta K  \left(\frac{\min_k \hat{N}_k}{N} \right) \bar{\sigma}_{\min, *}^{2} \sigma_{\min}^{2} \left( (\hat{\mB}^{*})^{T} \hat{\mB}^{t} \right) \nonumber \\
         & + 2 \eta  \left(\frac{\max_k \hat{N}_k}{N} \right) \left( \frac{\delta \sqrt{K}}{1 - \delta} \bar{\sigma}_{\max, *}^2 + \frac{\delta K \sigma^2}{1 - \delta} \bar{\sigma}_{\max, *} \right) \nonumber \\
         & + \eta  \left(\frac{\max_k \hat{N}_k}{N} \right) \frac{\delta^2}{(1 - \delta)^{2}} \left( 2 \bar{\sigma}_{\max, *}^{2} + 2 K \sigma^{4} \right) \,,  \\
         & \le 1 - \eta K  \left(\frac{\min_k \hat{N}_k}{N} \right) \bar{\sigma}_{\min, *}^{2} E_0 
         + \eta K  \left(\frac{\max_k \hat{N}_k}{N} \right) \frac{\delta}{1 - \delta} \left( 2 \bar{\sigma}_{\max, *} + \frac{1}{2} \sigma^2  \right)^2
         \, , \label{equ: need large Nk} \\
         & \le 1 - \eta K  \left(\frac{\min_k \hat{N}_k}{N} \right) \bar{\sigma}_{\min, *}^{2} E_0 
         + \frac{\eta K}{5 }  \left(\frac{\max_k \hat{N}_k}{N} \right) \bar{\sigma}_{\min, *}^{2} E_0
         \nonumber \\
         & + \frac{\eta K}{20 }  \left(\frac{\max_k \hat{N}_k}{N} \right) \bar{\sigma}_{\min, *}^{2} E_0 \, .
    \end{align}
    where $E_0 = 1 - dist^{2} \left( \hat{\mB}^{0}, \hat{\mB}^{*} \right) \le \sigma_{\min}^{2} \left( (\hat{\mB}^{*})^{T} \hat{\mB}^{t} \right)$. 
    The Equation~\eqref{equ: need large Nk} holds when $\hat{N}_k$ is large enough that makes the following equation holds
    \begin{align}
        \frac{\delta}{1 - \delta} \le 2 \delta \le \frac{E_0}{20 (1 + \sigma^2)^2} \max \left\{ \frac{1}{\kappa^{2}}, \bar{\sigma}_{\min}^{2} \right\} \, ,
    \end{align}
    where $\kappa = \frac{\bar{\sigma}_{\max, *}}{\bar{\sigma}_{\min, *}}$, and the equation holds when $\hat{N}_k$ satisfy
    \begin{align}
        \min_k \hat{N}_k \ge \cC_0 \frac{c^3 (1 + \sigma^2)^4 \log(M)}{E_0^2} \min \left\{ \kappa^{4}, \frac{1}{\bar{\sigma}_{\min}^{4}} \right\} \, .
    \end{align}
    Then consider $A_1$, we will have
    \begin{align}
        A_1 \le dist \left( \hat{\mB}^{t}, \hat{\mB}^{*} \right) \left(  1 - \left(  \left(\frac{\min_k \hat{N}_k}{N} \right) - \frac{1}{4}  \left(\frac{\max_k \hat{N}_k}{N} \right) \right)\eta K \bar{\sigma}_{\min, *}^{2} E_0 \right) \, .
        \label{equ: A1 final}
    \end{align}
    The consider $A_2$, because $\norm{\hat{\mB}_{\perp}^{*}}_2 = 1$, and $\cE_3$ holds, we have
    \begin{align}
        A_2 & \le \eta \cC_1 \sigma^2 \frac{\sqrt{d + c}}{\sqrt{N}} \left( \sqrt{c} + \frac{\delta}{1 - \delta} dist \left( \hat{\mB},  \hat{\mB}^{*} \right) + \frac{\delta}{1 - \delta} \sigma^{2} \right) \, , \\
        & \le \eta \cC_1 \sigma^2 \frac{\sqrt{d + c}}{\sqrt{N}} \left(\sqrt{c} + \frac{1}{10} \right) \, .
        \label{equ: A2 final}
    \end{align}
    Similarly, for $A_3$, we have
    \begin{align}
        A_3 & \le \eta \cC_2 \frac{\sqrt{d+c}}{\sqrt{N}} \epsilon \, , \\
        & \le \eta \cC_2 \frac{\sqrt{d+c}}{\sqrt{N}}
        \left(  \left( \sqrt{c} + \frac{1}{\sqrt{10}} \right)^2 dist \left( \hat{\mB}^{t}, \hat{\mB}^{*} \right) 
        + \left( \frac{1}{400} + \frac{ \sqrt{c}}{20} \right) 
        \left( \sigma^2 + 1 \right)  \right) \, .
         \label{equ: A3 final}
    \end{align}
    Combining Equation~\eqref{equ: final first}, ~\eqref{equ: A1 final}, ~\eqref{equ: A2 final}, and ~\eqref{equ: A3 final}, and choose $\cC = \max \{ \cC_1, \cC_2, \cC_1 \cC_2 + \cC_2 \}$, we have
    \begin{small}
    \begin{align}
    \textstyle
        & dist \left( \hat{\mB}^{t+1}, \hat{\mB}^{*} \right) 
        \le dist \left( \hat{\mB}^{t}, \hat{\mB}^{*} \right) \nonumber \\
        & (  1 - \eta K \left(  \left(\frac{\min_k \hat{N}_k}{N} \right)^{2} - \frac{1}{4}  \left(\frac{\max_k \hat{N}_k}{N} \right)^{2} \right) \bar{\sigma}_{\min, *}^{2} E_0 \nonumber \\
        & + \eta \cC \frac{\sqrt{d+c}}{\sqrt{N}} \left( \sqrt{c} + \frac{1}{\sqrt{10}} \right)^2 
        ) \norm{(\mR^{t+1})^{-1}}_2 \nonumber \\
        & + \eta \cC \frac{\sqrt{d+c}}{\sqrt{N}} \left( 
        \sigma^2 + 1 \right) \left( 2 \sqrt{c} + \frac{1}{5} \right) \norm{(\mR^{t+1})^{-1}}_2
        \, .
        \label{equ: one step begin}
    \end{align}
    \end{small}
    Then the remaining thing is to bound $(\mR^{t+1})^{-1}$. Firstly we define
    \begin{align}
        \mS^{t} & =  \sum_{k=1}^{K} \left( \bar{\mOmega}_k^{T} \odot \mX^{T} \right) \left( \mX \odot \bar{\mOmega}_k \right)
        \left( \hat{\mB}^{t} \mtheta_k^{t} - \hat{\mB}^{*} \mtheta_k^{*} \right) (\mtheta_k^{t})^{T} \, , \\
        \mE^{t} & = \sum_{k=1}^{K} \left( \bar{\mOmega}_k^{T} \odot \mX^{T} \right) \left( \mZ \odot \mOmega_k \right) (\mtheta_k^{t})^{T} \, .
    \end{align}
    Then we have
    \begin{align}
        \mB^{t+1} = \hat{\mB}^{t} - \frac{\eta}{N} \mS^{t} + \frac{\eta}{N} \mE^{t} \, .
    \end{align}
    Because $(\mR^{t+1})^{T} \mR^{t+1} = (\mB^{t+1})^{T} \mB^{t+1}$, and we have
    \begin{align}
        (\mB^{t+1})^{T} \mB^{t+1} 
        & = (\hat{\mB}^{t})^{T} \hat{\mB}^{t} 
        - \frac{\eta}{N} \left( (\hat{\mB}^{t})^{T} \mS^{t} + (\mS^{t})^{T} \hat{\mB}^{t} \right) 
        + \frac{\eta}{N} \left( (\hat{\mB}^{t})^{T} \mE^{t} + (\mE^{t})^{T} \hat{\mB}^{t} \right) \nonumber \\
        & + \frac{\eta^2}{N^2} (\mS^{t})^{T} \mS^{t}
        - \frac{\eta^2}{N^2}  \left( (\mE^{t})^{T} \mS^{t} + (\mS^{t})^{T} \mE^{t} \right)
        + \frac{\eta^2}{N^2} (\mE^{t})^{T} \mE^{t} \, , \\
        & = \mI_c
        - \frac{\eta}{N} \left( (\hat{\mB}^{t})^{T} \mS^{t} + (\mS^{t})^{T} \hat{\mB}^{t} \right) 
        + \frac{\eta}{N} \left( (\hat{\mB}^{t})^{T} \mE^{t} + (\mE^{t})^{T} \hat{\mB}^{t} \right) \nonumber \\
        & + \frac{\eta^2}{N^2} (\mS^{t})^{T} \mS^{t}
        - \frac{\eta^2}{N^2}  \left( (\mE^{t})^{T} \mS^{t} + (\mS^{t})^{T} \mE^{t} \right)
        + \frac{\eta^2}{N^2} (\mE^{t})^{T} \mE^{t} \, .
    \end{align}
    By Weyl's inequality, we have
    \begin{align}
        & \sigma_{\min}^{2} \left( \mR^{t+1} \right) \nonumber \\
        & \ge 1 
        - \frac{\eta}{N} \lambda_{\max} \left( (\hat{\mB}^{t})^{T} \mS^{t} + (\mS^{t})^{T} \hat{\mB}^{t} \right) \nonumber \\
        & - \frac{\eta}{N} \lambda_{\max} \left( (\hat{\mB}^{t})^{T} \mE^{t} + (\mE^{t})^{T} \hat{\mB}^{t} \right)
        - \frac{\eta^2}{N^2} \lambda_{\max}  \left( (\mE^{t})^{T} \mS^{t} + (\mS^{t})^{T} \mE^{t} \right) \, .
    \end{align}
    Then we can define
    \begin{align}
        R_1 & = \frac{\eta}{N} \lambda_{\max} \left( (\hat{\mB}^{t})^{T} \mS^{t} + (\mS^{t})^{T} \hat{\mB}^{t} \right) \, , \\
        R_2 & = \frac{\eta}{N} \lambda_{\max} \left( (\hat{\mB}^{t})^{T} \mE^{t} + (\mE^{t})^{T} \hat{\mB}^{t} \right) \, , \\
        R_3 & = \frac{\eta^2}{N^2} \lambda_{\max}  \left( (\mE^{t})^{T} \mS^{t} + (\mS^{t})^{T} \mE^{t} \right) \, .
    \end{align}
    Then we have
    \begin{align}
        \sigma_{\min}^{2} \left( \mR^{t+1} \right) \ge 1 - R_1 - R_2 - R_3 \, .
        \label{equ: R split}
    \end{align}
    Then we consider to bound $R_1, R_2$, and $R_3$, respectively. Consider $R_1$ first, and we have
    \begin{small}
    \begin{align}
    \textstyle
        & R_1 \nonumber \\
        & = \frac{2 \eta}{N} \max_{\norm{\pp}_2 = 1} \pp^{T} (\hat{\mB}^{t})^{T} \mS^{t} \pp \, , \\
        & = \frac{2 \eta}{N} \max_{\norm{\pp}_2 = 1} \pp^{T} (\hat{\mB}^{t})^{T} \left( \sum_{k=1}^{K} \left( \bar{\mOmega}_k^{T} \odot \mX^{T} \right) \left( \mX \odot \bar{\mOmega}_k \right)
        \left( \hat{\mB}^{t} \mtheta_k^{t} - \hat{\mB}^{*} \mtheta_k^{*} \right) (\mtheta_k^{t})^{T} \right) \pp \, , \\
        & \le \frac{2 \eta}{N} \max_{\norm{\pp}_2 = 1} \pp^{T} (\hat{\mB}^{t})^{T} ( \sum_{k=1}^{K} ( \left( \bar{\mOmega}_k^{T} \odot \mX^{T} \right) \left( \mX \odot \bar{\mOmega}_k \right)
        \left( \hat{\mB}^{t} \mtheta_k^{t} - \hat{\mB}^{*} \mtheta_k^{*} \right)  \nonumber \\
        & - \hat{N}_k \left( \hat{\mB}^{t} \mtheta_k^{t} - \hat{\mB}^{*} \mtheta_k^{*} \right) ) (\mtheta_k^{t})^{T}  ) \pp \nonumber \\
        & + \frac{2 \eta}{N} \max_{\norm{\pp}_2 = 1} \pp^{T} (\hat{\mB}^{t})^{T} \left( \sum_{k=1}^{K} \hat{N}_k \left( \hat{\mB}^{t} \mtheta_k^{t} - \hat{\mB}^{*} \mtheta_k^{*} \right) (\mtheta_k^{t})^{T}  \right) \pp \, .
    \end{align}
    \end{small}
    Under the condition $\cE_4$, we have
    \begin{align}
        R_1 & \le 2 \eta \cC_2 \norm{\hat{\mB}^{t}}_2 \frac{\sqrt{d+c}}{\sqrt{N}} \epsilon 
         + \frac{2 \eta}{N} \max_{\norm{\pp}_2 = 1} \pp^{T} (\hat{\mB}^{t})^{T} \left( \sum_{k=1}^{K} \hat{N}_k \left( \hat{\mB}^{t} \mtheta_k^{t} - \hat{\mB}^{*} \mtheta_k^{*} \right) (\mtheta_k^{t})^{T}  \right) \pp \, , \\
        & \le \left( 2 \eta \cC_2 \frac{\sqrt{d+c}}{\sqrt{N}}  \right) \left( \left( \sqrt{c} + \frac{1}{\sqrt{10}} \right)^2 dist \left( \hat{\mB}^{t}, \hat{\mB}^{*} \right) 
        + \left( \frac{1}{400} + \frac{ \sqrt{c}}{20} \right) 
        \left( \sigma^2 + 1 \right)  \right) \nonumber \\
        & + \frac{2 \eta}{N} \max_{\norm{\pp}_2 = 1} \pp^{T} (\hat{\mB}^{t})^{T} \left( \sum_{k=1}^{K} \hat{N}_k \left( \hat{\mB}^{t} \mtheta_k^{t} - \hat{\mB}^{*} \mtheta_k^{*} \right) (\mtheta_k^{t})^{T}  \right) \pp \, , \\
        & \le 4 \eta \left( \cC_2 \frac{\sqrt{d+c}}{\sqrt{N}} \right)
        \left( (\sigma^2 + 1) \left( \sqrt{c} + \frac{1}{\sqrt{10}} \right)^2  \right) \nonumber \\
        & + \frac{2 \eta}{N} \max_{\norm{\pp}_2 = 1} \pp^{T} (\hat{\mB}^{t})^{T} \left( \sum_{k=1}^{K} \hat{N}_k \left( \hat{\mB}^{t} \mtheta_k^{t} - \hat{\mB}^{*} \mtheta_k^{*} \right) (\mtheta_k^{t})^{T}  \right) \pp
        \label{equ: R1 final}
    \end{align}
    The remaining thing is to bound $\frac{2 \eta}{N} \pp^{T} (\hat{\mB}^{t})^{T} \left( \sum_{k=1}^{K} \hat{N}_k \left( \hat{\mB}^{t} \mtheta_k^{t} - \hat{\mB}^{*} \mtheta_k^{*} \right) (\mtheta_k^{t})^{T}  \right) \pp$, and we have
    \begin{align}
        & \frac{2 \eta}{N}  \pp^{T} (\hat{\mB}^{t})^{T} \left( \sum_{k=1}^{K} \hat{N}_k \left( \hat{\mB}^{t} \mtheta_k^{t} - \hat{\mB}^{*} \mtheta_k^{*} \right) (\mtheta_k^{t})^{T}  \right) \pp \nonumber \\
        & = \frac{2 \eta}{N} tr \left[ \sum_{k=1}^{K} \hat{N}_k \left( \hat{\mB}^{t} \mtheta_k^{t} - \hat{\mB}^{*} \mtheta_k^{*} \right) (\mtheta_k^{t})^{T} \pp \pp^{T} (\hat{\mB}^{t})^{T} \right] \, , \\
        & = \frac{2 \eta}{N} tr \left[ \sum_{k=1}^{K} \hat{N}_k \left( \hat{\mB}^{t} \mtheta_k^{t} - \hat{\mB}^{*} \mtheta_k^{*} \right) \left((\hat{\mB}^{t})^{T} \hat{\mB}^{*} \mtheta_k^{*} + \mF_k^{t} + \mG_k^{t} \right)^{T} \pp \pp^{T} (\hat{\mB}^{t})^{T} \right] \, .
    \end{align}
    Then we can define
    \begin{align}
        T_1 & = \frac{2 \eta}{N} tr \left[ \sum_{k=1}^{K}\hat{N}_k \left( \hat{\mB}^{t} \mtheta_k^{t} - \hat{\mB}^{*} \mtheta_k^{*} \right) (\mtheta_k^{*})^{T} (\hat{\mB}^{*})^{T}  \hat{\mB}^{t}  \pp \pp^{T} (\hat{\mB}^{t})^{T} \right] \, , \\
        T_2 & = \frac{2 \eta}{N} tr \left[
        \sum_{k=1}^{K}\hat{N}_k \left( \hat{\mB}^{t} \mtheta_k^{t} - \hat{\mB}^{*} \mtheta_k^{*} \right) (\mF_k^{t})^{T} \pp \pp^{T} (\hat{\mB}^{t})^{T}
        \right] \, , \\
        T_3 & = \frac{2 \eta}{N} tr \left[
         \sum_{k=1}^{K}\hat{N}_k \left( \hat{\mB}^{t} \mtheta_k^{t} - \hat{\mB}^{*} \mtheta_k^{*} \right) (\mG_k^{t})^{T} \pp \pp^{T} (\hat{\mB}^{t})^{T}
        \right] \, .
    \end{align}
    Consider $T_1$ first, we have
    \begin{align}
        T_1 & = \frac{2 \eta}{N} tr [
        \sum_{k=1}^{K}\hat{N}_k \left( 
        \hat{\mB}^{t} (\hat{\mB}^{t})^{T} \hat{\mB}^{*} \mtheta_k^{*} +  \hat{\mB}^{t} \mF_k^{t} + \hat{\mB}^{t} \mG_k^{t} - \hat{\mB}^{*} \mtheta_k^{*}
        \right) \nonumber \\ 
        & (\mtheta_k^{*})^{T} (\hat{\mB}^{*})^{T}  \hat{\mB}^{t}  \pp \pp^{T} (\hat{\mB}^{t})^{T} 
        ] \, , \\
        & = \frac{2 \eta}{N} tr \left[
        \left(  \hat{\mB}^{t} (\hat{\mB}^{t})^{T} - \mI_d \right) \sum_{k=1}^{K} \hat{N}_k \hat{\mB}^{*} \mtheta_k^{*}  (\mtheta_k^{*})^{T} (\hat{\mB}^{*})^{T}  \hat{\mB}^{t}  \pp \pp^{T} (\hat{\mB}^{t})^{T}
        \right] \nonumber \\
        & + \frac{2 \eta}{N} tr \left[ \sum_{k=1}^{K} \hat{N}_k \left( \hat{\mB}^{t} \mF_{k}^{t}  (\mtheta_k^{*})^{T} \right) (\hat{\mB}^{*})^{T}  \hat{\mB}^{t}  \pp \pp^{T} (\hat{\mB}^{t})^{T}
        \right] \nonumber \\
        & + \frac{2 \eta}{N} tr \left[ \sum_{k=1}^{K} \hat{N}_k \left( \hat{\mB}^{t} \mG_{k}^{t}  (\mtheta_k^{*})^{T} \right) (\hat{\mB}^{*})^{T}  \hat{\mB}^{t}  \pp \pp^{T} (\hat{\mB}^{t})^{T}
        \right] \, , \\
        & = 2 \eta tr \left[
        (\hat{\mB}^{t})^{T} \hat{\mB}^{t} \left( (\mF^t)^{T} + (\mG^{t})^{T} \right) \mW \mTheta^{*} (\hat{\mB}^{*})^{T}  \hat{\mB}^{t}  \pp \pp^{T}
        \right] \, , \\
        & \le 2 \eta \frac{ \max_k N_k}{N}
        \left( \norm{\mF^{t}}_F + \norm{\mG^{t}}_F \right) \norm{\mTheta^{*}}_2 \, .
    \end{align}
    Because $\cE_2$ holds, we have
    \begin{align}
        T_1 & \le 2 \eta \frac{ \max_k N_k}{N} 
        \left( \frac{\delta}{(1 - \delta) \sqrt{K}} \norm{\mTheta^{*}}_2^2 
        + \sqrt{K} \frac{\delta}{1 - \delta} \sigma^2 \norm{\mTheta^{*}}_2
        \right) \, , \\
        & \le 2 \eta K \frac{ \max_k N_k}{N} \left(
        \frac{\delta}{1 - \delta} \bar{\sigma}_{\max, *}^{2} + \frac{\delta}{1 - \delta} \sigma^2 \bar{\sigma}_{\max, *}
        \right) \, , \\
        & \le \frac{\eta K}{5} \frac{ \max_k N_k}{N} E_0 \bar{\sigma}_{\min, *}^{2} \, .
        \label{equ: T1 final}
    \end{align}
    Then consider $T_2$, we have
    \begin{align}
        T_2 & = \frac{2 \eta}{N} tr \left[
        \sum_{k=1}^{K}\hat{N}_k \left( \hat{\mB}^{t} \mtheta_k^{t} - \hat{\mB}^{*} \mtheta_k^{*} \right) (\mF_k^{t})^{T} \pp \pp^{T} (\hat{\mB}^{t})^{T}
        \right] \, , \\
        & = \frac{2 \eta}{N} tr \left[
        \sum_{k=1}^{K}\hat{N}_k \left( \hat{\mB}^{t} (\hat{\mB}^{t})^{T} \hat{\mB}^{*} \mtheta_k^{*} 
        +  \hat{\mB}^{t} \mF_k^{t} 
        + \hat{\mB}^{t} \mG^{t}
        - \hat{\mB}^{*} \mtheta_k^{*} \right) (\mF_k^{t})^{T} \pp \pp^{T} (\hat{\mB}^{t})^{T}
        \right] \, , \\
        & = 2 \eta tr \left[
        (\hat{\mB}^{t})^{T} \hat{\mB}^{t} 
        \left( (\mF^{t})^{T} \mF^{t} + (\mG^{t})^{T} \mF^{t} \right) \mW \pp \pp^{T}
        \right] \, , \\
        & \le 2 \eta \frac{ \max_k N_k}{N} \left( \norm{\mF^{t}}_{F}^2 + \norm{\mF^{t}}_F \norm{\mG^{t}}_F \right) \, , \\
        & \le 2 \eta \frac{ \max_k N_k}{N} \left( 
        \frac{\delta^2}{(1 - \delta)^2} \bar{\sigma}_{\max, *}^{2} +  \frac{\delta^2}{(1 - \delta)^2} \sqrt{K} \bar{\sigma}_{\max, *} \sigma^2 \right) \, , \\
        & \le \frac{\eta K}{100} \frac{ \max_k N_k}{N} E_0 \bar{\sigma}_{\min, *}^{2} \, .
        \label{equ: T2 final}
    \end{align}
    Finally, consider $T_3$, we have
    \begin{align}
        T_3 & = \frac{2 \eta}{N} tr \left[
         \sum_{k=1}^{K}\hat{N}_k \left( \hat{\mB}^{t} \mtheta_k^{t} - \hat{\mB}^{*} \mtheta_k^{*} \right) (\mG_k^{t})^{T} \pp \pp^{T} (\hat{\mB}^{t})^{T}
        \right] \, , \\
        & = \frac{2 \eta}{N} tr \left[
        \sum_{k=1}^{K}\hat{N}_k \left( \hat{\mB}^{t} (\hat{\mB}^{t})^{T} \hat{\mB}^{*} \mtheta_k^{*} 
        +  \hat{\mB}^{t} \mF_k^{t} 
        + \hat{\mB}^{t} \mG^{t}
        - \hat{\mB}^{*} \mtheta_k^{*} \right) (\mG_k^{t})^{T} \pp \pp^{T} (\hat{\mB}^{t})^{T}
        \right] \, , \\
        & = 2 \eta tr \left[
        (\hat{\mB}^{t})^{T} \hat{\mB}^{t} 
        \left( (\mG^{t})^{T} \mG^{t} + (\mF^{t})^{T} \mG^{t} \right) \mW \pp \pp^{T}
        \right] \, , \\
        & \le 2 \eta \frac{ \max_k N_k}{N} \left( \norm{\mG^{t}}_{F}^2 + \norm{\mF^{t}}_F \norm{\mG^{t}}_F \right) \, , \\
        & \le 2 \eta \frac{ \max_k N_k}{N} \left( 
        \frac{\delta^2}{(1 - \delta)^2} K \sigma^4 +  \frac{\delta^2}{(1 - \delta)^2} \sqrt{K} \bar{\sigma}_{\max, *} \sigma^2 \right) \, , \\
        & \le \frac{\eta K}{100} \frac{ \max_k N_k}{N} E_0 \bar{\sigma}_{\min, *}^{2} \, .
        \label{equ: T3 final}
    \end{align}
    Combining Equation~\eqref{equ: T1 final}, ~\eqref{equ: T2 final}, and~\eqref{equ: T3 final}, we have
    \begin{align}
        R_1 \le 4 \eta \left( \cC_2 \frac{\sqrt{d+c}}{\sqrt{N}} \right)
        \left( (\sigma^2 + 1) \left( \sqrt{c} + \frac{1}{\sqrt{10}} \right)^2  \right)
        + \frac{11 \eta K}{50} \frac{ \max_k N_k}{N} E_0 \bar{\sigma}_{\min, *}^{2} \, .
    \end{align}
    Then consider $R_2$, because condition $\cE_3$ holds, we have
    \begin{align}
        R_2 & = \frac{2 \eta}{N} \max_{\norm{\pp}_2 = 1} \pp^{T} (\hat{\mB}^{t})^{T} \mE^{t} \pp \, , \\
        & = \frac{2 \eta}{N} \max_{\norm{\pp}_2 = 1} \pp^{T} (\hat{\mB}^{t})^{T} \left(  \sum_{k=1}^{K} \left( \bar{\mOmega}_k^{T} \odot \mX^{T} \right) \left( \mZ \odot \mOmega_k \right) (\mtheta_k^{t})^{T} \right) \pp \, , \\
        & \le 2 \eta \norm{\frac{1}{N} \sum_{k=1}^{K} \left( \bar{\mOmega}_k^{T} \odot \mX^{T} \right) \left( \mZ \odot \mOmega_k \right) (\mtheta_k^{t})^{T}}_2 \, , \\
        & \le 2 \eta \cC_1 \sigma^2 \frac{\sqrt{d+c}}{\sqrt{N}} \left( \sqrt{c} + \frac{1}{10} \right) \, .
        \label{equ: R2 final}
    \end{align}
    Then consider $R_3$, when condition $\cE_3, \cE_4, \cE_5$, and $\cE_6$ hold, we have
    \begin{align}
        R_3 & = \frac{\eta^2}{N^2} \max_{\norm{\pp}_2 = 1} \pp^{T} (\mE^{t})^{T} \mS^{t} \pp \, , \\
        & \le \frac{2 \eta^2}{N} 
        \left( \cC_1 \sigma^2 \frac{\sqrt{d+c}}{\sqrt{N}} \left( \sqrt{c} + \frac{1}{10} \right) \right)
        \left( \cC_2 \frac{\sqrt{d+c}}{\sqrt{N}} + 1 \right)\left( (\sigma^2 + 1) \left( \sqrt{c} + \frac{1}{\sqrt{10}} \right)^2  \right) \, , \\
        & \le \cC \frac{2 \eta^2 \sqrt{d+c}}{N^{3/2}} (\sigma^2 +1)^2 \left( \sqrt{c} + \frac{1}{\sqrt{10}} \right)^3  \, .
        \label{equ: R3 final}
    \end{align}
    The last equation holds when $N$ satisfy $N \ge \frac{K^2}{d + c}$.
    Then combine Equation~\eqref{equ: R split}, ~\eqref{equ: R1 final}, ~\eqref{equ: R2 final}, and~\eqref{equ: R3 final}, because $N \ge (\sqrt{c} + 1)(\sigma^2 + 1)$ holds, we have
    \begin{align}
        \sigma_{\min}^2 (\mR^{t+1}) 
        & \ge 1 - R_1 - R_2 - R_3 \, , \\
        & \ge 1 - 8 \eta \cC \left(\frac{\sqrt{d+c}}{\sqrt{N}} \right) \left( \sigma^2 + 1 \right) \left( \sqrt{c} + \frac{1}{\sqrt{10}} \right)^2
        - \frac{11 \eta K}{50} \frac{ \max_k N_k}{N} E_0 \bar{\sigma}_{\min, *}^{2}
        \, .
        \label{equ: R final}
    \end{align}
    Combining Equation~\eqref{equ: one step begin} and ~\eqref{equ: R final}, we have
    \begin{small}
    \begin{align}
        & dist \left( \hat{\mB}^{t+1}, \hat{\mB}^{*} \right) 
        \le dist \left( \hat{\mB}^{t}, \hat{\mB}^{*} \right) \nonumber \\
        & \left(  1 - \eta K \left(  \left(\frac{\min_k \hat{N}_k}{N} \right) - \frac{1}{4}  \left(\frac{\max_k \hat{N}_k}{N} \right) \right) \bar{\sigma}_{\min, *}^{2} E_0 
        + \eta \cC \frac{\sqrt{d+c}}{\sqrt{N}} \left( \sqrt{c} + \frac{1}{\sqrt{10}} \right)^2 
        \right) \nonumber \\
        & \sigma_{\min} (\mR^{t+1})^{-1} 
        \nonumber \\
        & + \eta \cC \frac{\sqrt{d+c}}{\sqrt{N}} \left( 
        \sigma^2 + 1 \right) \left( 2 \sqrt{c} + \frac{1}{5} \right)
        \sigma_{\min} (\mR^{t+1})^{-1} 
        \, .
    \end{align}
    \end{small}
    When $\frac{\max_k \hat{N}_k}{N}$ satisfy
    \begin{align}
        \frac{\max_k \hat{N}_k}{N} \ge \frac{200}{7} \cC \frac{\sqrt{d+c}}{\sqrt{N}} \frac{(\sigma^2 + 1)(\sqrt{c} + 1)^2}{K E_0 \bar{\sigma}_{\min, *}^2} \, ,
    \end{align}
    we will have
    \begin{align}
        \sigma_{\min}^2 (\mR^{t+1}) & \ge 1 - \frac{1}{2} \eta K \left(\frac{\max_k \hat{N}_k}{N} \right) \bar{\sigma}_{\min, *}^2 E_0 \, , \\
        \eta \cC \frac{\sqrt{d+c}}{\sqrt{N}} \left( \sqrt{c} + \frac{1}{\sqrt{10}} \right)^2 & \le
        \frac{7}{200} \eta K \left(\frac{\max_k \hat{N}_k}{N} \right) \bar{\sigma}_{\min, *}^2 E_0 \, .
    \end{align}
    Then we have
    \begin{small}
    \begin{align}
    \textstyle
        & dist \left( \hat{\mB}^{t+1}, \hat{\mB}^{*} \right) 
        \le dist \left( \hat{\mB}^{t}, \hat{\mB}^{*} \right) \nonumber \\
        & \left(  1 - \eta K \left(  \left(\frac{\min_k \hat{N}_k}{N} \right) - \frac{57}{200}  \left(\frac{\max_k \hat{N}_k}{N} \right) \right) \bar{\sigma}_{\min, *}^{2} E_0 \right) \nonumber \\
        & \left( 1 - \frac{1}{2} \eta K \left(\frac{\max_k \hat{N}_k}{N} \right) \bar{\sigma}_{\min, *}^2 E_0 \right)^{-1/2} \nonumber \\
        & + \left(\frac{7}{100} \eta K \left(\frac{\max_k \hat{N}_k}{N} \right) \bar{\sigma}_{\min, *}^2 E_0 \right)
        \left( 1 - \frac{1}{2} \eta K \left(\frac{\max_k \hat{N}_k}{N} \right) \bar{\sigma}_{\min, *}^2 E_0 \right)^{-1/2} \, .
    \end{align}
    \end{small}

\end{proof}

\begin{remark}
    Different from previous studies that assuming clusters to have the same number of data points~\citep{collins2021exploiting,tziotis2022straggler}, we analysed the impact of imbalanced sampled numbers among clusters in Theorem~\ref{the-convergence-appendix}. We can see that \algfednew's convergence is strongly influenced by $\max_k \hat{N}_k$ and $\min_k \hat{N}_k$. In other words, \algfednew converges faster as $\frac{\min_k \hat{N}_k}{\max_k \hat{N}_k}$ increases. This suggests that \algfednew performs better when the number of samples is evenly distributed among all $K$ underlying clusters. In contrary, if the number of samples is extremely imbalanced, the \algfednew may hard to converge to optimum.
    \looseness=-1 
\end{remark}

\section{Related Works}
\label{sec:related-works-appendix}
\paragraph{Federated Learning.}
As the de-facto algorithm in FL, FedAvg employs local SGD~\citep{mcmahan2016communication,lin2020dont} to reduce communication costs and protect client privacy.
However, distribution shifts among clients pose a significant challenge in FL and hinder the performance of FL algorithms~\citep{li2018federated,wang2020federated,karimireddy2020scaffold,jiang2023test,guo2021towards}.
Traditional FL methods primarily aim to improve the convergence speed of global models and incorporate bias reduction techniques~\citep{tang2022virtual,guo2022fedaug,li2021model,li2018federated}.
At the same time, some studies investigate feature distribution shifts using domain generalization techniques~\citep{peng2019federated,wang2022framework,shen2021fedmm,sun2022multi,gan2021fruda}. 
However, single-model approaches are inadequate for handling heterogeneous data distributions, especially when dealing with concept shifts~\citep{ke2022quantifying,guo2023fedconceptem,jothimurugesan2022federated}.
To tackle these challenges, clustered FL algorithms are introduced to enhance FL algorithm performance.\looseness=-1

\paragraph{Clustered FL with fixed cluster numbers.} Clustered FL groups clients based on their local data distribution, tackling the distribution shift problem. Most methods employ hard clustering with a fixed number of clusters, grouping clients by various similarity metrics, such as local loss values~\citep{ghosh2020efficient}, local model parameter differences~\citep{long2023multi}, communication time/local calculation time~\citep{wang2022accelerating}, and fuzzy $c$-Means~\citep{stallmann2022towards}. However, hard clustering may not capture complex relationships between local distributions adequately, and soft clustering paradigms have been proposed to address this issue. 
For instance, FedEM~\citep{marfoq2021federated} employs Expectation-Maximization techniques to maximize likelihood functions. FedGMMcitep{wu2023personalized} suggests using joint distributions instead of conditional distributions. FedRC\citep{guo2023fedconceptem} introduces Robust Clustering, assigning clients with concept shifts to different clusters to enhance model generalization. FedSoft~\citep{ruan2022fedsoft} calculates weights based on the distances between clients' local model parameters and cluster model parameters, with smaller distances indicating larger weights for that cluster.
In this paper, we propose a generalized formulation for clustered FL that encompasses the current methods and improves them by addressing issues related to intra-client inconsistency and efficiency.

\paragraph{Clustered FL with adaptive clustering numbers.} Another line of research focuses on automatically determining the number of clusters.
Current methods utilize hierarchical clustering, which measures client dissimilarity using model parameters or local gradient distances. Most current methods modify cluster numbers by splitting them when client distances within clusters are large~\citep{sattler2020byzantine,sattler2020clustered,zhao2020cluster,briggs2020federated,duan2021fedgroup,duan2021flexible}. Recently, StoCFL~\citep{zeng2023stochastic} suggests initially setting cluster numbers equal to the client count and merging clusters with small distances.
In addition to model parameter distances, some papers employ alternative distance metrics for improved performance.
For instance, \citep{yan2023clustered} employ principal eigenvectors of model parameters. \citep{vahidian2023efficient} use truncated singular value decomposition (SVD) to obtain a reduced set of principal vectors for distance measurement. Meanwhile, \citep{wei2023edge} focus on the distance of normalized local features.
FEDCOLLAB~\citep{bao2023optimizing} focuses on cross-silo scenarios with a limited number of clients
and quantifies client similarity by training client discriminators. However, the need for discriminators between every pair of clients in FEDCOLLAB makes it challenging to expand to cross-device scenarios with numerous clients.
In this paper, we concentrate on cross-device settings, introducing a holistic adaptive clustering framework enabling cluster splitting and merging. We also present enhanced weight updating for soft clustering and finer distance metrics for various clustering principles.\looseness=-1

\section{Algorithms}
\label{sec:algorithms}

\paragraph{Details of the \algfednew.} In Algorithm~\ref{alg:algorithm-framework}, we present a concise summary of the comprehensive algorithm that integrates all the enhanced components of \algfednew, as introduced in Section~\ref{sec:method}. Specifically, during each communication round, the algorithm performs the following steps:
(1) Randomly selects a subset of clients.
(2) Calculates prototypes using Equations~\eqref{equ:get-the-prototypes} and~\eqref{equ:get-the-mean-prototypes}.
(3) Performs local updates using Algorithm~\ref{alg:local-updates}.
(4) The server aggregates local updates, updates cluster model parameters, and computes client distance metrics using Equation~\eqref{equ:theory-distance} for each cluster $k$.
(5) Identifies $k_{max}$ as the cluster with the highest average distance.
(6) Checks if the maximum distance within $k_{max}$ significantly exceeds the average distance in this cluster.
(7) If the following condition is met, splits the clusters using Algorithm~\ref{alg:add-cluster}.
\begin{align}
    \max (D_{k_{max}}^{t}) - \operatorname{mean} (D_{k_{max}}^{t}) \ge \rho \, .
\end{align}
(8) Mark and remove the empty clusters no clients will assign large clustering weights to using Algorithm~\ref{alg:remove-cluster}.

\paragraph{Intuitions on the distance metrics design.} 
From the objective function (Eq.~\eqref{equ:fedias-objective}), we should assign higher clustering weights $\omega_{i,j;k}$ to clusters with greater $\cL_k$ to maximize the objective function.
Because the ultimate goals of the clustering algorithms are solving the objective functions, we analyse the $\cL_k$ to check the key factors influencing the value of $\cL_k$, and the relationships between these factors and the clustering principles.

We use the following algorithms as examples. For FedEM~\citep{marfoq2021federated} and IFCA~\citep{ghosh2020efficient},  $\cL_k(\xx, y, \mphi, \mtheta_k) = \cP_{\mphi, \mtheta_k}(y | \xx)$;
For FedRC~\citep{guo2023fedconceptem}, $\cL_k(\xx, y, \mphi, \mtheta_k) = \frac{\cP_{\mphi, \mtheta_k}(\xx, y)}{\cP_{\mphi, \mtheta_k}(\xx)\cP_{\mphi, \mtheta_k}(y)}$.
Defining $\zz = g(\xx;\mphi)$ as the local features extracted by $\mphi$, assuming a $\xx \to \zz \to y$ Probabilistic Graphical Model (with $\xx$ and $y$ being independent given $\zz$), we obtain:\looseness=-1
\begin{small}
    \begin{align*}
        \textstyle
        \cL_k(\xx, y, \mphi, \mtheta_k)
        =
        \begin{split}
            & \left \{
            \begin{array}{ll}
                \cP(y|\xx; \mphi, \mtheta_k) =  \frac{\cP(y | \zz; \mtheta_k) \cP(\zz |\xx; \mphi)}{ \cP(\zz | \xx, y; \mphi) }
                \,                                                                                                                                                                                                     & \text{\tiny (FedEM, IFCA)} \\
                \frac{\cP(\xx, y; \mphi, \mtheta_k)}{\cP(y; \mphi, \mtheta_k) \cP(\xx; \mphi, \mtheta_k)} = \frac{\cP(y | \zz; \mtheta_k) \cP(\zz |\xx; \mphi)}{\cP(y; \mphi, \mtheta_k) \cP(\zz | \xx, y; \mphi) } \, & \text{\tiny (FedRC)}
            \end{array}
            \right \} \nonumber \\
           & = \frac{\tilde{\cL}_k(\zz, y;\mtheta_k) \cP(\zz | \xx; \mphi)}{ \cP(\zz | \xx, y; \mphi) } \,.
        \end{split}
    \end{align*}
\end{small}%
Then we aim to give the following explanations of the three terms $\cP(\zz | \xx; \mphi)$, $\cP(\zz | \xx; \mphi)$, and $\tilde{\cL}_k(\zz, y;\mtheta_k)$, which align with the terms considered in Sec~\ref{sec:interpretable-distance-metric}.
\begin{itemize}[leftmargin=12pt,nosep]
    \item \textbf{$\cP(\zz | \xx; \mphi)$ for feature and label shifts}. Feature shifts introduce significant distances in $\xx$. Additionally, $\xx$ with different $y$ values generally exhibit substantial distances in the feature space. Without this, classifiers cannot distinguish samples with different labels. Hence, we employ $\cP(\zz | \xx; \mphi)$ to assess both feature and label shifts.
    \item \textbf{$\cP(\zz | \xx, y; \mphi)$ for concept shifts.} Concept shifts signify alerted $\xx-y$ correlations. Hence, samples with concept shifts but have the same $y$ should exhibit a significant difference in $\cP(\zz | \xx, y; \mphi)$.
    \item \textbf{$\tilde{\cL}_k(\zz, y;\mtheta_k)$ for the quality of clustering.} The $\tilde{\cL}_k(\zz, y;\mtheta_k)$ is defined using features $\zz = g(\xx;\mphi)$ instead of data $\xx$ in $\cL_k(\xx, y, \mphi, \mtheta_k)$. This term evaluates if features $\zz$ can be correctly assigned to clusters given the current $\mTheta$; otherwise, the objectives in~\eqref{equ:fedias-objective} cannot be achieved.
\end{itemize}

Finally, we propose the following distance metric:
\begin{small}
    \begin{align}
        \textstyle
        \mD_{i,j}^{k}
        \begin{split} =
            \left \{
            \begin{array}{ll}
                \max \left \{ d_c, d_{lf} \right \}
                \E_{D_i} \left[\tilde{\cL}_k(\zz, y;\mtheta_k) \right]
                \E_{D_j} \left[\tilde{\cL}_k(\zz, y;\mtheta_k) \right] \, , & \text{\cpA} \\
                d_c
                \E_{D_i} \left[\tilde{\cL}_k(\zz, y;\mtheta_k) \right]
                \E_{D_j} \left[\tilde{\cL}_k(\zz, y;\mtheta_k) \right] \, ,                             & \text{\cpB}
            \end{array}
            \right.
        \end{split}
        \label{equ:theory-distance-appendix}
    \end{align}
\end{small}%
where $\text{dist}$ is the cos-similarity in this paper, and
\begin{align}
    d_c \!=\! \max_{y} \left\{ \text{dist} \left( \E_{D_i} \left[\cP(\zz | \xx, y;\mphi) \right], \E_{D_j} \left[\cP(\zz| \xx, y; \mphi) \right] \right) \right\} \, , \\
    d_{lf} \!=\! \text{dist} \left( \E_{D_i} \left[\cP(\zz | \xx;\mphi) \right], \E_{D_j} \left[\cP(\zz| \xx; \mphi) \right] \right)
\end{align}
The distances above become large only when the following conditions occur together:
(1) Large values of $d_c$ indicate concept shifts between clients $i$ and $j$;
(2) Large $d_{lf}$ indicate significant feature and label distribution differences.
(2) Large values of $\E_{D_i} \left[\tilde{\cL}_k(\zz, y;\mtheta_k) \right] \E_{D_j} \left[\tilde{\cL}_k(\zz, y;\mtheta_k) \right]$ indicate incorrect clustering weights with high confidence.

\paragraph{Approximation of the distance metrics in practice.}
When calculating the distance metrics (Equation~\eqref{equ:theory-distance}) in practice, to avoid training extra generative networks and transmitting more data between servers and clients, we substitute $\tilde{\omega}_{i;k}$ for $\tilde{\cL}_k(\zz, y;\mtheta_k)$ since $\tilde{\omega}_{i;k}$ is positively correlated with $\tilde{\cL}_k(\zz, y;\mtheta_k)$~\citep{marfoq2021federated,guo2023fedconceptem}. Additionally, we approximate $\E_{D_i} \left[\cP(\zz | \xx, y;\mphi) \right]$ and $\E_{D_i} \left[\cP(\zz | \xx, y;\mphi) \right]$ using feature prototypes.
The prototypes are defined by the following equation:
\begin{small}
    \begin{align}
        \tilde{d}_c = Dist(\mP_{c, i}, \mP_{c, j}) \, ,
        \tilde{d}_{lf} = Dist(\mP_{lf, i}, \mP_{lf, j}) \, ,
    \end{align}
\end{small}
where
\begin{small}
    \begin{align}
        \mP_{c, i} \in \R^{d \times C}
                               & = [\frac{1}{N_{i, 1}} \sum_{j=1}^{N_i} \mathbf{1}_{y_{i,j}=1} g(\xx_{i,j}, \mphi), \cdots, \frac{1}{N_{i, C}} \sum_{j=1}^{N_i} \mathbf{1}_{y_{i,j}=C} g(\xx_{i,j}, \mphi)] \, ,
        \label{equ:get-the-prototypes-appendix}                                                                                                                                                                           \\
        \mP_{lf, i} \in \R^{d} & = \frac{1}{N_{i}} \sum_{j=1}^{N_i} g(\xx_{i,j}, \mphi) \, ,
        \label{equ:get-the-mean-prototypes-appendix}
    \end{align}
\end{small}
$N_{i,c} = \sum_{j=1}^{N_i} \mathbf{1}_{y_{i,j}=c}$, $g(\xx_{i,j}, \mphi)$ is the function parameterized by $\mphi$,
$Dist$ is a function to measure the distance between prototypes, which we use the cosine similarity as an example in this paper.

\begin{algorithm}[!t]
    \small
    \begin{algorithmic}[1]
        \small
        \Require{Local datasets $D_1, \dots, D_N$, number of local iterations $\mathcal{T}$, number of communication rounds $T$, number of clients chosen in each round $S$, initial number of clusters $K^{0}$, number of classes $C$, and hyper-parameter $\rho$.}
        \Ensure{Trained global feature extractor $\mphi^{T}$, final number of clusters $K^{T}$, and cluster-specific predictors $\mTheta^{T} = [\mtheta_1^{T}, \cdots, \mtheta_{K^{T}}^{T}]$.}
        \myState{Initialize $\mphi^{0}, \mTheta^{0} = [\mtheta_1^{0}, \cdots, \mtheta_{K^{0}}^{0}]$.}
        \For{$t = 0, \dots, T-1$}
        \myState{Choose a subset of clients $\mathcal{S}^{t}$, where $|\mathcal{S}^{t}| = S$.}
        \For{chosen client $i \in \mathcal{S}^{t}$}
        \myState{Calculate client prototypes $\mP_{i}^{t}$ by Equation~\eqref{equ:get-the-prototypes}-~\eqref{equ:get-the-mean-prototypes}.}
        \myState{$\mathcal{F}_{i}^{t+1}, \tilde{\omega}_{i;k}^{t+1}, \mphi_{i}^{\mathcal{T}}, \mtheta_{k, i}^{\mathcal{T}} \gets$ Local updates by Algorithm~\ref{alg:local-updates}.}
        \myState{Send $\mP_{i}^{t}$, $\mathcal{F}_{i}^{t+1}$, and  $\tilde{\omega}_{i;k}^{t+1}, \mphi_{i}^{\mathcal{T}}, \mtheta_{k, i}^{\mathcal{T}}$, $\forall k \le K^{t}$ to the server.}
        \EndFor
        \myState{$\mathcal{F}_{i}^{latest} \gets \mathcal{F}_{i}^{t+1}$.}
        \myState{$\mphi^{t+1} = \frac{1}{\sum_{i \in \cS^{t}}} \sum_{i \in \cS^{t}} N_i \mphi_i^{\cT}$.}
        \myState{$\mtheta_k^{t+1} = \frac{1}{\sum_{i \in \cS^{t}}} \sum_{i \in \cS^{t}} N_i \mtheta_{k,i}^{\cT}$, $\forall k \le K^{t}$.}
        \myState{$\cF_{g}^{t+1} \gets [ \cF_{g, 1}^{t+1}, \cdots, \cF_{g, K^{t}}^{t+1} ]$, where $\cF_{g, k}^{t+1} \gets [\sum_{i} \cF_{i, k, 1}^{latest}, \cdots, \sum_{i} \cF_{i, k, C}^{latest}]$.}
        \myState{Initialize $\cC_{k}^{t} = \emptyset$, $\forall k \le K^t$.}
        \For{all client $i$}
        \myState{$c_i \gets \argmax_{k} \tilde{\omega}_{i;k}$.}
        \myState{$\cC_{c_{i}}^{t} \gets \cC_{c_i}^{t} \cup i$.}
        \EndFor
        \myState{$\cR^{t} \gets \emptyset$.}
        \For{$k \le K^{t}$}
        \If{$\cC_{k}^{t}$ is empty}
        \myState{$\cR^{t} \gets \cR^{t} \cup k$.}
        \EndIf
        \myState{Get the cluster-specific distance matrix $\mD_k \in \R^{|\tilde{\cS}_{k}^{t}| \times |\tilde{\cS}_{k}^{t}|}$, $\forall k \le K^{t}$ by Equation~\eqref{equ:theory-distance}.}
        \EndFor
        \myState{$k_{min} \gets \argmax_{k} \max(\mD_{k}^{t})$.}
        \If{$\max(\mD_{k_{min}}^{t}) - mean(\mD_{k_{min}}^{t}) \ge \rho$}
        \myState{Split $\cC_{k_{min}}^{t}$ into two clusters $\cC_{k_{min}, 1}^{t}$ and $\cC_{k_{min}, 2}^{t}$.}
        \myState{$\mtheta_{k_{min}}^{t+1} = \frac{1}{\sum_{i \in \cC_{k_{min}, 1}^{t}}} \sum_{i \in \cC_{k_{min}, 1}^{t} } N_i \mtheta_{k_{min}, i}^{\cT} $.}
        \myState{Add new cluster and update $\cF_g$ by server side of Algorithm~\ref{alg:add-cluster}.}
        \myState{$K^{t+1} \gets K^{t} + 1$.}
        \Else
        \myState{$K^{t+1} \gets K^{t}$.}
        \EndIf
        \For{cluster $k_r \in \cR^{t}$}
        \myState{Remove cluster $k_r$ and update $\cF_g$ by server side of Algorithm~\ref{alg:remove-cluster}.}
        \myState{$K^{t+1} \gets K^{t+1} - 1$.}
        \EndFor
        \myState{Send $\mphi^{t+1}$, $\mTheta^{t+1} = [\mtheta_1^{t+1}, \cdots, \mtheta_{K^{t+1}}^{t+1}]$, and information about add/remove cluster to clients.}
        \EndFor
    \end{algorithmic}
    \mycaptionof{algorithm}{\small Algorithm Framework of \algfednew}
    \label{alg:algorithm-framework}
\end{algorithm}

\begin{algorithm}[!t]
    \small
    \begin{algorithmic}[1]
        \small
        \Require{Number of local iterations $\mathcal{T}$, current number of clusters $K^{t}$, number of classes $C$, local dataset $D_i$, global feature extractor $\mphi^{t}$, cluster-specific predictors $\mTheta^{t} = [\mtheta_1^{t}, \cdots, \mtheta_{K^{t}}^{t}]$.}
        \Ensure{Trained feature extractor $\mphi_{i}^{\mathcal{T}}$, predictors $\mTheta_{i}^{\mathcal{T}} = [\mtheta_{i,1}^{\mathcal{T}}, \cdots, \mtheta_{i,K^{t}}^{\mathcal{T}}]$, $\tilde{\mOmega}_{i}^{t+1} = [\tilde{\omega}_{i;k}, \cdots, \tilde{\omega}_{i;K^{t}}]$, and $\mathcal{F}_{i}^{t+1} = [\mathcal{F}_{i, 1}^{t+1}, \cdots, \mathcal{F}_{i, K^{t}}^{t+1}]$, where $\mathcal{F}_{i, k}^{t+1} = [\mathcal{F}_{i, k, 1}^{t+1}, \cdots, \mathcal{F}_{i, k, C}^{t+1}]$.}
        \myState{Update $\gamma_{i,j;k}^{t+1}, \tilde{\gamma}_{i,j;k}^{t+1}, \omega_{i,j;k}^{t+1}, \tilde{\omega}_{i;k}^{t+1}$ by Equations~\eqref{equ:uptate-gamma}-\eqref{equ:update-omega}, $\forall j \le N_i, k \le K^{t}$.\Comment{Tier 2}}
        \For{$\tau = 1, \dots, \mathcal{T}$}\Comment{Tier 1}
        \myState{Update $\mtheta_{k, i}^{\tau}$ by Equation~\eqref{equ:update-theta}, $\forall k \le K^{t}$.}
        \myState{Update $\mphi_{i}^{\tau}$ by Equation~\eqref{equ:update-mphi}.}
        \EndFor
        \myState{$\mathcal{F}_{i, k, c}^{t+1} \gets \sum_{j=1}^{N_i} \mathbf{1}_{y_{i,j}=c} \gamma_{i,j;k}^{t+1}$.}

    \end{algorithmic}
    \mycaptionof{algorithm}{\small Local Updates of \algfednew}
    \label{alg:local-updates}
\end{algorithm}

\begin{algorithm}[!t]
    \small
    \begin{algorithmic}[1]
        \small
        \Require{$k_{min}$, set of clients $\cC_{k_{min}, 2}^{t}$, the corresponding $\mtheta_{k_{min}, i}^{\cT}$ for each client $i \in \cC_{k_{min}, 2}^{t}$, and $\cF_g$.}
        \Ensure{New $\cF_g^{t+1}$, and predictor of the new cluster $\mtheta_{K^{t} + 1}^{t+1}$.}
        \myState{\textbf{Server Side:}}
        \myState{$\mtheta_{K^{t} + 1}^{t+1} = \frac{1}{\sum_{i \in \cC_{k_{min}, 2}^{t}}} \sum_{i \in \cC_{k_{min}, 2}^{t} } N_i \mtheta_{k_{min}, i}^{\cT} $.}
        \myState{Add $\cF_{g, K^{t} + 1} \gets \cF_{g, k_{min}}$ to $\cF_g^{t+1}$.}
        \myState{Add $\cF_{i, K^{t} + 1}^{latest} \gets \cF_{i, k_{min}}^{latest}$ to $\cF_{i}^{latest}$, $\forall i$. }
        \myState{\textbf{Client Side:}}
        \myState{$\omega_{i,j;K^{t} + 1} \gets \omega_{i,j;k_{min}} / 2$, $\forall j \le N_i$.}
        \myState{$\omega_{i,j;k_{min}} \gets \omega_{i,j;k_{min}} / 2$, $\forall j \le N_i$.}
        \myState{$\tilde{\omega}_{i;K^{t} + 1} \gets \tilde{\omega}_{i;k_min} / 2$.}
        \myState{$\tilde{\omega}_{i;k_{min}} \gets \tilde{\omega}_{i;k_min} / 2$.}

    \end{algorithmic}
    \mycaptionof{algorithm}{\small Cluster Adding of \algfednew}
    \label{alg:add-cluster}
\end{algorithm}

\begin{algorithm}[!t]
    \small
    \begin{algorithmic}[1]
        \small
        \Require{The cluster needs to be removed $k_r$, and $\cF_g^{t+1}$.}
        \myState{\textbf{Server Side:}}
        \myState{Remove $\cF_{g, k_r}^{t+1}$ from $\cF_g^{t+1}$.}
        \myState{Remove $\cF_{i, k_r}^{latest}$ from $\cF_{i}^{latest}$, $\forall i$.}
        \myState{\textbf{Client Side:}}
        \myState{$\omega_{i,j;k} \gets \frac{\omega_{i,j;k}}{\sum_{n \not = k_r} \omega_{i,j;n}}$, $\forall j \le N_i, k \not = k_r$.}
        \myState{$\tilde{\omega}_{i;k} \gets \frac{\omega_{i;k}}{\sum_{n \not = k_r} \omega_{i;n}}$, $\forall k \not = k_r$.}
        \myState{Remove $\gamma_{i,j;k_r}, \tilde{\gamma}_{i,j;k_r}, \omega_{i,j;k_r}, \tilde{\omega}_{i,j;k_r}$, $\forall j \le N_i$.}
    \end{algorithmic}
    \mycaptionof{algorithm}{\small Cluster Removing of \algfednew}
    \label{alg:remove-cluster}
\end{algorithm}



\clearpage
\section{Experiment Results}
\label{sec:Additional Experiment Results}

\subsection{Datasets and Models}
\label{sec:datasets-models}

\paragraph{Diverse distribution shifts scenarios.} Similar to previous work~\citep{guo2023fedconceptem}, the diverse distribution shift scenario construct clients with three types of distribution shifts with each other:
\begin{itemize}[leftmargin=12pt,nosep]
    \item \textbf{Label Distribution Shifts:} We use the idea introduced~\citep{yoshida2019hybrid,hsu2019measuring,reddi2021adaptive}, where we leverage the Latent Dirichlet Allocation (LDA) with $\alpha = 1.0$. We split datasets to 100 clients by default.
    \item \textbf{Feature Distribution Shifts:} We utilize the idea of constructing CIFAR10-C, CIFAR100-C, and ImageNet-C~\citep{hendrycks2019benchmarking}. In detail, we apply random augmentations to client samples, selecting from 20 types, including 'Original', 'Gaussian Noise', 'Shot Noise', 'Impulse Noise', 'Defocus Blur', 'Glass Blur', 'Motion Blur', 'Zoom Blur', 'Snow', 'Frost', 'Fog', 'Brightness', 'Contrast', 'Elastic', 'Pixelate', 'JPEG', 'Speckle Noise', 'Gaussian Blur', 'Spatter', and 'Saturate'. Augmentation types remain consistent within each client.
    \item \textbf{Concept Shifts:} For label $y \le C_{\beta}$, it becomes $y$, $(1 + y) \% C_{\beta}$, and $(2 + y) \% C_{\beta}$ across concepts, where $C_{\beta} = \lfloor C * \beta \rfloor$, and $C$ is the number of classes.
\end{itemize}

\paragraph{Noisy label scenarios.} We follow the methodology of previous works~\citep{fang2022robust,ke2022quantifying} to construct noisy label scenarios. Our approach involves two types of noisy labels: symmetric flip and pair flip. Symmetric flip entails randomly flipping the original class label to any wrong class label with equal probability. Pair flip involves flipping the original class label only to a very similar wrong category. We use the parameter $\chi$ to control the noisy rate, where $\chi = 0.1$ indicates that $10\%$ of the data have wrong labels.

\subsection{Baselines and Hyper-Parameter Settings}
\label{sec: experiment-settings}

\paragraph{Detailed implementations and hyper-parameter settings for all the algorithms} Unless special mentioned, we split each dataset to 100 clients with 3 concepts. The learning rates are chosen in $\{0.03, 0.06, 1.0\}$, and we report the best results for each algorithm. We run the algorithms for 200 communication rounds and set the number of local epochs to 1. The experiments are conducted on single NVIDIA RTX 3090 GPUs.

\paragraph{Detailed implementations and hyper-parameter settings of baseline algorithms.} The details of the settings and hyper-parameters we used for the baseline methods a summarized below. We exclude the algorithms that do not require additional hyper-parameters here.
\begin{itemize}[leftmargin=12pt]
    \item \textbf{CFL}~\citep{sattler2020byzantine}. We use the public code provided by \citep{marfoq2021federated} for the CFL algorithm. The hyper-parameters $\text{tol}_1$ and $\text{tol}_2$ are tuned, and we report how the hyper-parameters affect the results of the algorithm in Table~\ref{tab:performance-beta-02}.
    \item \textbf{ICFL}~\citep{yan2023clustered}. Follow the same setting as the original paper, we set the hyper-parameter $\alpha*(0)$ to $\{0.85, 0.98\}$, and $\epsilon_1 = 4.0$.
    \item \textbf{stoCFL}~\citep{zeng2023stochastic}. We choose $\tau = \{0, 0.05, 0.1, 0.15\}$ to control the trade-off between personalization and generalization as suggested by the original paper. In addition, we choose $\lambda = 0.5$, which always achieve the best performance as reported in the original paper.
    \item \textbf{\algfednew (FedRC)}~\citep{guo2023fedconceptem}. We set $\tilde{\mu} = 0.4$, and choose $\rho = \{0.05, 0.1, 0.3\}$. The distance between clients are calculated by Equation~\eqref{equ:theory-distance}.
    \item \textbf{\algfednew (FedEM)}~\citep{marfoq2021federated}. We set $\tilde{\mu} = 0.4$, and choose $\rho = \{0.05, 0.1, 0.3\}$. The distance between clients are calculated by Equation~\eqref{equ:theory-distance}.
    \item \textbf{\algfednew (FeSEM)}~\citep{long2023multi}. We choose $\rho = \{0.05, 0.1, 0.3\}$. Follow the original paper, we use hard clustering paradigms that does not require the hyper-parameter $\tilde{\mu}$. The model splitting process is the same as \citep{sattler2020byzantine} that designed for hard clustering paradigms.
          The distance between clients are calculated by Equation~\eqref{equ:theory-distance}.
\end{itemize}

\subsection{Additional Experiment Results}

\paragraph{Results on noisy data scenarios}
In Table~\ref{tab:results on noisy data scenario}, we show the performance of clustered FL emthods on noisy data scenarios. Results show that \algfednew consistently outperform other methods by a large margin.

\begin{table}[!t]
    \centering
     \caption{\textbf{Performance of algorithms on noisy data scenarios.} We evaluated the performance of algorithms using the CIFAR10 dataset split into 100 clients. For each algorithm, we report the best test accuracy for all 200 communication rounds.}
    \resizebox{1.0\textwidth}{!}{
        \begin{tabular}{c c c c c c c}
            \toprule
            \multirow{2}{*}{Algorithm} & \multicolumn{4}{c}{CIFAR10 (MobileNetV2)}                                                                                                                                                                                     \\
            \cmidrule(lr){2-5} \cmidrule(lr){6-7}
                                       & \textit{Pairflip}, $\chi=0.1$                         & \textit{Pairflip},$\chi=0.2$                          & \textit{Symflip}, $\chi=0.2$                          & \textit{Symflip}, $\chi=0.4$                          \\
            \midrule
            FedAvg                     & $54.75$ \small{\transparent{0.5} $\pm 1.45$}          & $52.35$ \small{\transparent{0.5} $\pm 1.65$}          & $52.60$ \small{\transparent{0.5} $\pm 0.50$}          & $41.80$ \small{\transparent{0.5} $\pm 0.50$}          \\
            FeSEM                      & $32.60$ \small{\transparent{0.5} $\pm 1.30$}          & $35.25$ \small{\transparent{0.5} $\pm 2.95$}          & $32.40$ \small{\transparent{0.5} $\pm 2.80$}          & $29.70$ \small{\transparent{0.5} $\pm 0.01$}          \\
            IFCA                       & $24.95$ \small{\transparent{0.5} $\pm 7.05$}          & $20.55$ \small{\transparent{0.5} $\pm 4.65$}          & $30.35$ \small{\transparent{0.5} $\pm 2.05$}          & $36.05$ \small{\transparent{0.5} $\pm 4.45$}          \\
            FedEM                      & $64.40$ \small{\transparent{0.5} $\pm 1.10$}          & $57.55$ \small{\transparent{0.5} $\pm 2.95$}          & $53.00$ \small{\transparent{0.5} $\pm 1.90$}          & $43.10$ \small{\transparent{0.5} $\pm 0.20$}          \\
            FedRC                      & $67.90$ \small{\transparent{0.5} $\pm 1.00$}          & $59.95$ \small{\transparent{0.5} $\pm 1.05$}          & $55.25$ \small{\transparent{0.5} $\pm 2.05$}          & $42.00$ \small{\transparent{0.5} $\pm 0.40$}          \\
            \algfednew                 & $\mathbf{66.70}$ \small{\transparent{0.5} $\pm 0.40$} & $\mathbf{62.70}$ \small{\transparent{0.5} $\pm 0.30$} & $\mathbf{59.95}$ \small{\transparent{0.5} $\pm 1.15$} & $\mathbf{47.20}$ \small{\transparent{0.5} $\pm 0.20$} \\
            \bottomrule
        \end{tabular}
    }
    \label{tab:results on noisy data scenario}
\end{table}

\paragraph{Additional results on diverse distribution shift scenarios.} In Table~\ref{tab:performance-beta-04}, we show the performance of algorithms with $\beta = 0.4$. Results show \algfednew always achieve the best test accuracy, and achieve a good local-global balance.

\begin{table}[!t]
    \centering
    \caption{\textbf{Performance of the adaptive clustering methods.} We evaluated algorithm performance on CIFAR10 and CIFAR100 datasets, employing 100 clients. For each algorithm, we present the highest validation and test accuracies across 200 communication rounds, and the final number of clusters during training denoted as $K^{T}$. All experiments utilized MobileNet-V2~\citep{sandler2018mobilenetv2}.}
    \resizebox{1.0\textwidth}{!}{

        \begin{tabular}{l c c c c c c c c c c c c}
            \toprule
            \multirow{2}{*}{Algorithm}                   & \multicolumn{3}{c}{CIFAR10, $\beta = 0.4$}   & \multicolumn{3}{c}{CIFAR100, $\beta = 0.4$}                                                                                                                    \\
            \cmidrule(lr){2-4} \cmidrule(lr){5-7}
                                                         & Val                                          & Test                                         & $K^{T}$ & Val                                          & Test                                         & $K^{T}$ \\
            \midrule
            FedAvg                                       & $48.16$ \small{\transparent{0.5} $\pm 1.64$} & $49.93$ \small{\transparent{0.5} $\pm 0.80$} & 3.0     & $22.77$ \small{\transparent{0.5} $\pm 0.01$} & $24.62$ \small{\transparent{0.5} $\pm 0.55$} & 3.0     \\
            FeSEM                                        & $46.08$ \small{\transparent{0.5} $\pm 4.54$} & $35.99$ \small{\transparent{0.5} $\pm 4.59$} & 3.0     & $23.56$ \small{\transparent{0.5} $\pm 1.52$} & $22.31$ \small{\transparent{0.5} $\pm 1.08$} & 3.0     \\
            IFCA                                         & $36.15$ \small{\transparent{0.5} $\pm 3.45$} & $24.79$ \small{\transparent{0.5} $\pm 1.18$} & 3.0     & $27.72$ \small{\transparent{0.5} $\pm 0.82$} & $21.37$ \small{\transparent{0.5} $\pm 1.33$} & 3.0     \\
            FedEM                                        & $60.26$ \small{\transparent{0.5} $\pm 1.10$} & $54.44$ \small{\transparent{0.5} $\pm 0.04$} & 3.0     & $25.80$ \small{\transparent{0.5} $\pm 0.20$} & $22.88$ \small{\transparent{0.5} $\pm 0.19$} & 3.0     \\
            FedRC                                        & $57.99$ \small{\transparent{0.5} $\pm 0.29$} & $56.75$ \small{\transparent{0.5} $\pm 0.38$} & 3.0     & $30.94$ \small{\transparent{0.5} $\pm 0.88$} & $31.63$ \small{\transparent{0.5} $\pm 0.20$} & 3.0     \\
            \midrule
            CFL                                                                                                                                                                                                                                                          \\
            $\quad \text{tol}_1 =0.4, \text{tol}_2 =1.6$ & $61.86$ \small{\transparent{0.5} $\pm 5.29$} & $51.15$ \small{\transparent{0.5} $\pm 0.82$} & 6.0     & $34.11$ \small{\transparent{0.5} $\pm 6.35$} & $21.04$ \small{\transparent{0.5} $\pm 2.21$} & 5.0
            \\
            $\quad \text{tol}_1 =0.4, \text{tol}_2 =0.8$ & $60.42$ \small{\transparent{0.5} $\pm 0.31$} & $41.59$ \small{\transparent{0.5} $\pm 2.14$} & 8.0     & $36.23$ \small{\transparent{0.5} $\pm 3.58$} & $16.03$ \small{\transparent{0.5} $\pm 2.69$} & 6.0
            \\
            $\quad \text{tol}_1 =0.2, \text{tol}_2 =0.8$ & $49.14$ \small{\transparent{0.5} $\pm 6.11$} & $49.88$ \small{\transparent{0.5} $\pm 4.21$} & 3.0     & $34.20$ \small{\transparent{0.5} $\pm 7.13$} & $26.42$ \small{\transparent{0.5} $\pm 0.73$} & 2.5
            \\
            ICFL                                                                                                                                                                                                                                                         \\
            $\quad \alpha^{*} (0) =0.85$                 & $77.73$ \small{\transparent{0.5} $\pm 0.47$} & $52.03$ \small{\transparent{0.5} $\pm 0.10$} & 100.0   & $49.71$ \small{\transparent{0.5} $\pm 0.55$} & $28.55$ \small{\transparent{0.5} $\pm 0.03$} & 100.0   \\
            $\quad \alpha^{*} (0) =0.98$                 & $63.69$ \small{\transparent{0.5} $\pm 3.58$} & $54.02$ \small{\transparent{0.5} $\pm 1.11$} & 81.5    & $45.72$ \small{\transparent{0.5} $\pm 1.10$} & $28.45$ \small{\transparent{0.5} $\pm 0.82$} & 70.0
            \\
            StoCFL                                                                                                                                                                                                                                                       \\
            $\quad \tau=0.00$                            & $48.55$ \small{\transparent{0.5} $\pm 0.95$} & $51.25$ \small{\transparent{0.5} $\pm 1.16$} & 1.5     & $24.50$ \small{\transparent{0.5} $\pm 0.03$} & $25.70$ \small{\transparent{0.5} $\pm 1.51$} & 1.0     \\
            $\quad \tau=0.05$                            & $57.84$ \small{\transparent{0.5} $\pm 2.26$} & $50.42$ \small{\transparent{0.5} $\pm 0.97$} & 20.5    & $26.24$ \small{\transparent{0.5} $\pm 1.46$} & $26.60$ \small{\transparent{0.5} $\pm 1.17$} & 4.0     \\
            $\quad \tau=0.10$                            & $72.91$ \small{\transparent{0.5} $\pm 2.25$} & $47.84$ \small{\transparent{0.5} $\pm 2.60$} & 59.0    & $67.67$ \small{\transparent{0.5} $\pm 1.68$} & $9.89$ \small{\transparent{0.5} $\pm 0.45$}  & 86.0    \\
            $\quad \tau=0.15$                            & $77.19$ \small{\transparent{0.5} $\pm 2.31$} & $41.49$ \small{\transparent{0.5} $\pm 0.97$} & 92.0    & $70.13$ \small{\transparent{0.5} $\pm 0.27$} & $7.77$ \small{\transparent{0.5} $\pm 0.23$}  & 94.0    \\
            \midrule
            \algfednew (FeSEM)                                                                                                                                                                                                                                           \\
            \quad $\rho=0.1$                             & $85.30$ \small{\transparent{0.5} $\pm 1.05$} & $45.20$ \small{\transparent{0.5} $\pm 0.28$} & $47.0$  & $58.61$ \small{\transparent{0.5} $\pm 4.14$} & $18.29$ \small{\transparent{0.5} $\pm 2.38$} & $35.5$  \\
            \quad $\rho=0.3$                             & $80.34$ \small{\transparent{0.5} $\pm 1.33$} & $48.25$ \small{\transparent{0.5} $\pm 2.72$} & 20.5    & $44.65$ \small{\transparent{0.5} $\pm 0.35$} & $21.73$ \small{\transparent{0.5} $\pm 1.27$} & 12.0    \\
            \algfednew (FedEM)                                                                                                                                                                                                                                           \\
            \quad $\rho=0.05$                            & $80.31$ \small{\transparent{0.5} $\pm 1.60$} & $53.62$ \small{\transparent{0.5} $\pm 4.36$} & $18.5$  & $62.19$ \small{\transparent{0.5} $\pm 1.54$} & $21.15$ \small{\transparent{0.5} $\pm 0.88$} & $44.5$  \\
            \quad $\rho=0.1$                             & $82.89$ \small{\transparent{0.5} $\pm 0.92$} & $56.27$ \small{\transparent{0.5} $\pm 1.08$} & 26.5    & $59.08$ \small{\transparent{0.5} $\pm 0.06$} & $21.29$ \small{\transparent{0.5} $\pm 0.87$} & 31.5    \\
            \quad $\rho=0.3$                             & $80.72$ \small{\transparent{0.5} $\pm 1.90$} & $55.77$ \small{\transparent{0.5} $\pm 1.93$} & 10.0    & $49.84$ \small{\transparent{0.5} $\pm 6.85$} & $28.62$ \small{\transparent{0.5} $\pm 0.78$} & 11.0    \\
            \algfednew (FedRC)                                                                                                                                                                                                                                           \\
            \quad $\rho=0.05$                            & $68.48$ \small{\transparent{0.5} $\pm 0.25$} & $66.77$ \small{\transparent{0.5} $\pm 0.28$} & 9.5     & $38.75$ \small{\transparent{0.5} $\pm 0.98$} & $30.45$ \small{\transparent{0.5} $\pm 0.07$} & 10.0    \\
            \quad $\rho=0.1$                             & $68.56$ \small{\transparent{0.5} $\pm 3.56$} & $65.75$ \small{\transparent{0.5} $\pm 5.40$} & 6.0     & $40.30$ \small{\transparent{0.5} $\pm 1.19$} & $30.23$ \small{\transparent{0.5} $\pm 0.85$} & 11.0    \\
            \quad $\rho=0.3$                             & $70.86$ \small{\transparent{0.5} $\pm 0.31$} & $70.13$ \small{\transparent{0.5} $\pm 0.42$} & 5.5     & $39.62$ \small{\transparent{0.5} $\pm 0.34$} & $32.22$ \small{\transparent{0.5} $\pm 0.20$} & 5.0     \\
            \bottomrule
        \end{tabular}
    }
    \label{tab:performance-beta-04}
\end{table}

\begin{table*}[!t]
    \centering
    \caption{\textbf{Ablation studies on techniques in Sec~\ref{sec:inconsistency-aware-objective}.} We evaluated algorithm performance on CIFAR10 and CIFAR100 datasets, showcasing the top Validation and Test accuracies for each. We kept $\rho = 0.3$ consistent across all algorithms and varied $\tilde{\mu}$ to adjust the penalty term's strength in the objective function. The best results in each block are highlighted.}
    \resizebox{1.0\textwidth}{!}{
        \begin{tabular}{l c c c c c c c c}
            \toprule
            \multirow{2}{*}{Algorithm} & \multicolumn{2}{c}{CIFAR10, $\beta = 0.2$}            & \multicolumn{2}{c}{CIFAR10, $\beta = 0.4$}            & \multicolumn{2}{c}{CIFAR100, $\beta = 0.2$}           & \multicolumn{2}{c}{CIFAR100, $\beta = 0.4$}                                                                                                                                                                                                                                           \\
            \cmidrule(lr){2-3} \cmidrule(lr){4-5} \cmidrule(lr){6-7} \cmidrule(lr){8-9}
                                       & Val                                                   & Test                                                  & Val                                                   & Test                                                  & Val                                                   & Test                                                  & Val                                                   & Test                                                  \\
            \midrule
            \algfednew (FedEM)                                                                                                                                                                                                                                                                                                                                                                                                                                                                         \\
            \quad $\tilde{\mu} = 0.0$  & $\mathbf{83.67}$ \small{\transparent{0.5} $\pm 0.72$} & $\mathbf{62.43}$ \small{\transparent{0.5} $\pm 0.71$} & $\mathbf{80.72}$ \small{\transparent{0.5} $\pm 1.90$} & $\mathbf{55.77}$ \small{\transparent{0.5} $\pm 1.93$} & $\mathbf{50.72}$ \small{\transparent{0.5} $\pm 2.97$} & $\mathbf{32.13}$ \small{\transparent{0.5} $\pm 0.18$} & $\mathbf{49.84}$ \small{\transparent{0.5} $\pm 6.85$} & $\mathbf{28.62}$ \small{\transparent{0.5} $\pm 0.78$} \\
            \quad $\tilde{\mu} = 0.1$  & $81.60$ \small{\transparent{0.5} $\pm 0.59$}          & $60.48$ \small{\transparent{0.5} $\pm 0.50$}          & $80.36$ \small{\transparent{0.5} $\pm 2.40$}          & $55.10$ \small{\transparent{0.5} $\pm 1.75$}          & $48.78$ \small{\transparent{0.5} $\pm 0.62$}          & $30.50$ \small{\transparent{0.5} $\pm 0.33$}          & $48.56$ \small{\transparent{0.5} $\pm 1.10$}          & $25.80$ \small{\transparent{0.5} $\pm 1.17$}          \\
            \quad $\tilde{\mu} = 0.4$  & $79.52$ \small{\transparent{0.5} $\pm 0.11$}          & $53.33$ \small{\transparent{0.5} $\pm 2.97$}          & $76.50$ \small{\transparent{0.5} $\pm 0.34$}          & $49.97$ \small{\transparent{0.5} $\pm 2.26$}          & $44.85$ \small{\transparent{0.5} $\pm 0.48$}          & $28.39$ \small{\transparent{0.5} $\pm 0.12$}          & $41.52$ \small{\transparent{0.5} $\pm 0.08$}          & $22.83$ \small{\transparent{0.5} $\pm 0.42$}          \\
            \midrule
            \algfednew (FedRC)                                                                                                                                                                                                                                                                                                                                                                                                                                                                         \\
            \quad $\tilde{\mu} = 0.0$  & $\mathbf{70.82}$ \small{\transparent{0.5} $\pm 0.25$} & $69.15$ \small{\transparent{0.5} $\pm 0.35$}          & $69.95$ \small{\transparent{0.5} $\pm 1.99$}          & $67.09$ \small{\transparent{0.5} $\pm 1.01$}          & $39.55$ \small{\transparent{0.5} $\pm 1.29$}          & $35.49$ \small{\transparent{0.5} $\pm 0.16$}          & $38.77$ \small{\transparent{0.5} $\pm 1.20$}          & $31.87$ \small{\transparent{0.5} $\pm 1.13$}          \\
            \quad $\tilde{\mu} = 0.1$  & $69.91$ \small{\transparent{0.5} $\pm 0.16$}          & $68.77$ \small{\transparent{0.5} $\pm 1.56$}          & $69.53$ \small{\transparent{0.5} $\pm 0.21$}          & $68.54$ \small{\transparent{0.5} $\pm 1.08$}          & $39.38$ \small{\transparent{0.5} $\pm 0.40$}          & $35.95$ \small{\transparent{0.5} $\pm 0.59$}          & $\mathbf{39.77}$ \small{\transparent{0.5} $\pm 2.33$} & $31.52$ \small{\transparent{0.5} $\pm 0.45$}          \\
            \quad $\tilde{\mu} = 0.4$  & $69.33$ \small{\transparent{0.5} $\pm 0.24$}          & $\mathbf{69.67}$ \small{\transparent{0.5} $\pm 1.27$} & $\mathbf{70.86}$ \small{\transparent{0.5} $\pm 0.31$} & $\mathbf{70.13}$ \small{\transparent{0.5} $\pm 0.42$} & $\mathbf{39.97}$ \small{\transparent{0.5} $\pm 0.21$} & $\mathbf{36.50}$ \small{\transparent{0.5} $\pm 0.28$} & $39.62$ \small{\transparent{0.5} $\pm 0.34$}          & $\mathbf{32.22}$ \small{\transparent{0.5} $\pm 0.20$} \\
            \bottomrule
        \end{tabular}
    }
    \label{tab:ablation-studies-on-tier-1}
\end{table*}

\begin{table*}[!t]
    \centering
    \caption{\textbf{Ablation studies on techniques in Sec~\ref{sec:cover-soft-cluster}.} We evaluated algorithm performance on CIFAR10 and CIFAR100 datasets, displaying their highest Validation and Test accuracies. We kept $\rho$ consistent at 0.3 for all algorithms. "w/ SCWU" denotes the use of soft clustering weight updating mechanisms designed in Section~\ref{sec:cover-soft-cluster}.}
    \resizebox{1.0\textwidth}{!}{
        \begin{tabular}{l c c c c c c c c}
            \toprule
            \multirow{2}{*}{Algorithm} & \multicolumn{2}{c}{CIFAR10, $\beta = 0.2$}            & \multicolumn{2}{c}{CIFAR10, $\beta = 0.4$}            & \multicolumn{2}{c}{CIFAR100, $\beta = 0.2$}           & \multicolumn{2}{c}{CIFAR100, $\beta = 0.4$}                                                                                                                                                                                                                                           \\
            \cmidrule(lr){2-3} \cmidrule(lr){4-5} \cmidrule(lr){6-7} \cmidrule(lr){8-9}
                                       & Val                                                   & Test                                                  & Val                                                   & Test                                                  & Val                                                   & Test                                                  & Val                                                   & Test                                                  \\
            \midrule
            \algfednew (FedEM)                                                                                                                                                                                                                                                                                                                                                                                                                                                                         \\
            \quad w/ SCWU              & $\mathbf{83.67}$ \small{\transparent{0.5} $\pm 0.72$} & $62.43$ \small{\transparent{0.5} $\pm 0.71$}          & $\mathbf{80.72}$ \small{\transparent{0.5} $\pm 1.90$} & $55.77$ \small{\transparent{0.5} $\pm 1.93$}          & $\mathbf{50.72}$ \small{\transparent{0.5} $\pm 2.97$} & $32.13$ \small{\transparent{0.5} $\pm 0.18$}          & $\mathbf{49.84}$ \small{\transparent{0.5} $\pm 6.85$} & $28.62$ \small{\transparent{0.5} $\pm 0.78$}          \\
            \quad w/o SCWU             & $82.11$ \small{\transparent{0.5} $\pm 2.39$}          & $63.84$ \small{\transparent{0.5} $\pm 0.19$}          & $80.08$ \small{\transparent{0.5} $\pm 0.99$}          & $58.83$ \small{\transparent{0.5} $\pm 2.12$}          & $49.77$ \small{\transparent{0.5} $\pm 1.93$}          & $32.90$ \small{\transparent{0.5} $\pm 1.11$}          & $47.91$ \small{\transparent{0.5} $\pm 2.67$}          & $27.40$ \small{\transparent{0.5} $\pm 1.17$}          \\
            \midrule
            \algfednew (FedRC)                                                                                                                                                                                                                                                                                                                                                                                                                                                                         \\
            \quad w/ SCWU              & $69.33$ \small{\transparent{0.5} $\pm 0.24$}          & $\mathbf{69.67}$ \small{\transparent{0.5} $\pm 1.27$} & $70.86$ \small{\transparent{0.5} $\pm 0.31$}          & $\mathbf{70.13}$ \small{\transparent{0.5} $\pm 0.42$} & $39.97$ \small{\transparent{0.5} $\pm 0.21$}          & $\mathbf{36.50}$ \small{\transparent{0.5} $\pm 0.28$} & $39.62$ \small{\transparent{0.5} $\pm 0.34$}          & $\mathbf{32.22}$ \small{\transparent{0.5} $\pm 0.20$} \\
            \quad w/o SCWU             & $69.88$ \small{\transparent{0.5} $\pm 0.30$}          & $68.83$ \small{\transparent{0.5} $\pm 0.71$}          & $70.77$ \small{\transparent{0.5} $\pm 0.47$}          & $68.87$ \small{\transparent{0.5} $\pm 0.23$}          & $40.96$ \small{\transparent{0.5} $\pm 1.24$}          & $35.72$ \small{\transparent{0.5} $\pm 1.01$}          & $39.18$ \small{\transparent{0.5} $\pm 0.13$}          & $32.08$ \small{\transparent{0.5} $\pm 0.78$}          \\
            \bottomrule
        \end{tabular}
    }
    \label{tab:ablation-studies-on-tier-2}
\end{table*}

\begin{table}[!t]
    \centering
    \caption{\textbf{Performance of algorithms with Resnet18.} We evaluated algorithm performance on CIFAR10 datasets with $\beta = 0.2$, displaying their highest Validation and Test accuracies. All algorithms utilize ResNet18 and run for 200 communication rounds.}
    \begin{tabular}{l c c c}
        \toprule
        Algorithm                                      & Val                                          & Test                                         \\
        \midrule
        CFL                                                                                                                                          \\
        \quad $\text{tol}_1 = 0.4, \text{tol}_2 = 0.6$ & $63.07$ \small{\transparent{0.5} $\pm 7.42$} & $53.65$ \small{\transparent{0.5} $\pm 2.33$} \\
        \quad $\text{tol}_1 = 0.4, \text{tol}_2 = 0.8$ & $61.14$ \small{\transparent{0.5} $\pm 1.87$} & $54.87$ \small{\transparent{0.5} $\pm 1.32$} \\
        ICFL                                                                                                                                         \\
        \quad $\alpha^{*}(0) = 0.85$                   & $80.46$ \small{\transparent{0.5} $\pm 0.99$} & $45.28$ \small{\transparent{0.5} $\pm 6.56$} \\
        \quad $\alpha^{*}(0) = 0.98$                   & $82.34$ \small{\transparent{0.5} $\pm 0.28$} & $44.08$ \small{\transparent{0.5} $\pm 0.40$} \\
        StoCFL                                                                                                                                       \\
        \quad $\tau = 0.1$                             & $57.41$ \small{\transparent{0.5} $\pm 6.69$} & $48.95$ \small{\transparent{0.5} $\pm 1.95$} \\
        \quad $\tau = 0.15$                            & $66.54$ \small{\transparent{0.5} $\pm 1.05$} & $47.77$ \small{\transparent{0.5} $\pm 0.14$} \\
        \midrule
        \algfednew (FeSEM)                                                                                                                           \\
        \quad $\rho = 0.05$                            & $86.90$ \small{\transparent{0.5} $\pm 0.20$} & $50.34$ \small{\transparent{0.5} $\pm 5.99$} \\
        \quad $\rho = 0.1$                             & $85.55$ \small{\transparent{0.5} $\pm 0.24$} & $49.38$ \small{\transparent{0.5} $\pm 6.15$} \\
        \algfednew (FedEM)                                                                                                                           \\
        \quad $\rho = 0.05$                            & $83.88$ \small{\transparent{0.5} $\pm 0.25$} & $58.92$ \small{\transparent{0.5} $\pm 1.11$} \\
        \quad $\rho = 0.1$                             & $83.83$ \small{\transparent{0.5} $\pm 0.01$} & $60.27$ \small{\transparent{0.5} $\pm 3.11$} \\
        \algfednew (FedRC)                                                                                                                           \\
        \quad $\rho = 0.05$                            & $67.72$ \small{\transparent{0.5} $\pm 1.30$} & $64.13$ \small{\transparent{0.5} $\pm 0.37$} \\
        \quad $\rho = 0.1$                             & $67.51$ \small{\transparent{0.5} $\pm 0.24$} & $63.15$ \small{\transparent{0.5} $\pm 0.78$} \\
        \bottomrule
    \end{tabular}
    \label{tab:resnet}
\end{table}

\begin{table}[!t]
    \centering
    \caption{\small \textbf{More details} about the ICFL results.}
        \begin{tabular}{l c c c c c}
            \toprule
            ICFL               & CIFAR100 (Val) & CIFAR100 (Test) \\
            \midrule
            Round on Best Val  & 52.73          & 6.40            \\
            Round on Best Test & 29.22          & 32.77           \\
            \bottomrule
        \end{tabular}
    \label{tab:detail-icfl}
\end{table}

\begin{table}[!t]
    \centering
    \vspace{-1em}
    \caption{\small \textbf{Ablation studied on only using Feature extractor-classifier split mechnism.} HCFL$'$ uses the feature-extractor classifier split mechanism and the optimization steps outlined in Eq~\eqref{equ:uptate-gamma}-~\eqref{equ:update-mphi}.}
    \resizebox{.8\textwidth}{!}{
        \begin{tabular}{l c c c c c}
            \toprule
            Algorithms           & CIFAR10 (Val) & CIFAR10 (Test) & CIFAR100 (Val) & CIFAR100 (Test) \\
            \midrule
            FedEM                & 66.49         & 53.64          & 29.75          & 24.18           \\
            FedRC                & 63.65         & 59.41          & 34.56          & 37.62           \\
            FedSoft              & 83.62         & 34.70          & 73.58          & 4.60            \\
            HCFL${'}$ (FedEM)    & 61.24         & 59.83          & 42.44          & 28.13           \\
            HCFL${'}$ (FedRC)    & 63.58         & 63.57          & 37.46          & 37.27           \\
            HCFL${'}$ (FedSoft)  & 85.76         & 52.20          & 76.70          & 13.47           \\
            HCFL$^{+}$ (FedEM)   & 83.67         & 62.43          & 50.72          & 32.13           \\
            HCFL$^{+}$ (FedRC)   & 69.33         & 69.67          & 39.97          & 36.50           \\
            HCFL$^{+}$ (FedSoft) & 85.92         & 51.23          & 76.28          & 12.50           \\
            \bottomrule
        \end{tabular}
    }
    \label{tab:ablation-feature-extractor-split-only}
\end{table}

\textbf{Ablation studies on only using feature extractor-classifier split mechanism.} In Table~\ref{tab:ablation-feature-extractor-split-only}, we present a new ablation study HCFL${'}$, only using the feature-extractor classifier split method on top of the naive HCFL framework (Using backbone algorithms for Tiers 1 \& 2, and CFL for Tiers 3 \& 4). The results indicate HCFL${'}$ outperforming the original algorithms, notably on CIFAR100~\footnote{Evaluating the HCFL without the split is challenging due to the 200+ million trainable parameters, but this can be reduced to 10 million with the split.}.

\textbf{Number of clusters in \algfednew over communication rounds.} Figure~\ref{fig:cluster-number} shows how the number of clusters evolves across communication rounds for different $\rho$ values using the CIFAR-10 dataset in our experiments. It is evident that \algfednew allows the algorithm to automatically determine the optimal number of clusters during training.

\begin{figure}
    \centering
    \begin{subfigure}[b]{.45\textwidth}
        \includegraphics[width=1.\textwidth]{./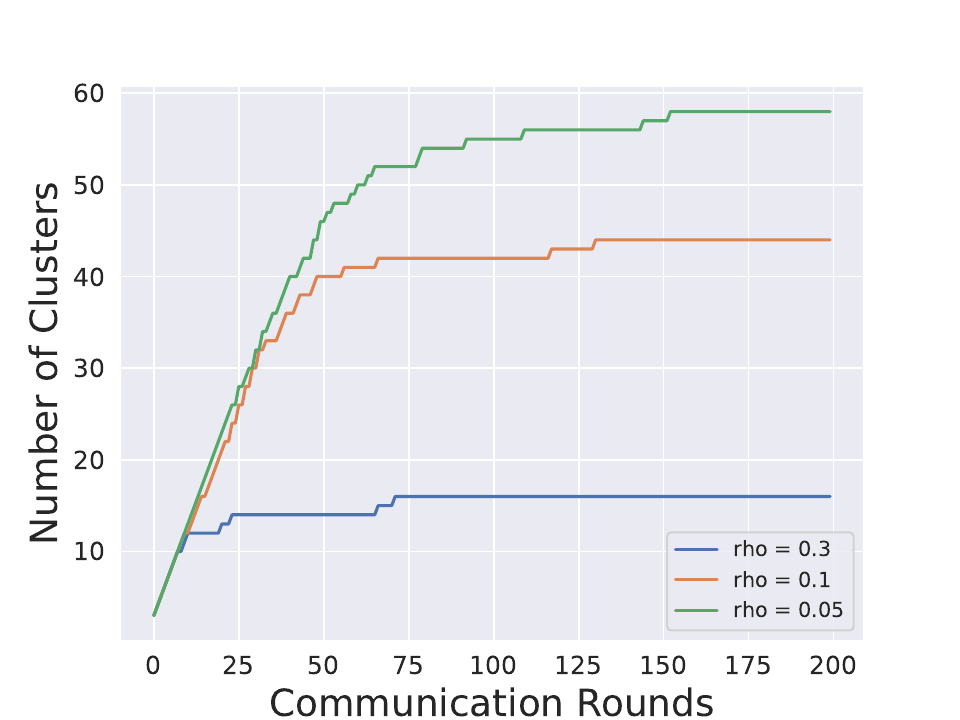}
        \caption{\algfednew (FeSEM)}
    \end{subfigure}
    \begin{subfigure}[b]{.45\textwidth}
        \includegraphics[width=1.\textwidth]{./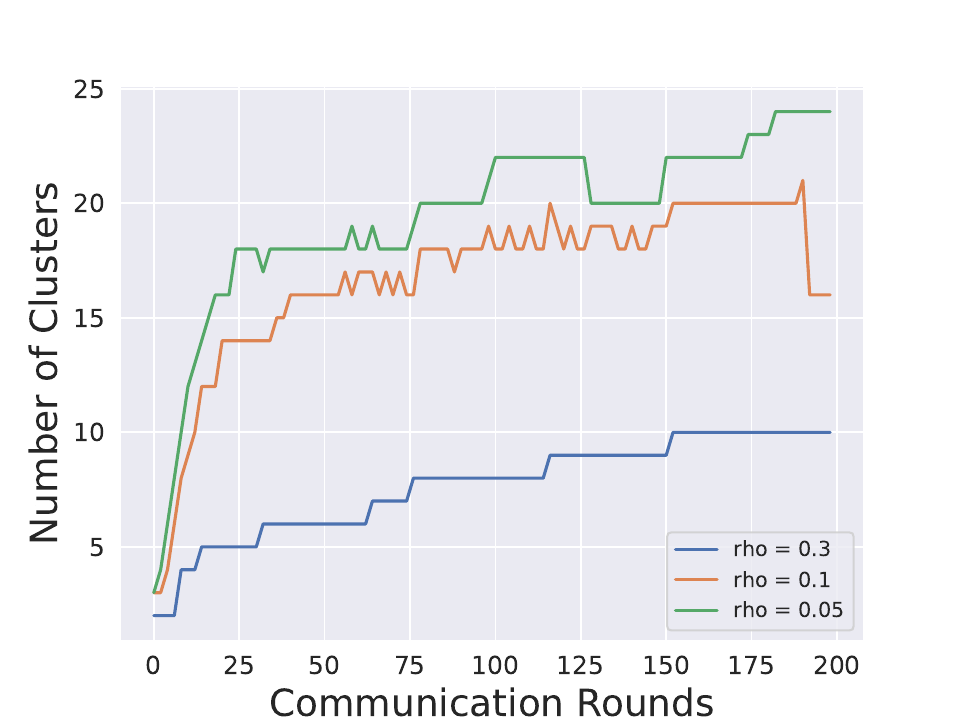}
        \caption{\algfednew (FedEM)}
    \end{subfigure}
    \caption{\small
        \textbf{Number of clusters in \algfednew over communication rounds.} We illustrate changes in cluster numbers across communication rounds for various $\rho$ values using the CIFAR-10 dataset in our experiments.\looseness=-1
    }
    \vspace{-1.5em}
    \label{fig:cluster-number}
    \end{figure}

\begin{table}
    \centering
    \caption{\small \textbf{Efficiency comparison.} The CIFAR-10 dataset is divided among 100 clients, with each client holding data from 2 classes. The hyperparameters are chosen based on the best-performing configuration from Table~\ref{tab:performance-beta-02}.}
    \label{tab:efficiency-comparision}
    \resizebox{.8\textwidth}{!}{
        \begin{tabular}{l c c c c c}
            \toprule
            Algorithms & Memory (M) & Simulation Time (s/it) & Final Cluster Number & Val & Test \\
            \midrule
            ICFL ($\alpha(0) = 0.85$) &	4190 & 124.71 & 100 & 83.96 & 46.00 \\
            StoCFL ($\tau = 0.15$) & 2834 & 178.53 & 41 & 86.92 & 20.00 \\
            \algfednew (FedEM, $\rho = 0.1$) & 2564 & 176.48 & 5 & 95.14 & 49.00 \\
            \bottomrule
        \end{tabular}
    }
    
\end{table}

\textbf{Efficiency comparison.} In Table~\ref{tab:efficiency-comparision}, we compare the efficiency of \algfednew with baseline methods. Notably, \algfednew (FedEM) demonstrates superior performance while using a significantly reduced number of clusters, resulting in lower simulation time and memory usage.

\begin{table*}[!t]
    \centering
    \caption{\textbf{Performance Under Various Heterogeneous Settings.} We evaluated the algorithm’s performance on the CIFAR-10 dataset, presenting the top Validation and Test accuracies for each setting. Results are reported with $\beta = 0.8$. Additionally, $C = 2$ and $C = 4$ indicate that each client contains data from 2 and 4 classes, respectively.}
        \begin{tabular}{l c c c c c c c c}
            \toprule
            \multirow{2}{*}{Algorithm} & \multicolumn{2}{c}{$\beta = 0.9$}            & \multicolumn{2}{c}{$C = 2$}            & \multicolumn{2}{c}{$C = 4$}                                                                                                                                                                                                                                \\
            \cmidrule(lr){2-3} \cmidrule(lr){4-5} \cmidrule(lr){6-7}
                                       & Val                                                   & Test                                                  & Val                                                   & Test                                                  & Val                                                   & Test                                                  \\
            \midrule
            ICFL &	72.94 &	24.98 &	83.96 &	46.00 &	72.34 &	72.30 \\
            stoCFL & 60.0 & 11.97 & 86.92 & 20.00 & 70.38 & 32.60 \\
            \algfednew(FedEM) & 73.68 & 28.63 & 95.14 & 49.00 & 89.80 & 62.30 \\
            \bottomrule
        \end{tabular}
    \label{tab:performance-various-heterogeneity}
\end{table*}

\paragraph{Performance of algorithms under various heterogeneous settings.} In Table~\ref{tab:performance-various-heterogeneity}, we show that \algfednew consistently outperforms the baseline algorithms across all settings, demonstrating its reliable performance improvement.

\begin{table}
    \centering
    \caption{\small \textbf{Clustering quality illustration.} We present the clustering results of \algfednew in the $C=2$ setting, where each client is assigned data from two classes. Specifically, \algfednew generates 5 clusters in this setting, and we report the number of samples from each class assigned to clusters 1 through 5.}
    \label{tab:cluster-quality}
        \begin{tabular}{l c c c c c}
            \toprule
            HCFL+(FedEM, $\rho=0.1$) & Cluster 1 & Cluster 2 & Cluster 3 & Cluster 4 & Cluster 5 \\
            \midrule
            Class 1 &	4483 &	0 &	0 &	0 &	0 \\
            Class 2 &	0 &	4502 &	0 &	0 &	0 \\
            Class 3 &	0 &	0 &	4464 &	0 &	0 \\
            Class 4 &	0 &	0 &	4536 &	0 &	0 \\
            Class 5 &	0 &	4498 &	0 &	0 &	0 \\
            Class 6 &	0 &	0 &	0 &	0 &	4483 \\
            Class 7 &	0 &	0 &	0 &	4491 &	0 \\
            Class 8 &	4517 &	0 &	0 &	0 &	0 \\
            Class 9 &	0 &	0 &	0 &	4509 &	0 \\
            Class 10 &	0 &	0 &	0 &	0 &	4517 \\
            \bottomrule
        \end{tabular}
    
\end{table}

\paragraph{Clustering quality illustration.} In Table~\ref{tab:cluster-quality}, we present the clustering results to evaluate the clustering quality. It is evident that \algfednew successfully identifies a relatively optimal clustering by: (1) assigning all samples with the same label to the same cluster; (2) avoiding the creation of an excessive number of clusters, as observed in stoCFL and ICFL; and (3) achieving higher validation and test accuracy compared to the baseline methods.

\end{document}